\crefname{section}{Section}{Sections}
\crefname{theorem}{Theorem}{Theorems}
\crefname{lemma}{Lemma}{Lemmas}
\crefname{equation}{Equation}{Equations}
\crefname{proposition}{Proposition}{Propositions}
\crefname{claim}{Claim}{Claims}
\crefname{appendix}{Appendix}{Appendices}
\crefname{algorithm}{Algorithm}{Algorithms}
\crefname{figure}{Figure}{Figures}
\crefname{table}{Table}{Tables}
\crefname{remark}{Remark}{Remarks}
\crefname{definition}{Def.}{Definitions}
\crefname{corollary}{Corollary}{Corollaries}
\definecolor{cite_color}{HTML}{114083}
\definecolor{link_color}{RGB}{0,102,102}
\definecolor{link_color}{RGB}{153, 0,0}  
\definecolor{url_color}{RGB}{153, 102,  0}
\definecolor{emp_color}{RGB}{0,0,255}
\DeclarePairedDelimiterX{\infdivx}[2]{(}{)}{%
  #1\;\delimsize\|\;#2%
  }
\newcommand*\dif{\mathop{}\!\mathrm{d}}
\def\E{{\mathbb E}}
\providecommand{\customgenericname}{}
\newcommand{\newcustomtheorem}[2]{%
  \newenvironment{#1}[1]
  {%
   \renewcommand\customgenericname{#2}%
   \renewcommand\theinnercustomgeneric{##1}%
   \innercustomgeneric
  }
  {\endinnercustomgeneric}
}
\DeclareMathOperator*{\argmin}{argmin}
\newtheorem{lemma}{Lemma}
\newcommand{\ours}{$\text{NFS}^2$\xspace}
\title{Neural Flow Samplers with Shortcut Models}
\author{%
  Wuhao Chen\thanks{Equal contribution.}, \ Zijing Ou\footnotemark[1], \ Yingzhen Li \\
  Imperial College London \\
  \texttt{\{wuhao.chen21, z.ou22, yingzhen.li\}@imperial.ac.uk} \\
}
\begin{document}

\maketitle

\begin{abstract}
Sampling from unnormalized densities presents a fundamental challenge with wide-ranging applications, from posterior inference to molecular dynamics simulations. Continuous flow-based neural samplers offer a promising approach, learning a velocity field that satisfies key principles of marginal density evolution (e.g., the continuity equation) to generate samples. However, this learning procedure requires accurate estimation of intractable terms linked to the computationally challenging partition function, for which existing estimators often suffer from high variance or low accuracy. To overcome this, we introduce an improved estimator for these challenging quantities, employing a velocity-driven Sequential Monte Carlo method enhanced with control variates. Furthermore, we introduce a shortcut consistency model to boost the runtime efficiency of the flow-based neural sampler by minimizing its required sampling steps. Our proposed Neural Flow Shortcut Sampler empirically outperforms existing flow-based neural samplers on both synthetic datasets and complex n-body system targets.
\end{abstract}

\section{Introduction}
We consider the task of sampling from unnormalised densities $\pi(x) = \frac{\rho(x)}{Z} $ for a $d$-dimensional variable $x \in \mathbb{R}^d$, where $Z := \int \rho(x) \dif x$ denotes the unknown partition function. This task is fundamental in probabilistic modeling and scientific simulations, with broad applications in Bayesian inference \citep{neal1993probabilistic}, nuclear physics \citep{albergo2019flow}, drug discovery \citep{xie2021mars}, and material design \citep{komanduri2000md}. However, achieving efficient sampling remains challenging, especially when dealing with high-dimensional and multi-modal distributions.
Established methods, primarily Markov Chain Monte Carlo (MCMC), often require long convergence times and extended simulations to obtain uncorrelated samples \citep{brooks2011mcmchandbook}. 

Neural samplers have emerged as an alternative class of methods, which leverage recent developments in generative models, such as normalising flows \citep{midgley2022flow} and latent variable models \citep{he2024training}.
Very recent work in this line also took inspirations from diffusion models and flow matching \citep{ho2020denoising, lipman2022flow}. For instance, \citet{AkhoundSadegh2024IteratedDE} introduced a diffusion-based sampler that trains a score network via Monte Carlo estimation. Meanwhile, continuous-time flow-based approaches \citep{mate2023learning,tian2024liouville,albergo2025netsnonequilibriumtransportsampler} aim to learn a continuous transformation, or flow, from a simple distribution to the target density, by designing residual loss functions derived from continuous dynamics (e.g., the continuity equation \citep{villani2009optimal}). Related works also employ an optimal control perspective to design the learning objective \citep{zhang2022pathintegralsamplerstochastic}. These advanced approaches show promising empirical performance, often comparable to results obtained via MCMC.

Still current diffusion and flow-based neural samplers face challenges. Specifically, continuous flow-based approaches often require estimating or learning the evolving partition functions (or their derivatives) along the transformation path, which suffers from high variance via importance sampling \citep{tian2024liouville} or low accuracy when estimated via gradient-based optimisation \citep{máté2023learninginterpolationsboltzmanndensities,albergo2025netsnonequilibriumtransportsampler}. 
Meanwhile, diffusion and flow-based neural samplers rely on simulating the learned differential equations with accurate numerical solvers, posing a challenge in dynamically adjusting their runtime budget without significant impact on sample quality.

In this work, we propose Neural Flow Shortcut Sampler (NFS$^2$), which is flexible, simple to train, and able to dynamically adjust its sampling budget with marginal impact on sample quality. Our approach trains the flow-based sampler by minimising a residual loss derived from the continuity equation, and we tackle the aforementioned challenges using novel strategies summarised below:

\begin{itemize}
    \item We propose a low-variance bootstrap estimator for the partition function time derivative using Sequential Monte Carlo \citep{Gordon1993, Liu1998}. Our approach features a velocity-driven transition kernel as proposal, and a Stein identity-based \citep{Stein1981} control variate for variance reduction.
    \item We extend the shortcut consistency method \citep{frans2024one} proposed for generative models to flow-based samplers. Our approach returns a dynamically adjustable sampler that can simulate samples within arbitrary computational budget without significant compromise on sample quality.
\end{itemize}
Across challenging experiments-sampling from synthetically constructed multi-modal targets such as a 40-mode mixture of Gaussians and the 32-dimensional Many Well \citep{midgley2022flow}, alongside the classic Lennard-Jones interatomic potential in a 13-particle system \citep{Jones1924} and Double Well potential in a 4-particle system-NFS$^2$ consistently demonstrates strong performance competitive with leading contemporary neural samplers. These results underscore the efficacy of our method.

\section{Background}

\textbf{Continuous Normalising Flows.}
Continuous Normalising Flows (CNFs) \citep{DBLP:journals/corr/abs-1810-01367} are generative models that learn a continuous, invertible transformation from a simple base distribution $p_0$ (e.g., a Gaussian) to a complex target distribution $p_1$. This transformation is defined by the solution to an ordinary differential equation (ODE) \eqref{eq:cnf_ode} that describes the trajectory $x_t$ of a sample starting from $x_0 \sim p_0$. The density $p_t(x_t)$ of the transported samples $x_t$ can be calculated by integrating the instantaneous change of variables \citep{DBLP:journals/corr/abs-1810-01367} from $t=0$ to $t=1$ \eqref{eq:cnf_log_likelihood_integral}:
\begin{align}
\frac{dx_t}{dt} &= v_t(x_t), \quad \text{for } t \in [0,1], \label{eq:cnf_ode} \\
\partial_t [\log p_t(x_t)] &= - \nabla_{x_t} \cdot v_t(x_t) \implies
\log p_1(x_1) = \log p_0(x_0) - \int_0^1 \nabla_{x_s} \cdot v_s(x_s) ds, \label{eq:cnf_log_likelihood_integral}
\end{align}
where $v_t: \mathbb{R}^d \rightarrow \mathbb{R}^d$ is a time-dependent velocity field, typically parameterised by a neural network.

\textbf{The Continuity Equation.}
The underlying equation governing the evolution of the probability density $p_t(x)$ at a fixed point $x$ under the influence of the velocity field $v_t(x)$ is the continuity equation~\eqref{eq:continuity} \citep{villani2009optimal}. It states that if the velocity $v_t$ generates the probability path $\{p_t(x)\}_{t\in[0,1]}$ from an initial distribution $p_0$ to a target $p_1$. This equation ensures the conservation of probability mass. Notably, a connection between the continuity equation and CNFs is revealed by considering the total derivative of $\log p_t(x_t)$ w.r.t. $t$:
\begin{align}
\forall x \in \mathbb{R}^d,  \quad \partial_t p_t(x) &= -\nabla_x \cdot [p_t(x) v_t(x)]\label{eq:continuity} \\
\partial_t \log p_t (x_t) + \nabla_{x_t} \log p_t (x_t) \cdot v_t (x_t) &= - \nabla_{x_t} \cdot v_t (x_t) \label{eq:total-derivative-colored}
\end{align}

\section{Neural Flow Shortcut Sampler}
To sample from the unnormalised target density $\pi \propto \rho$, we aim to learn a time-dependent velocity field $v_t(x;\theta)$, parametrised by $\theta$, that transforms samples from a simple base distribution $p_0$ (e.g., a Gaussian) into samples from the target $p_1 := \pi$. 
Our approach employs Eq.~\eqref{eq:total-derivative-colored} to construct a loss for learning $v_t(x;\theta)$, followed by the further improvement of using shortcut models.

\subsection{Learning Flow Samplers via a PINN loss}

We start from constructing a probability path $\{p_t(x) \}_{t\in[0, 1]}$ using an annealing interpolation \citep{brooks2011mcmchandbook}: 
\begin{align} \label{eq:path}
    p_t(x) = \frac{\tilde{p}_t(x)}{Z_t}, \quad \tilde{p}_t(x) \!\triangleq\! p_1^t(x) p_0^{1-t}(x), \quad Z_t = \int \tilde{p}_t(x) dx.
\end{align}
We wish to learn a velocity field $v_t(x;\theta)$ which, together with $\{p_t(x) \}_{t\in[0,1]}$ defined by Eq.~\eqref{eq:path}, satisfies the continuity equation Eq.~\eqref{eq:continuity}. 
This inspires a PINN objective \citep{máté2023learninginterpolationsboltzmanndensities, albergo2025netsnonequilibriumtransportsampler} designed as the residual error when plugging-in $v_t(x; \theta)$ and $p_t(x)$ into Eq.~\eqref{eq:total-derivative-colored}:
\begin{equation}
\begin{aligned} 
    \mathcal{L}(\theta) \!=\! \E_{q_t(x),w(t)}  [\delta_t^2(x;v_t(\cdot;\theta))], \quad \delta_t(x;v_t) \!\triangleq\! \partial_t \log p_t (x) \!+\! v_t (x) \cdot \nabla_x \log p_t (x) \!+\! \nabla_x \cdot v_t (x)
\end{aligned}
\label{eq:nfs_loss}
\end{equation}
Here $w(t)$ denotes the time-weighting distribution for $t\in[0,1]$, and $q_t(x)$ represents the source of samples for evaluating the loss at time $t$. In principle, $q_t(x)$ is only required to share the same support as $p_t(x)$ (and thus the target $\pi$) so that Eq.~\eqref{eq:total-derivative-colored} holds for all $x \in \text{supp}(p_1)$ when $\mathcal{L}(\theta) = 0$. In practice, using simple $q_t(x)$ distributions such as a uniform distribution over $\mathbb{R}^d$ is sub-optimal, necessitating a sophisticated strategy for generating samples $x$ that are used to evaluate the loss.
\subsection{Improved Estimates of $\partial_t \log Z_t$}

Computing the time derivative term $\partial_t \log p_t (x)$ in the loss objective Eq.~\eqref{eq:nfs_loss} introduces additional complexity, due to the time derivative of the log partition function: 
\begin{equation}
    \partial_t \log p_t (x) = \partial_t \log \tilde{p}_t (x) - \partial_t \log Z_t, \quad \partial_t \log Z_t = \partial_t \log \int \tilde{p}_t (x) \dif x = \mathbb{E}_{p_t(x)}[\partial_t \log \tilde{p}_t (x)].
\end{equation}

\citet{albergo2025netsnonequilibriumtransportsampler, máté2023learninginterpolationsboltzmanndensities} propose to learn $\partial_t \log Z_t$ jointly together with the velocity field $v_t(x;\theta)$ via gradient descent, which is sub-optimal in more complex scenarios, as demonstrated in our experimental results.
On the other hand, \citet{tian2024liouville} estimate this time derivative using importance sampling (see \cref{sec:appendix_is} for details):
\begin{equation}
    \partial_t \log Z_t \approx \sum_{k=1}^K \bar{w}_t^{(k)} \partial_t \log \tilde{p}_t (x_t^{(k)}), \quad x_t^{(k)} \sim q_t(x), \ w_t^{(k)} = \frac{p_t(x_t^{(k)})}{q_t(x_t^{(k)})}, \ \bar{w}_t^{(k)} = \frac{w_t^{(k)}}{\sum_{j=1}^K w_t^{(j)}}.
\end{equation}
However, importance sampling can suffer from high variance if the proposal distribution differs significantly from the target $p_t(x)$. To address this issue, we propose a strategy employing Sequential Monte Carlo (SMC) \citep{Gordon1993, Liu1998} estimation in conjunction with a velocity-driven Hamiltonian Monte Carlo (HMC) kernel \citep{Duane1987}.

\begin{algorithm}[!t]
\caption{Velocity-driven transition kernel from time-steps $t_{m-1}$ to $t_m$}
\label{alg:vd-smc-step}
\begin{algorithmic}[1]
\Require $\{x_{t_{m-1}}^{(k)}\}_{k=1}^K$, $\{w_{t_{m-1}}^{(k)}\}_{k=1}^K$, $v_{t_{m-1}}(\cdot; \theta)$, $\tilde{p}_{t_m}, \tilde{p}_{t_{m-1}}$, $\gamma$
\State Velocity move proposal: $\tilde{x}_{t_m}^{(k)} \leftarrow \hat{x}_{t_{m-1}}^{(k)} + \gamma v_{t_{m-1}}(\hat{x}_{t_{m-1}}^{(k)}; \theta) \Delta t$
\State MCMC refinement: $x_{t_m}^{(k)} \sim \text{MCMC}(\cdot | \tilde{x}_{t_m}^{(k)})$ targeting $p_{t_m}$ \Comment{e.g., HMC}
\State Update importance weights: $w_{t_{m}}^{(k)} \leftarrow w_{t_{m-1}}^{(k)} \frac{\tilde{p}_{t_m}(x_{t_m}^{(k)})}{\tilde{p}_{t_{m-1}}(x_{t_m}^{(k)})}$
\State \Return $\{x_{t_{m}}^{(k)}\}_{k=1}^K$, $\{w_{t_{m}}^{(k)}\}_{k=1}^K$
\end{algorithmic}
\end{algorithm}

\paragraph{Velocity-driven SMC.}
Concretely, consider discrete-time steps $0=t_0 < \dots < t_M = 1$, the key ingredients of SMC are proposals $\{\mathcal{F}_{t_m} (x_{t_{m+1}} | x_{t_m})\}_{m=0}^{M-1}$ and weighting functions $\{w_{t_m}\}_{m=0}^M$. 
The choice of proposals is critical: a poor proposal can lead to particle degeneracy; the proposal mechanism must be efficient, as the generated particles are used to define $q_t(x)$ for evaluating the residual loss. 
In light of these considerations, we propose incorporating both the velocity model $v_t(x;\theta)$ and additional HMC refinement steps as the transition kernel in the SMC framework \citep{van2000unscented}. 
In a nutshell, the velocity-driven SMC algorithm runs by first drawing $K$ particles of $x_{t=0}^{(k)} \sim p_{t=0}$ and setting $w_{t=0}^{(k)} = \frac{1}{K}$, then sequentially repeating sampling steps in Alg.~\ref{alg:vd-smc-step} for $t = t_{1:M}$ until reaching $t_M = 1$. Within a transition, the $\text{ESS} = 1 / \sum_{k=1}^K (\tilde{w}_{t_m}^{(k)})^2$ is tracked and if it drops below a predefined threshold, a systematic resampling step is performed to draw $K$ new particles from the current set of weighted particles, and their weights are reset to $1/K$.
The velocity-driven MCMC kernel operates by using the velocity model $v_t(x;\theta)$ to provide an informed initialization for the HMC steps (See Alg.~\ref{alg:vd-smc-step}). As training progresses, the velocity model becomes increasingly better in generating the distribution path $\{p_t(x) \}_{t\in[0,1]}$, which reduces the needed correction steps by HMC. The proposal also benefits from MCMC's stochastic exploration, which is crucial for the resampling step, to get a diverse set of samples from the SMC procedure.

\begin{wrapfigure}{r}{0.42\linewidth}
    \centering
    \vspace{-4mm}
    \includegraphics[width=\linewidth]{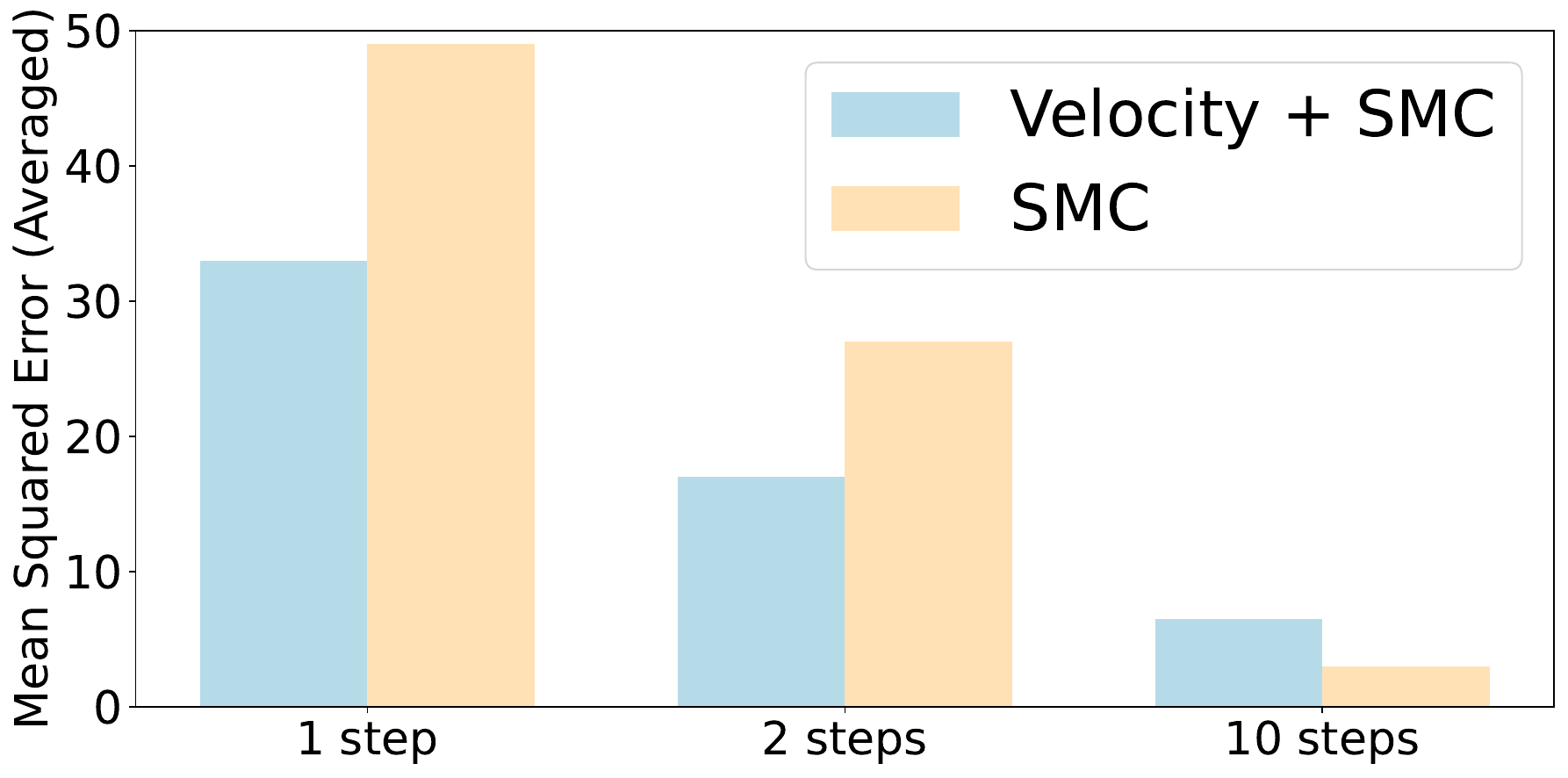}
    \vspace{-5mm}
    \caption{MSE of the estimated $\partial_t \log Z_t$.}
    \label{fig:mse_dt_logZt}
    \vspace{-5mm}
\end{wrapfigure}

To demonstrate the efficacy of the proposed velocity-driven SMC sampler, we measure the Mean Squared Error (MSE) of the $\partial_t \log Z_t$ estimation against the reference (obtained from very long MCMC chains) for varying numbers of MCMC steps within each SMC transition (see Figure~\ref{fig:mse_dt_logZt}) in Lennard-Jones system. For a reasonably well-trained velocity field, this strategy of improved initialisation demonstrably reduces estimation error. This synergy between the velocity-informed proposal and MCMC refinement can accelerate convergence and help preserve particle diversity, enhancing robustness in complex settings.

\paragraph{Variance Reduction via Stein Control Variates.}
To further reduce the variance of our estimation, a key observation is that for any given velocity $v_t$, the following identity holds
\begin{align} \label{eq:partial_logZ_fpi}
    \!\!\!\!\!\partial_t \log Z_t \!\!=\!\! \argmin_{c_t} \!\E_{p_t} (\xi_t(x;v_t) \!-\! c_t)^2, \xi_t(x;v_t) \!\triangleq\! \partial_t \log \tilde{p}_t (x) \!+\! \nabla_x \!\cdot\! v_t (x) \!+\! v_t (x) \!\cdot\! \nabla_x \log p_t (x).
\end{align}
See \cref{sec:appendix-proof_partial_logZ} for proof. Thus, one can calculate the optimal $c_t$ via $\partial_t \log Z_t = \E_{p_t} [\xi_t (x;v_t)]$, which can be approximated via Monte Carlo estimation via e.g., the proposed velocity-driven SMC procedure. In other words, the time derivative can be estimated as $\partial_t \log Z_t \approx \sum_{k=1}^K \bar{w}_t^{(k)} \xi_t (x_t^{(k)}; v_t)$, using the normalized weights $\bar{w}_t^{(k)}$ and particles $x_t^{(k)}$ from the SMC procedure with proposal and transition steps defined as in Alg.~\ref{alg:vd-smc-step}.

\begin{wrapfigure}{r}{0.42\linewidth}
    \centering
    \vspace{-5mm}
    \includegraphics[width=\linewidth]{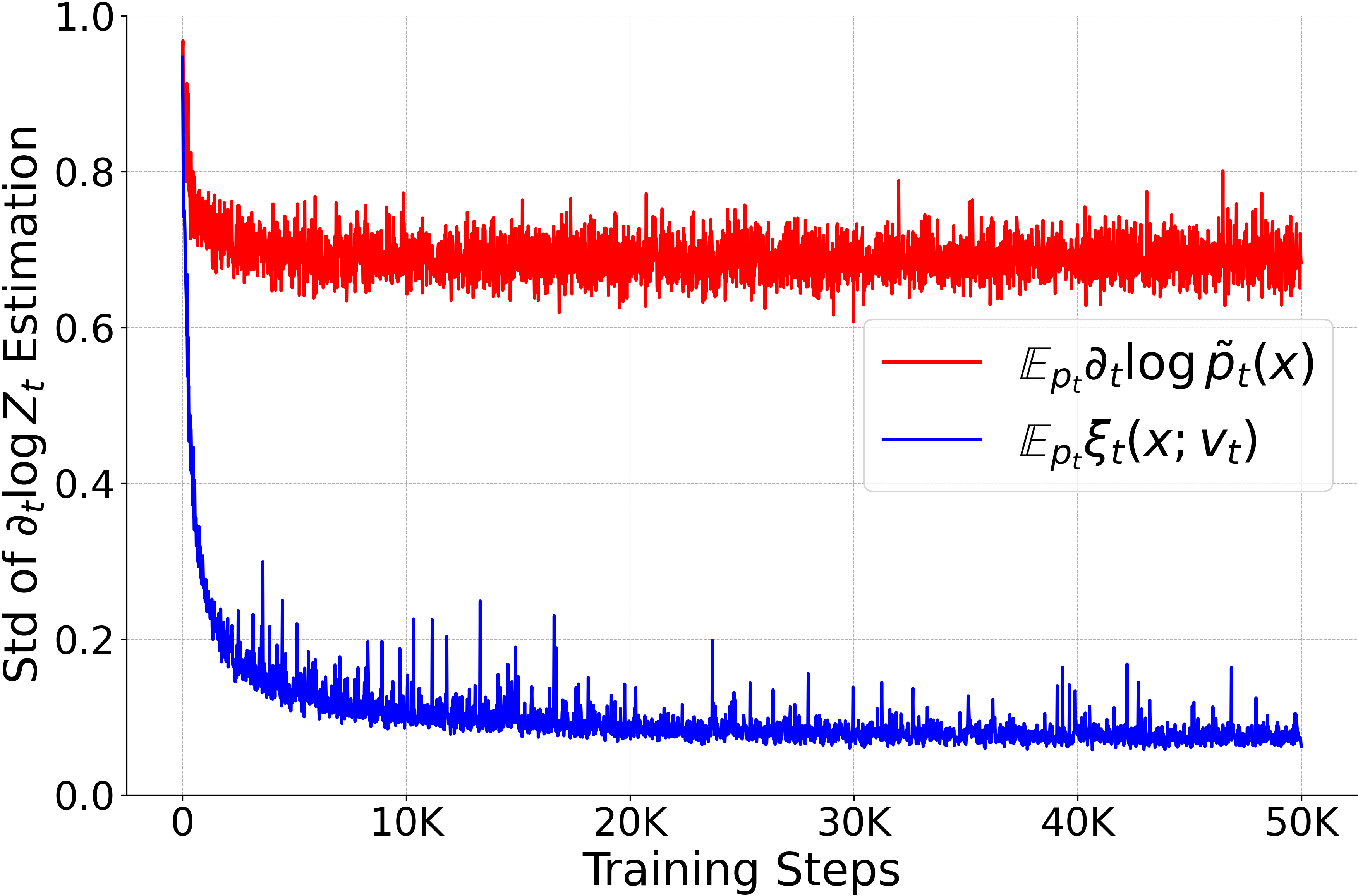}
    \vspace{-5.5mm}
    \caption{Variance reduction as shown by the standard deviations of the estimators.}
    \label{fig:std_dt_logZt}
    \vspace{-5.5mm}
\end{wrapfigure}

Empirically, we observe that Eq.~\eqref{eq:partial_logZ_fpi} achieves lower variance compared to a simple Monte Carlo estimator $\partial_t \log Z_t \E_{p_t} \partial_t \log \tilde{p}_t (x) \approx \sum_{k=1}^K \bar{w}_t^{(k)} \log \tilde{p}_t (x_t^{(k)})$, and it sometimes leads to better optimisation of the training objective Eq.~\eqref{eq:nfs_loss}. We visualize the comparison of the standard deviation of the two estimation methods in \cref{fig:std_dt_logZt}, with the corresponding loss plots deferred to \cref{fig:loss}. 
This variance reduction effect can be explained from a control variate perspective (detailed in \cref{sec:appendix-control-variates}). Specifically, the additional term $\nabla_x \cdot v_t(x) + v_t(x) \cdot \nabla_x \log p_t(x)$ is the Stein operator over a velocity field $v_t(x)$ under distribution $p_t(x)$, and with mild assumptions, the following Stein's identity holds \citep{Stein1981}:
\begin{equation}
    \mathbb{E}_{p_t(x)}[\nabla_x \cdot v_t(x) + v_t(x) \cdot \nabla_x \log p_t(x)] = 0 \quad \Rightarrow \quad \mathbb{E}_{p_t(x)}[\xi_t(x; v_t)] = \partial_t \log Z_t.
\end{equation}
Therefore, this Stein operator term acts as a zero-mean control variate \citep{liu2017action} for estimating $ \partial_t \log Z_t$. By estimating $\mathbb{E}_{p_t}[\xi_t(x; v_t)]$, we are implicitly using a control variate derived from the velocity field $v_t$ and the score function $\nabla_x \log p_t(x)$.

\subsection{Shortcut model for further accelerations}

Generating samples from a learned velocity model $v_t(x;
\theta)$ typically requires simulating the underlying ODE $dx/dt = v_t(x;\theta)$ with numerical solvers (e.g., the Euler-Maruyama scheme with small time steps $\Delta t$) . This can lead to a high number of neural network function evaluations and thus a significant computational cost during sampling. To address this, we draw inspiration from recent shortcut models \citep{frans2024one} and introduce an additional consistency regularisation term.

The core idea is to parametrise a shortcut model $s_t (x_t, d;\theta)$ that directly predicts the \textit{average} velocity required to traverse a finite time interval of duration $d$. The position at $t+d$ is then approximated by $x_{t+d} \approx x_t + s_t (x_t, d; \theta)d$. The instantaneous velocity $v_t(x;\theta)$ from previous sections is now considered as the $d \to 0$ limit of this shortcut model, i.e., $v_t(x;\theta) = s_t(x, 0; \theta)$.

To ensure that these shortcut predictions are consistent across different interval lengths, we enforce a regularisation based on trajectory consistency. Specifically, taking a single large step over an interval $d$ should yield a similar average velocity to taking two consecutive smaller steps that span the same total interval.
Moreover, we propose to generalise the consistency condition in \citet{frans2024one} by choosing a random fraction $\alpha \in [0, 1]$ which splits the total interval $d$ into two parts: a first step of duration $\alpha d$ and a second step of duration $(1-\alpha)d$. The state after the first sub-step is estimated as $x_{t+\alpha d} = x_t + s_t(x_t, \alpha d; \theta)\alpha d$. Then the target average velocity over the full interval $d$ is constructed from the two sub-steps:
\begin{equation*}
    s_{\text{target}}(x_t, t, d, \alpha; \theta) = \alpha s_t(x_t, \alpha d; \theta) + (1-\alpha) s_{t+\alpha d}(x_{t+\alpha d}, (1-\alpha)d; \theta).
\end{equation*}
The first term represents the velocity prediction for the first part of the interval, weighted by its fractional duration $\alpha$, and the second term represents the velocity prediction for the second part (starting from the estimated state $x_{t+\alpha d}$ at time $t+\alpha d$), weighted by its fractional duration $(1-\alpha)$.
The final objective combines the original residual loss Eq.~\eqref{eq:nfs_loss} (using the $d=0$ limit of the shortcut model, $s_t(\cdot, 0; \theta)$) with a consistency loss, averaged over sampled intervals $d$ and random splits $\alpha$:
\begin{align} \label{eq:loss-nfs2}
    \mathcal{L}(\theta) = \E_{q_t(x), w(t)} \left[ \delta_t^2(x; s_t(\cdot, 0;\theta)) + \lambda \E_{p(d), p(\alpha)} \lVert s_t(x, d;\theta) - s_{\text{target}}(x, t, d, \alpha; \theta_{\text{sg}}) \rVert_2^2 \right].
\end{align}
Here $\lambda$ is a weighting coefficient, $p(d)$ is a distribution over interval lengths (e.g., uniform over $\mathcal{U}(0, 1)$), $p(\alpha)$ is typically uniform $\mathcal{U}(0, 1)$, and $\theta_{\text{sg}}$ indicates parameters with stopped gradients within $s_{\text{target}}$.
We refer to the proposed methods as \emph{neural flow shortcut samplers} (\ours), using $128$ sampling steps by default unless specified otherwise. The training and sampling procedures are summarised in \cref{alg:nfs_training_updated,alg:nfs_sampling} in Appendix \ref{sec:appendix-train}, respectively.

\subsection{Additional design choices: reference distribution $q_t(x)$ and network architectures}
\label{sec:arch}

\paragraph{Further augmentation for $q_t(x)$.} 
A crucial aspect of the training process is the selection of the distribution $q_t(x)$. An effective $q_t(x)$ should concentrate computational effort on regions relevant to the evolving probability path $p_t(x)$. This challenge has also been observed in various PINN-based methods, highlighting the need for more sophisticated strategies \citep{Wu_2023_rar_d}. We propose that the distribution $q_t(x)$ be defined by training data points generated as follows: approximate samples of $p_t(x)$ are first obtained during the SMC sampling process; these samples are then combined with random perturbations and undergo an additional residual-based resampling stage as described in Appendix~\ref{sec:appendix-data-aug}. We found these steps critical in training a good model as alternative strategies failed to adequately learn a good sampler.

\paragraph{Network architectures for shortcut models}
We employ neural network architectures for the shortcut model $s_{\theta}(x, t, d)$ tailored to the specific characteristics of each experimental task. For simpler, lower-dimensional tasks, we use Multi-Layer Perceptrons (MLPs), while for the more complex particle system (e.g., Lennard-Jones potential (LJ-13) \citep{Jones1924}, we adopt a Transformer-based architecture. 
Specifically, we opted for a non-equivariant Transformer architecture inspired by the Diffusion Transformer (DiT) \cite{Peebles2022DiT} supplemented by data augmentation.
This architecture treats each particle as a token. Particle coordinates $x_i$ are embedded into a higher-dimensional space, and time $t$ and shortcut interval $d$ are encoded using sinusoidal embeddings \citep{NIPS2017_3f5ee243attention} and processed to form a conditioning vector.
Empirically, we found this non-equivariant transformer architecture to outperform equivariant network architectures such as EGNN \citep{DBLP:conf/icml/SatorrasHW21} and MACE-layer \citep{Batatia2022MACE} in terms of training efficiency and model performance.
Full architectural specifications, including layer counts, hidden dimensions, activation functions, and specific input/output mappings, are provided in \ref{sec:sppendix_exp_details}.

\section{Related Work}

Sampling from probability distributions remains a key research challenge. Classical Monte Carlo methods, such as Annealed Importance Sampling \citep{neal2001annealed} and Sequential Monte Carlo \citep{Gordon1993, Liu1998, del2006sequential}, are benchmarks but often computationally costly and slow to converge \citep{roberts2001optimal}. Amortized variational methods like normalizing flows \citep{rezende2015variational} and latent variable models \citep{he2024training} provide alternatives for approximating target distributions. Hybrid approaches \citep{wu2020stochastic,zhang2021differentiable,geffner2021mcmc,matthews2022continual,midgley2022flow} combine MCMC and variational inference, showing promise by leveraging the strengths of both.

Advancements in generative modeling have applied diffusion models \citep{ho2020denoising} and flow matching \citep{lipman2022flow} to sampling. \cite{vargas2023denoising,nusken2024transport}, for example, use diffusion processes for sample learning, requiring SDE simulation for training. The quest for simulation-free training led to methods like iDEM \citep{AkhoundSadegh2024IteratedDE} and BNEM \citep{ouyang2024bnem}. iDEM uses a bi-level scheme, iteratively generating samples and score matching with Monte Carlo (MC) estimates. BNEM targets MC-estimated noised energies, which reduces variance. \cite{woo2024iterated} offer a similar variant targeting MC-estimated vector fields in a flow matching framework.

PINN-based approaches using the continuity equation \citep{tian2024liouville,mate2023learning,albergo2025netsnonequilibriumtransportsampler} also offer simulation-free training objectives. For instance, NETS \citep{albergo2025netsnonequilibriumtransportsampler} adds a learned drift to AIS to reduce variance, and \cite{mate2023learning} proposes a learnable transport path interpolation; both learn $\partial_t \log Z_t$ as part of their training. Alternatively, LFIS \citep{tian2024liouville} uses a similar loss with an importance-sampled estimator for $\partial_t \log Z_t$, risking high variance.

Stochastic optimal control (SOC) \citep{pavon1989stochastic,tzen2019theoretical} are also applied to sampling. \cite{zhang2021path}, for example, introduced Path Integral Sampler (PIS) based on sampling-optimal control links \citep{chen2016relation}. \cite{berner2022optimal} further connected optimal control with SDE-based generative modeling \citep{kloeden1992stochastic}. Recently, Adjoint Sampling \citep{havens2025adjointsamplinghighlyscalable} used this principle in an on-policy method for better efficiency and scalability.

\begin{figure}[!t]
    \centering
    \begin{minipage}[t]{0.133\linewidth}
        \centering
        \includegraphics[width=1.\linewidth]{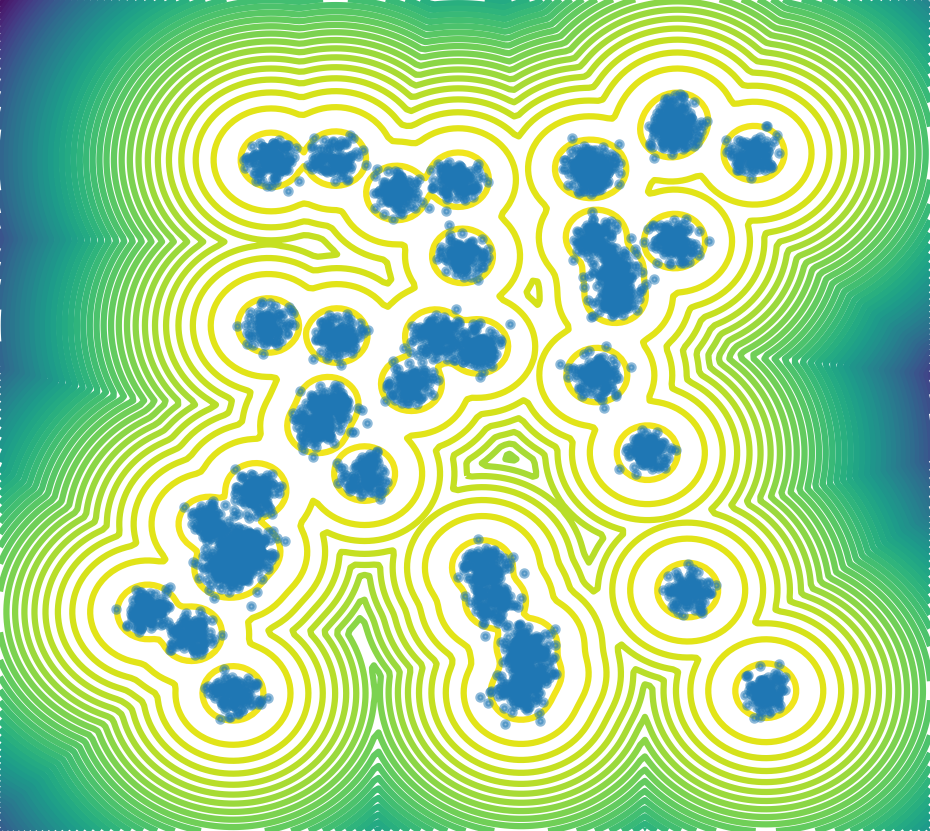}
        Ground Truth
    \end{minipage}
    \begin{minipage}[t]{0.133\linewidth}
        \centering
        \includegraphics[width=1.\linewidth]{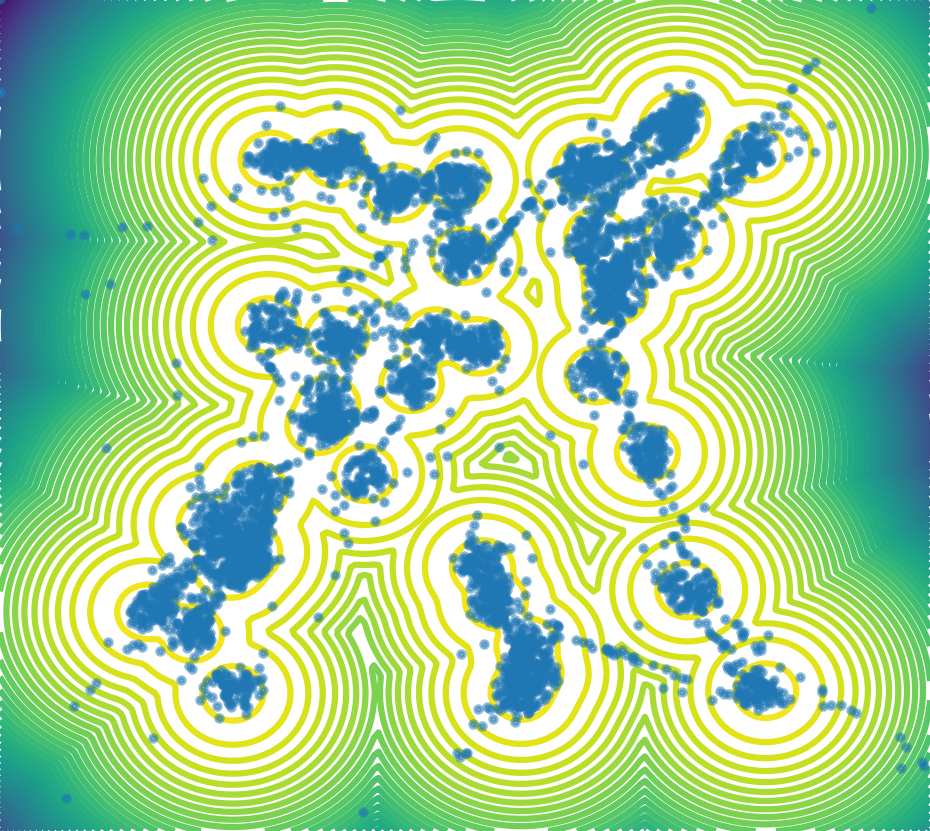}
        FAB
    \end{minipage}
    \begin{minipage}[t]{0.133\linewidth}
        \centering
        \includegraphics[width=1.\linewidth]{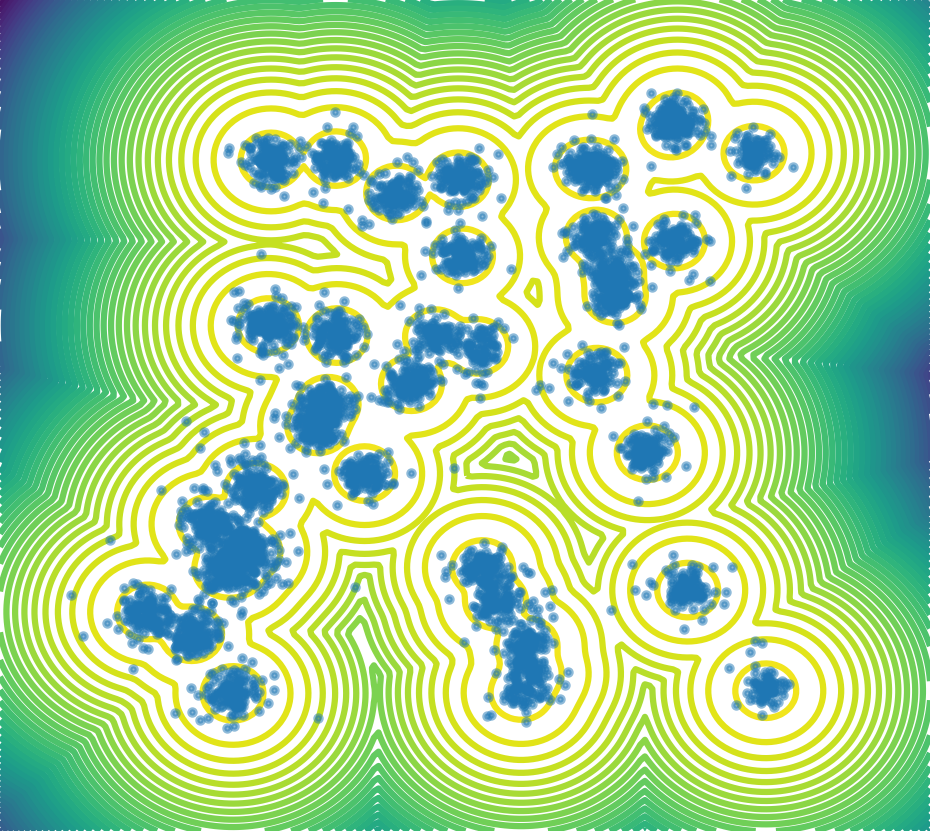}
        iDEM
    \end{minipage}
    \begin{minipage}[t]{0.133\linewidth}
        \centering
        \includegraphics[width=1.\linewidth]{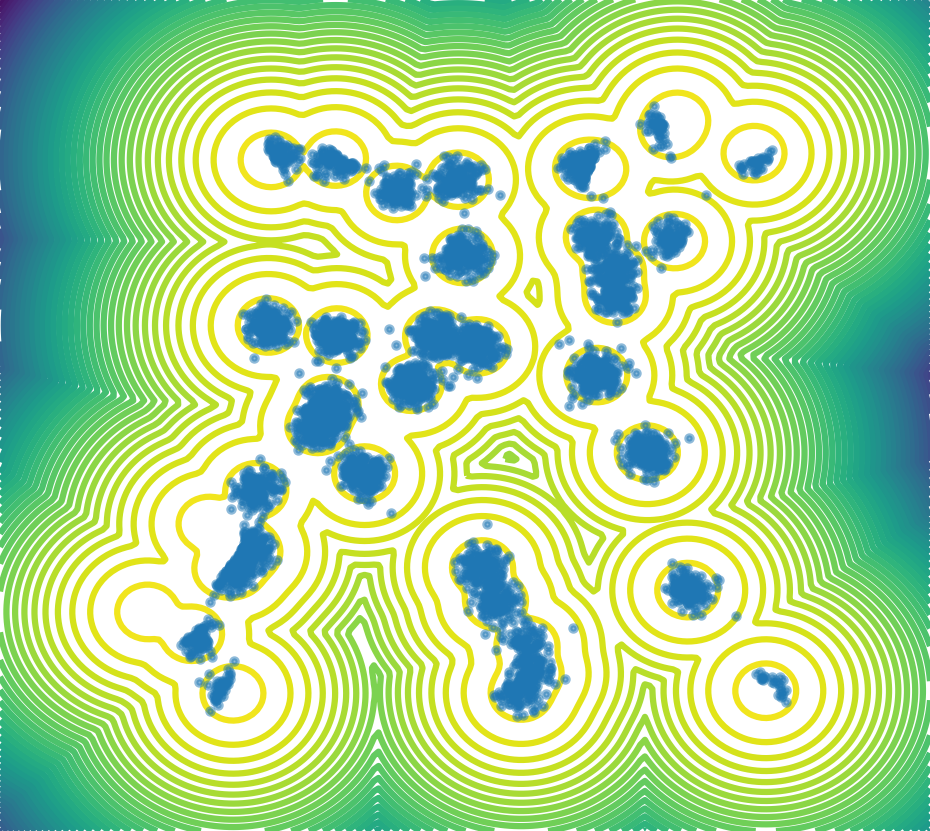}
        LFIS
    \end{minipage}
    \begin{minipage}[t]{0.133\linewidth}
        \centering
        \includegraphics[width=1.\linewidth]{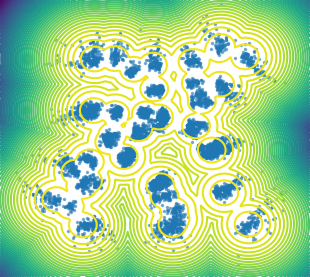}
        LIBD
    \end{minipage}
    \begin{minipage}[t]{0.133\linewidth}
        \centering
        \includegraphics[width=1.\linewidth]{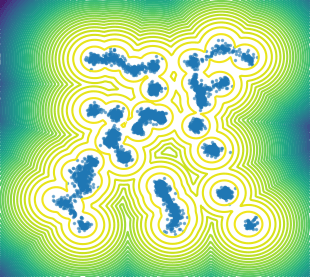}
        PINN
    \end{minipage}
    \begin{minipage}[t]{0.133\linewidth}
        \centering
        \includegraphics[width=1.\linewidth]{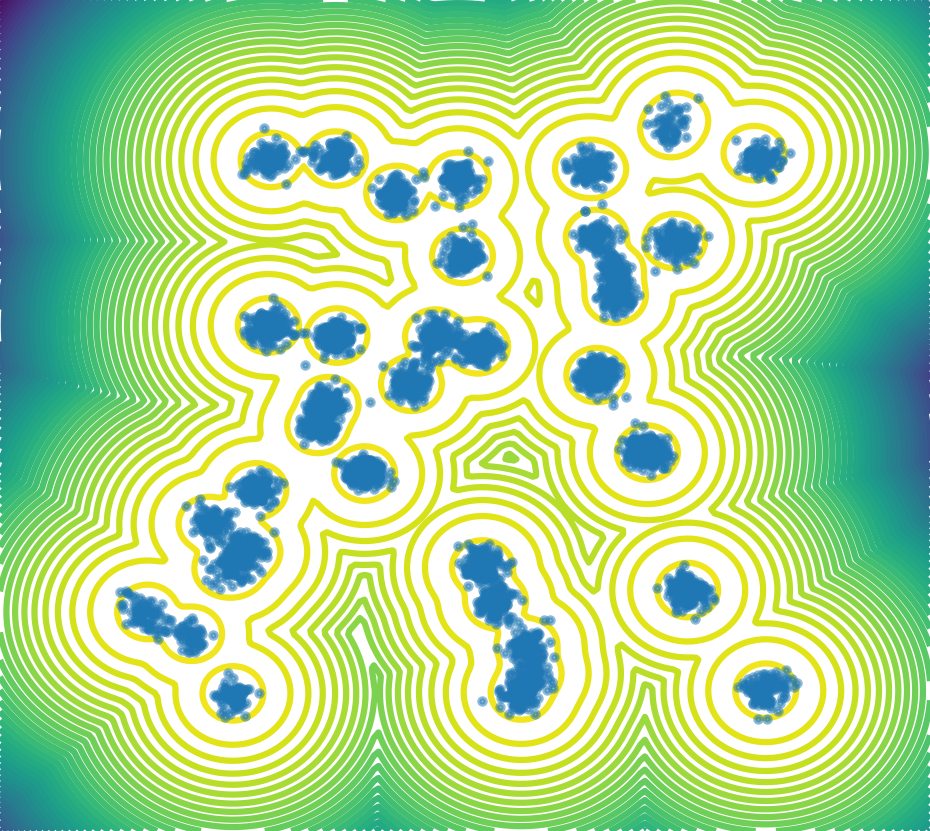}
        \ours-128
    \end{minipage}
    \caption{Samples of GMM-40, with contour lines representing the ground truth distribution.}
    \vspace{-4mm}
    \label{fig:gmm-visualised}
\end{figure}

\begin{figure}[!t]
    \centering
    \begin{minipage}[t]{0.133\linewidth}
        \centering
        \includegraphics[width=1.\linewidth]{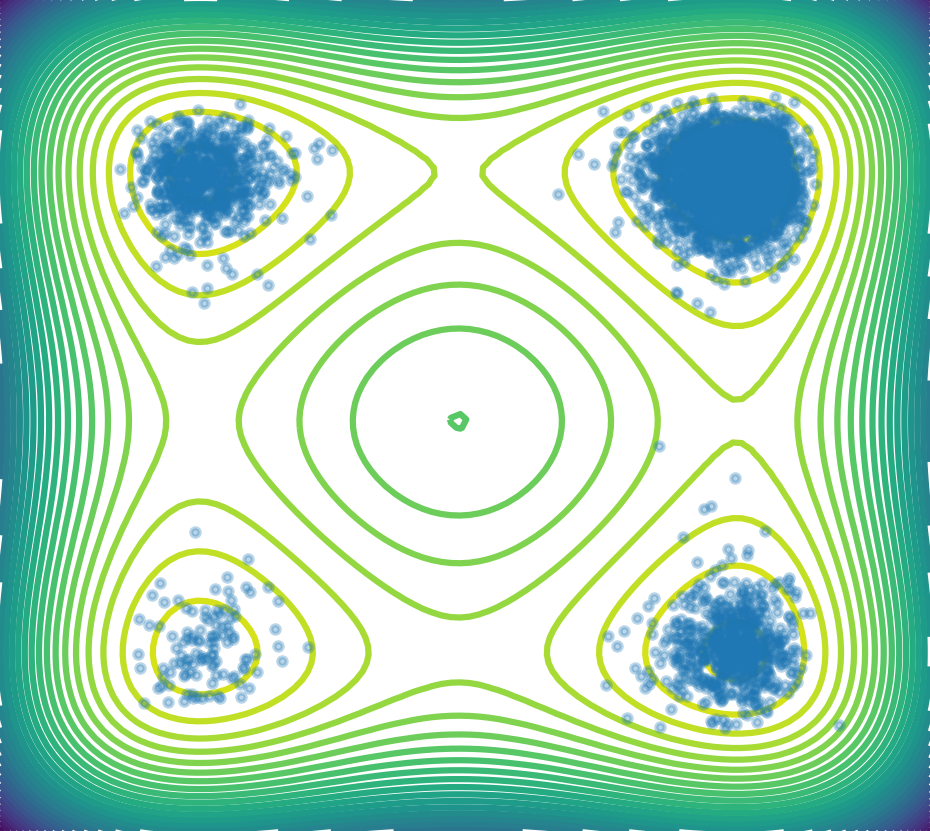}
        Ground Truth
    \end{minipage}
    \begin{minipage}[t]{0.133\linewidth}
        \centering
        \includegraphics[width=1.\linewidth]{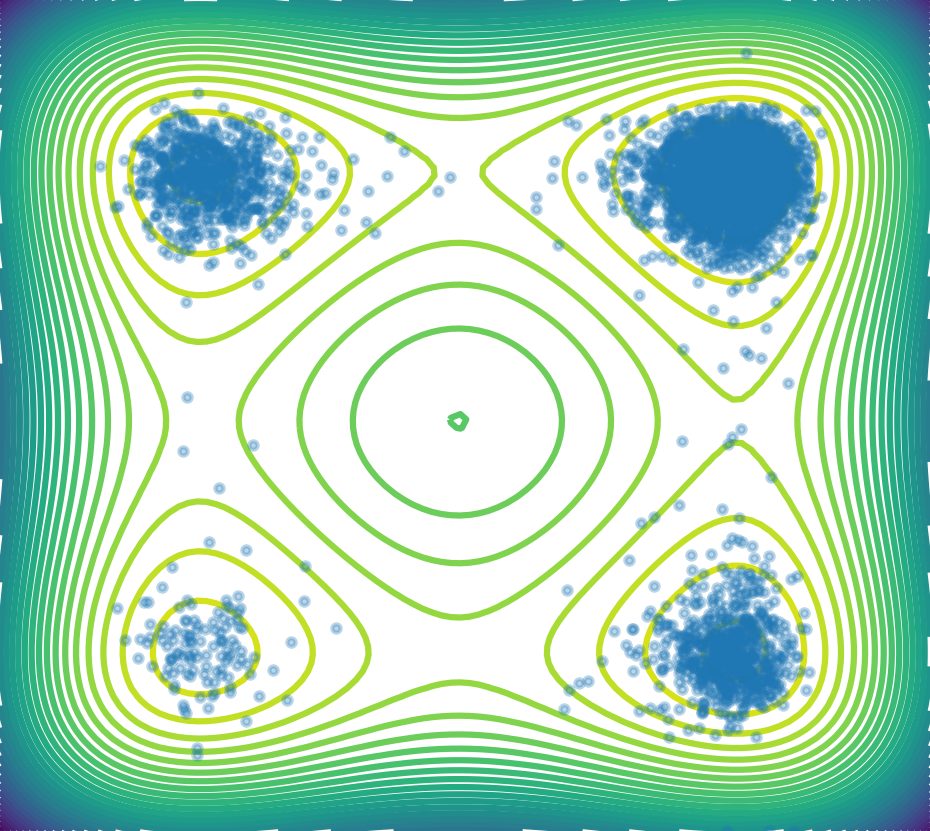}
        FAB
    \end{minipage}
    \begin{minipage}[t]{0.133\linewidth}
        \centering
        \includegraphics[width=1.\linewidth]{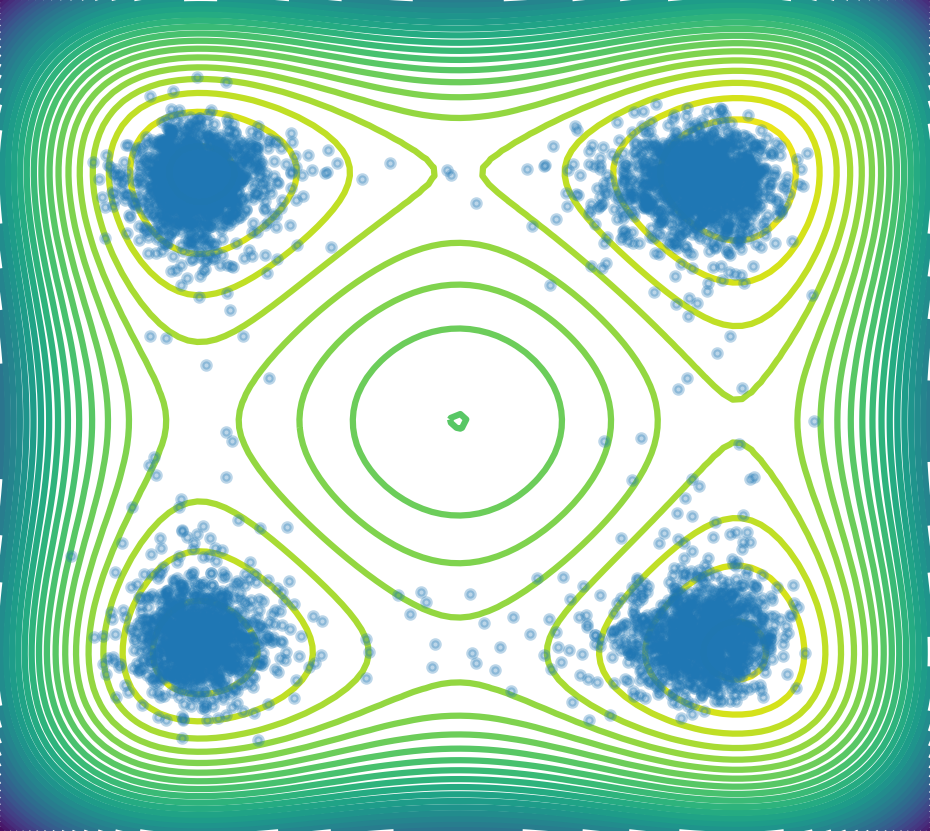}
        iDEM
    \end{minipage}
    \begin{minipage}[t]{0.133\linewidth}
        \centering
        \includegraphics[width=1.\linewidth]{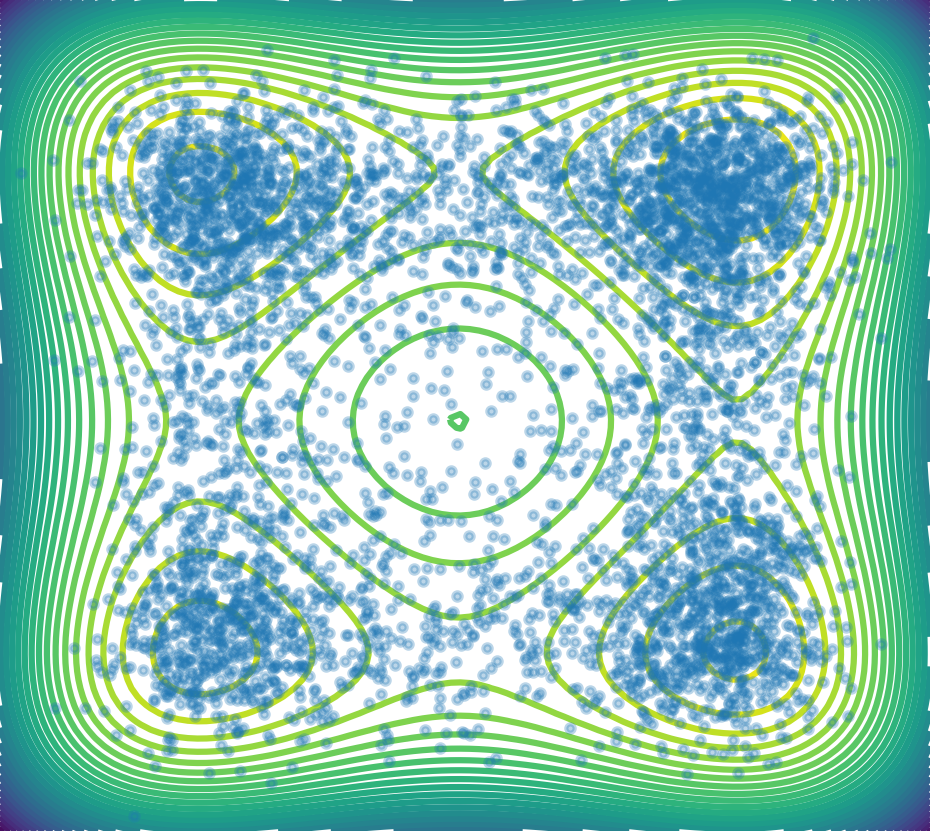}
        LFIS
    \end{minipage}
    \begin{minipage}[t]{0.133\linewidth}
        \centering
        \includegraphics[width=1.\linewidth]{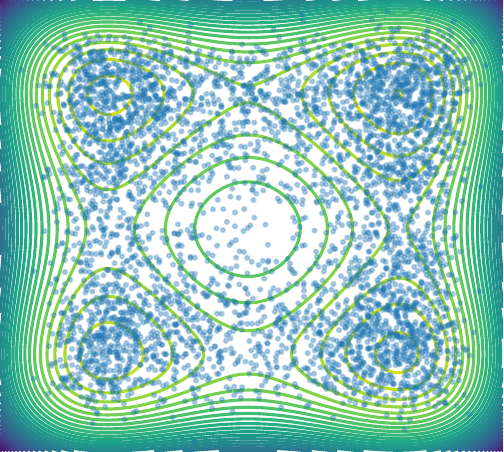}
        LIBD
    \end{minipage}
    \begin{minipage}[t]{0.133\linewidth}
        \centering
        \includegraphics[width=1.\linewidth]{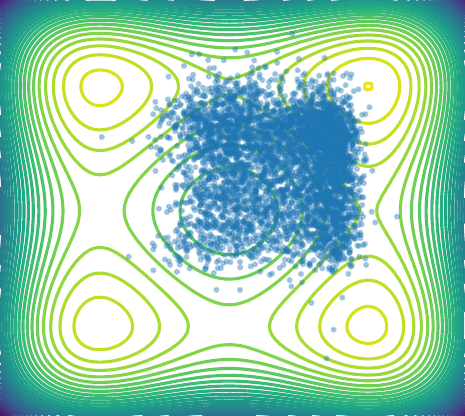}
        PINN
    \end{minipage}
    \begin{minipage}[t]{0.133\linewidth}
        \centering
        \includegraphics[width=1.\linewidth]{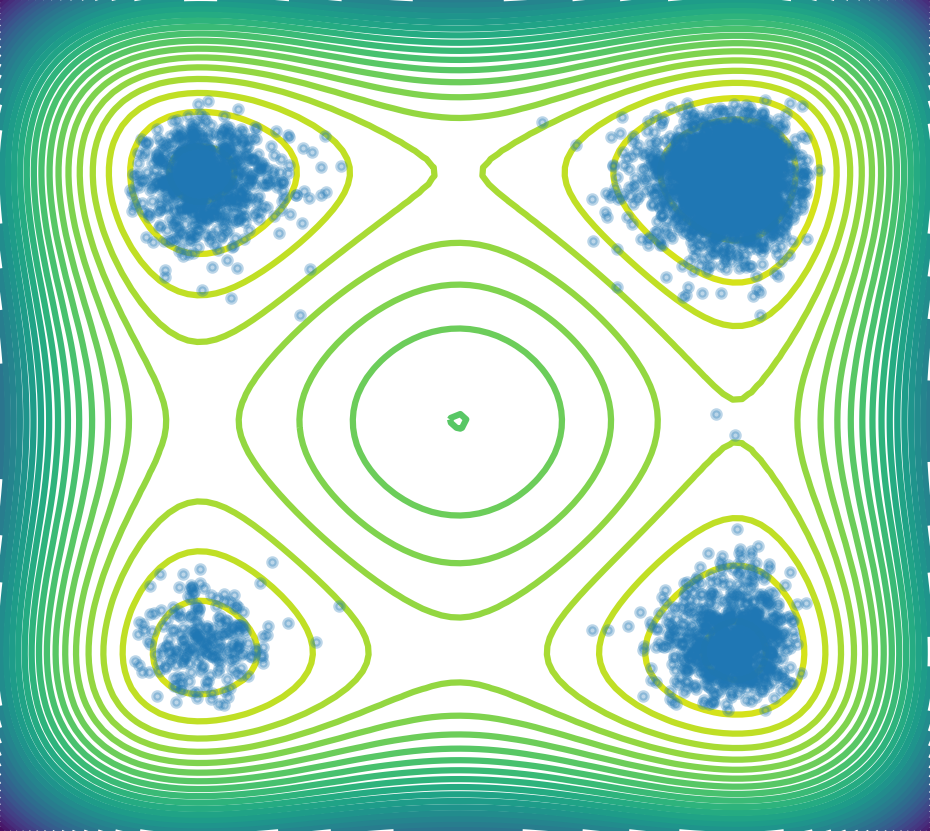}
        \ours-128
    \end{minipage}
    \caption{2D marginal samples from the 1st and 3rd dimensions of MW-32.}
    \vspace{-5mm}
    \label{fig:mw32-visualised}
\end{figure}

\section{Experiments}
\label{sec:experiments}

We evaluate the performance of our Neural Flow Shortcut Sampler (\ours) against baselines on various benchmark distributions. Detailed experimental \& hyper-parameter settings and additional results are provided in \cref{sec:sppendix_exp_details,sec:appendix_add_exp_results}; below is a summary of the experiment settings:

\vspace{-1em}
\begin{itemize}
    \item \textbf{Target densities}: We use two synthetic targets, GMM-40 and MW-32 \citep{midgley2022flow}, for multi-modal and increased dimensionality settings. We also consider N-body system simulations by sampling from Double Well 4 (DW-4) and Lennard-Jones 13 (LJ-13) densities.

    \item \textbf{Baselines}: The baseline methods include: \textbf{FAB} \citep{midgley2022flow}, \textbf{iDEM} \citep{AkhoundSadegh2024IteratedDE}, \textbf{LFIS} \citep{tian2024liouville}, \textbf{LIBD} \citep{mate2023learning} and flow-based samplers trained with the \textbf{PINN} objective in \citep{albergo2025netsnonequilibriumtransportsampler}. 

    \item \textbf{Metrics}: We use Wasserstein-2 distance ($\mathcal{W}_2$) and Total Variation (TV) distance computed between generated and ground-truth/high-quality MCMC samples. Metrics are computed either in the data space ($\mathcal{X}$) to assess sample configurations, energy space ($\mathcal{E}$) to assess distribution matching w.r.t. the potential, or interatomic distance space ($\mathcal{D}$) for N-body systems' physical invariances.
    
\end{itemize}

\subsection{Synthetic Benchmarks}

We first test \ours on two synthetic targets \citep{midgley2022flow}: GMM-40, a 2D Gaussian Mixture Model with 40 well-separated modes testing exploration, and MW-32, a 32D potential with $2^{16}$ modes testing high-dimensional sampling. For both, we use the MLP architecture described in \cref{sec:arch}.

\begin{table}[!t]
\centering
\caption{Comparison of neural samplers on N-body systems (DW-4, LJ-13) and Synthetic Benchmarks (GMM-40, MW-32), with mean and standard deviation obtained from five independent runs.}
\label{tab:merged_results}
\resizebox{\textwidth}{!}{
\begin{tabular}{lcccccccccc}
\toprule
Energy $\rightarrow$ & \multicolumn{3}{c}{DW-4 ($d = 8$)} & \multicolumn{3}{c}{LJ-13 ($d = 39$)} & \multicolumn{2}{c}{GMM-40 ($d = 2$)} & \multicolumn{2}{c}{MW-32 ($d = 32$)} \\
\cmidrule(r){2-4} \cmidrule(r){5-7} \cmidrule(r){8-9} \cmidrule(l){10-11}
 Method $\downarrow$ & $\mathcal{E}$-$\mathcal{W}_2$ $\downarrow$ & $\mathcal{E}$-TV $\downarrow$ & $\mathcal{D}$-TV $\downarrow$ & $\mathcal{E}$-$\mathcal{W}_2$ & $\mathcal{E}$-TV & $\mathcal{D}$-TV & $\mathcal{E}$-$\mathcal{W}_2$ & $\mathcal{X}$-TV & $\mathcal{E}$-TV & $\mathcal{X}$-$\mathcal{W}_2$ \\
\midrule
FAB &
$0.64\text{\scalebox{0.7}{$\pm 0.20$}}$ &
$0.13\text{\scalebox{0.7}{$\pm 0.01$}}$ &
$0.07\text{\scalebox{0.7}{$\pm 0.01$}}$ &
$31.28\text{\scalebox{0.7}{$\pm 0.31$}}$ &
$0.94\text{\scalebox{0.7}{$\pm 0.03$}}$ &
$0.26\text{\scalebox{0.7}{$\pm 0.01$}}$ &
$8.89\text{\scalebox{0.7}{$\pm 2.20$}}$ &
$0.84\text{\scalebox{0.7}{$\pm 0.19$}}$ &
$0.17\text{\scalebox{0.7}{$\pm 0.01$}}$ &
$5.78\text{\scalebox{0.7}{$\pm 0.02$}}$ \\
iDEM &
$0.24\text{\scalebox{0.7}{$\pm 0.11$}}$ &
$0.12\text{\scalebox{0.7}{$\pm 0.01$}}$ &
$0.08\text{\scalebox{0.7}{$\pm 0.01$}}$ &
$13.13\text{\scalebox{0.7}{$\pm 5.30$}}$ &
$0.31\text{\scalebox{0.7}{$\pm 0.01$}}$ &
$0.03\text{\scalebox{0.7}{$\pm 0.01$}}$ &
$1.27\text{\scalebox{0.7}{$\pm 0.21$}}$ &
$0.83\text{\scalebox{0.7}{$\pm 0.01$}}$ &
$0.66\text{\scalebox{0.7}{$\pm 0.15$}}$ &
$8.18\text{\scalebox{0.7}{$\pm 0.04$}}$ \\
LFIS &
$5.22\text{\scalebox{0.7}{$\pm 1.05$}}$ &
$0.68\text{\scalebox{0.7}{$\pm 0.02$}}$ &
$0.29\text{\scalebox{0.7}{$\pm 0.01$}}$ &
$\infty$ &
$\ast$ &
$0.88\text{\scalebox{0.7}{$\pm 0.00$}}$ &
$0.27\text{\scalebox{0.7}{$\pm 0.21$}}$ &
$0.84\text{\scalebox{0.7}{$\pm 0.01$}}$ &
$\ast$ &
$8.89\text{\scalebox{0.7}{$\pm 0.03$}}$ \\
LIBD &
$0.35\text{\scalebox{0.7}{$\pm 0.01$}}$ &
$0.14\text{\scalebox{0.7}{$\pm 0.01$}}$ &
$0.06\text{\scalebox{0.7}{$\pm 0.01$}}$ &
$49.89\text{\scalebox{0.7}{$\pm 0.12$}}$&
$\ast$ &
$0.45\text{\scalebox{0.7}{$\pm 0.01$}}$ &
$19.60\text{\scalebox{0.7}{$\pm 0.26$}}$ &
$0.83\text{\scalebox{0.7}{$\pm 0.01$}}$ &
$\ast$ &
$8.62\text{\scalebox{0.7}{$\pm 0.01$}}$ \\
PINN &
$17.47\text{\scalebox{0.7}{$\pm 21.8$}}$ & $0.20\text{\scalebox{0.7}{$\pm 0.01$}}$ & $0.18\text{\scalebox{0.7}{$\pm 0.01$}}$ &
$48.3\text{\scalebox{0.7}{$\pm 0.1$}}$ &
$\ast$ &
$0.44\text{\scalebox{0.7}{$\pm 0.00$}}$ &
$1.15\text{\scalebox{0.7}{$\pm 0.12$}}$ &
$0.73\text{\scalebox{0.7}{$\pm 0.01$}}$ &
$\ast$ &
$8.32\text{\scalebox{0.7}{$\pm 0.01$}}$ \\
\midrule
\ours-128 (ours) &
$1.03\text{\scalebox{0.7}{$\pm 0.03$}}$ &
$0.16\text{\scalebox{0.7}{$\pm 0.01$}}$ &
$0.07\text{\scalebox{0.7}{$\pm 0.01$}}$ &
$1.93\text{\scalebox{0.7}{$\pm 0.01$}}$ &
$0.19\text{\scalebox{0.7}{$\pm 0.10$}}$ &
$0.05\text{\scalebox{0.7}{$\pm 0.00$}}$ &
$0.12\text{\scalebox{0.7}{$\pm 0.01$}}$ &
$0.64\text{\scalebox{0.7}{$\pm 0.00$}}$ &
$0.30\text{\scalebox{0.7}{$\pm 0.00$}}$ &
$6.37\text{\scalebox{0.7}{$\pm 0.01$}}$ \\
\ours-64 (ours) &
$1.88\text{\scalebox{0.7}{$\pm 0.16$}}$ &
$0.19\text{\scalebox{0.7}{$\pm 0.01$}}$ &
$0.08\text{\scalebox{0.7}{$\pm 0.01$}}$ &
$1.51\text{\scalebox{0.7}{$\pm 0.10$}}$ &
$0.19\text{\scalebox{0.7}{$\pm 0.10$}}$ &
$0.06\text{\scalebox{0.7}{$\pm 0.01$}}$ &
$0.16\text{\scalebox{0.7}{$\pm 0.01$}}$ &
$0.65\text{\scalebox{0.7}{$\pm 0.01$}}$ &
$0.38\text{\scalebox{0.7}{$\pm 0.01$}}$ &
$6.47\text{\scalebox{0.7}{$\pm 0.01$}}$ \\
\ours-32 (ours) &
$3.91\text{\scalebox{0.7}{$\pm 0.56$}}$ &
$0.22\text{\scalebox{0.7}{$\pm 0.01$}}$ &
$0.11\text{\scalebox{0.7}{$\pm 0.01$}}$ &
$1.62\text{\scalebox{0.7}{$\pm 0.20$}}$ &
$0.20\text{\scalebox{0.7}{$\pm 0.01$}}$ &
$0.06\text{\scalebox{0.7}{$\pm 0.00$}}$ &
$0.17\text{\scalebox{0.7}{$\pm 0.02$}}$ &
$0.69\text{\scalebox{0.7}{$\pm 0.01$}}$ &
$0.54\text{\scalebox{0.7}{$\pm 0.01$}}$ &
$6.85\text{\scalebox{0.7}{$\pm 0.01$}}$ \\
\ours-8 (ours) &
$10.74 \text{\scalebox{0.7}{$\pm 0.51$}}$ &
$0.44 \text{\scalebox{0.7}{$\pm 0.01$}}$ &
$0.44 \text{\scalebox{0.7}{$\pm 0.01$}}$ &
$29.57\text{\scalebox{0.7}{$\pm 16.06$}}$ &
$0.30\text{\scalebox{0.7}{$\pm 0.01$}}$ &
$0.22\text{\scalebox{0.7}{$\pm 0.01$}}$ &
$9.87\text{\scalebox{0.7}{$\pm 0.01$}}$ &
$0.73\text{\scalebox{0.7}{$\pm 0.01$}}$ & 
$\ast$ &
$13.91\text{\scalebox{0.7}{$\pm 0.17$}}$  
\\
\bottomrule
\multicolumn{11}{l}{\footnotesize Note: $\ast$ in TV metrics indicates that the empirical sample distribution ($P_{S}$) and the ground truth distribution ($P_{G}$) have disjoint supports,} \\
\multicolumn{11}{l}{\footnotesize i.e., $supp(P_{S}) \cap supp(P_{G}) = \emptyset$. This results in a meaningless Total Variation distance.} \\
\end{tabular}
}
\vspace{-6mm} 
\end{table}

\textbf{Results.} Quantitative results are presented in Table \cref{tab:merged_results}.
On GMM-40, \ours-128 achieves strong performance, particularly in data-space TV ($\mathcal{X}$-TV), indicating better mode coverage when compared to baselines (\cref{fig:gmm-visualised}). \ours-128 is on par with FAB and iDEM in energy-space $\mathcal{W}_2$ ($\mathcal{E}$-$\mathcal{W}_2$). While LFIS achieves the lowest $\mathcal{E}$-$\mathcal{W}_2$, its high $\mathcal{X}$-TV suggests potential issues with mode coverage.

On the high-dimensional MW-32 task, \ours-128 achieves a strong energy-space TV ($\mathcal{E}$-TV), ranking second after FAB, suggesting accurate energy distribution matching. It also performs competitively in data-space $\mathcal{W}_2$ ($\mathcal{X}$-$\mathcal{W}_2$). Visualisations in \cref{fig:mw32-visualised} confirm that \ours, similar to FAB, accurately models the positions and the mixture weights of the modes, whereas LFIS, LIBD, and PINN struggles significantly and iDEM misrepresents mode mixture weights despite finding their locations.

\textbf{Shortcut Consistency.} 
Table \cref{tab:merged_results} also shows the effectiveness of the proposed shortcut sampler, where quantitatively the sample quality of the \ours samplers are well maintained when decreasing the number of simulation steps from 128 to 32. An ablation study in \cref{sec:appendix_add_exp_results} demonstrates the importance of the proposed consistency loss in Eq.~\eqref{eq:loss-nfs2}, without such term the trained samplers exhibit significant drops in sample quality with reduced simulation steps.

\subsection{N-body System Simulations}

\textbf{Double Well 4 (DW-4).} This system involves 4 particles in 2D governed by pairwise interactions ($d=8$). We use the MLP architecture augmented with pairwise distance features (\cref{sec:arch}) and employ the data augmentation described in Appendix~\ref{sec:appendix-data-aug}. Due to the system's Euclidean symmetries, we evaluate using metrics invariant to it: energy-based metrics ($\mathcal{E}$-$\mathcal{W}_2$, $\mathcal{E}$-TV) and the TV distance between the distributions of sorted interatomic distances ($\mathcal{D}$-TV).

\begin{wrapfigure}{r}{0.5\linewidth}
    \centering
    \vspace{-4mm}
    \includegraphics[width=.5\textwidth]{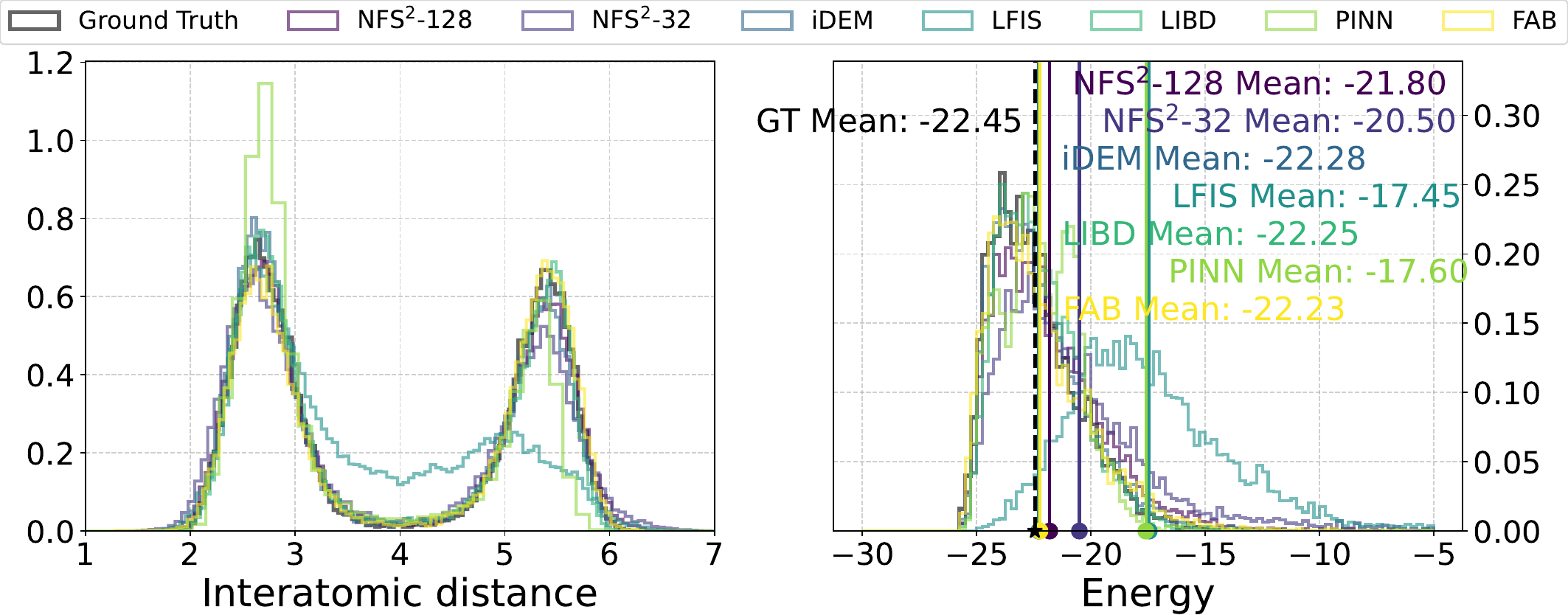}
    \caption{Histogram of interatomic distance and sample energy on DW-4.}
    \label{fig:dw4-hist}
    \vspace{-6mm}
\end{wrapfigure}

As shown in Table \cref{tab:merged_results}, \ours-128 remains competitive: it achieves a $\mathcal{D}$-TV of $0.07$ which matches FAB, and an $\mathcal{E}$-TV of $0.16$. It significantly outperforms LFIS across all metrics, indicating the effectiveness of amortization and improved $\partial_t \log Z_t$ estimates. In terms of $\mathcal{E}$-$\mathcal{W}_2$, iDEM and LIBD perform the best but \ours-128 is not too far from it. Importantly, halving the sampling budget from 128 to 64 steps 
results in only a marginal degradation in TV metrics and a slight increase in $\mathcal{E}$-$\mathcal{W}_2$.
\cref{fig:dw4-hist} histograms show that \ours effectively captures both the bimodal interatomic distance and the energy distributions, performing comparably to FAB, iDEM, and PINN, and notably better than LFIS. We conjecture these results, particularly in the low-sampling budget domain, could be further improved by employing the Transformer-based architecture.

\textbf{Lennard-Jones 13 (LJ-13).} This system involves 13 particles in 3D governed by pairwise interactions ($d=39$). The LJ potential energy exhibits a high degree of non-linearity, making the sampling task significantly more challenging. We used the Transformer-based network and training method described in \cref{sec:arch}. We use the same metrics as in DW-4, again due to the system's symmetries.

\begin{wrapfigure}{r}{0.5\linewidth}
    \centering
    \vspace{-4mm}
    \includegraphics[width=.5\textwidth]{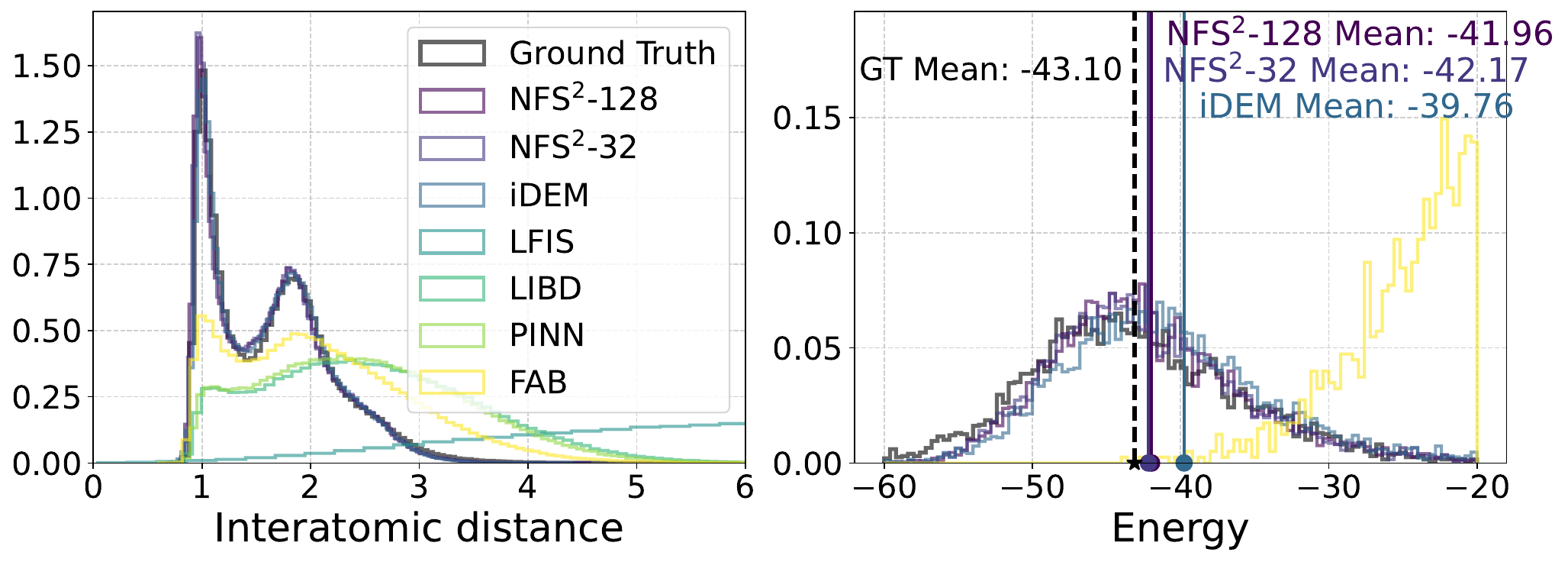}
    \vspace{-1mm}
    \caption{Histogram of interatomic distance and sample energy on LJ-13.}
    \label{fig:lj13-hist}
    \vspace{-6mm}
\end{wrapfigure}

On the more complex LJ-13 system, \ours demonstrates good performance, as detailed in Table \cref{tab:merged_results}. Notably, \ours-64 achieves the best $\mathcal{E}$-$\mathcal{W}_2$ (1.51) among all methods, significantly outperforming iDEM and PINN; LFIS failed to produce a finite value for this metric (due to the asymptotic behaviour of the energy function). All \ours variants (from 128 to 32 steps) achieve superior $\mathcal{E}$-TV values 
compared to iDEM, while LFIS and PINN failed to converge for $\mathcal{E}$-TV. 
Specifically, \ours-64 maintains identical performance to \ours-128, whilst \ours-8 remains competitive with only 8 sampling steps. The performance of \ours in $\mathcal{D}$-TV degrades as expected with reduced sampling budget, from 0.05 for \ours-128 to 0.16 for \ours-32. The histograms \cref{fig:lj13-hist} confirm the effectiveness of our approach.

\subsection{Ablation Studies}

\begin{wrapfigure}{r}{0.54\linewidth}
    \centering
    \vspace{-5mm}
    \includegraphics[width=.53\textwidth]{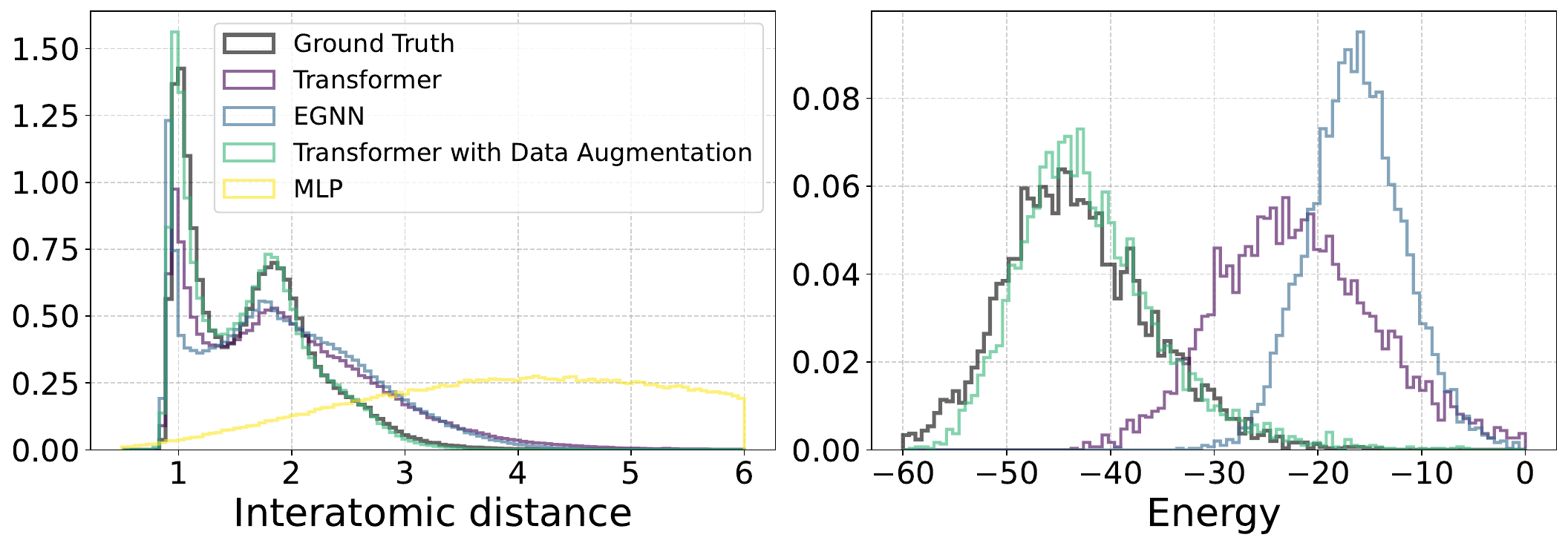}
    \vspace{-2mm}
    \caption{Histogram of interatomic distance and sample energy on LJ-13.}
    \label{fig:lj13-ablations-hist}
    \vspace{-5mm}
\end{wrapfigure}

\textbf{Selection of $q(x)$ and model architecture.}
We studied the impact of model architecture and data augmentation on performance for LJ-13. Models in comparison include: MLP, symmetry-equivariant EGNN \citep{DBLP:conf/icml/SatorrasHW21}, DiT Transformer \citep{Peebles2022DiT}, and DiT enhanced with symmetry-aware data augmentation plus additional residual-based resampling. As shown in Figure~\ref{fig:lj13-ablations-hist}, the combination of the Transformer architecture and appropriate data augmentation yielded superior results. This highlights the benefit of leveraging both flexible architectures and symmetry information through augmentation, consistent with findings in related work \citep{Peebles2022DiT}. Note that some architectures completely failed to learn the distribution, highlighting the importance of architecture design for neural samplers.

\makeatletter
\newcommand\figcaption{\def\@captype{figure}\caption}
\newcommand\tabcaption{\def\@captype{table}\caption}
\makeatother
\begin{figure}[!t] 
    \centering 

    \begin{minipage}[t]{0.55\textwidth} 
        \centering
        \figcaption{Visual comparison of shortcut regularisation methods on GMM-40.}
        \label{fig:gmm-ablations-sc}
        \vspace{-2mm} 
        \includegraphics[width=.8\linewidth]{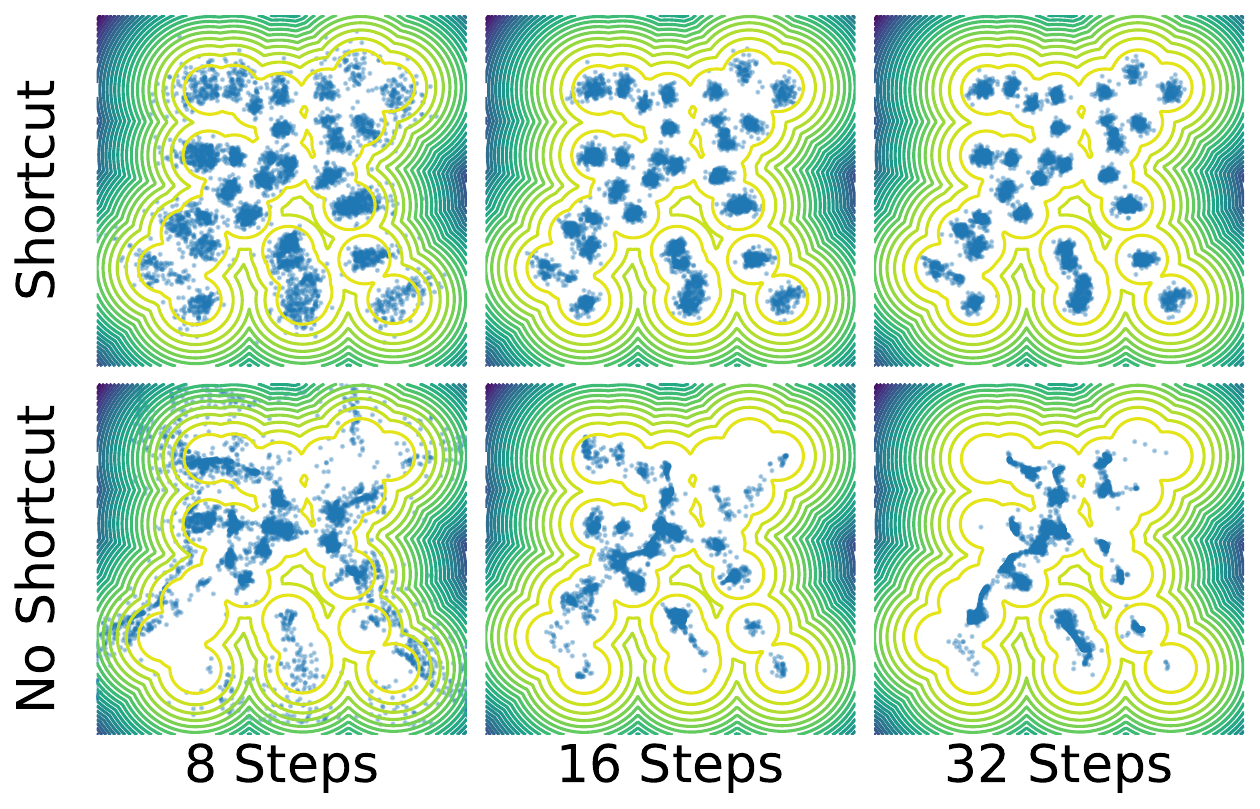}
        \vspace{-6mm}
    \end{minipage}
    \hfill 
    \begin{minipage}[t]{0.43\textwidth} 
        \centering
        \tabcaption{Comparison of Midpoint and Generalised shortcut consistency on GMM-40.}
        \label{tab:shortcut_consistency_ablation} 
        \vspace{-1mm} 
        \resizebox{1\linewidth}{!}{%
            \begin{tabular}{lccc}
            \toprule
             Method & Consistency & $\mathcal{E}$-$\mathcal{W}_2 \downarrow$ & $\mathcal{X}$-TV $\downarrow$ \\
            \midrule
             \ours-128 & Midpoint & $0.46\text{\scalebox{0.7}{$\pm 0.14$}}$ & $0.67\text{\scalebox{0.7}{$\pm 0.00$}}$ \\
             \ours-128 & Generalised & $0.12\text{\scalebox{0.7}{$\pm 0.01$}}$ & $0.64\text{\scalebox{0.7}{$\pm 0.00$}}$ \\
            \midrule
             \ours-64 & Midpoint & $1.32\text{\scalebox{0.7}{$\pm 0.29$}}$ & $0.69\text{\scalebox{0.7}{$\pm 0.01$}}$ \\
             \ours-64 & Generalised & $0.16\text{\scalebox{0.7}{$\pm 0.01$}}$ & $0.65\text{\scalebox{0.7}{$\pm 0.01$}}$ \\
            \midrule
             \ours-32 & Midpoint & $4.38\text{\scalebox{0.7}{$\pm 1.14$}}$ & $0.72\text{\scalebox{0.7}{$\pm 0.01$}}$ \\
             \ours-32 & Generalised & $0.17\text{\scalebox{0.7}{$\pm 0.02$}}$ & $0.69\text{\scalebox{0.7}{$\pm 0.01$}}$ \\
            \midrule
             \ours-8 & Midpoint & $20.29\text{\scalebox{0.7}{$\pm 2.72$}}$ & $0.84\text{\scalebox{0.7}{$\pm 0.01$}}$ \\
             \ours-8 & Generalised & $9.87\text{\scalebox{0.7}{$\pm 0.01$}}$ & $0.73\text{\scalebox{0.7}{$\pm 0.01$}}$ \\
            \bottomrule
            \end{tabular}
        } 
        \vspace{-5mm}
    \end{minipage}
    \vspace{-2mm}
\end{figure}

\textbf{The importance of good $\partial_t \log Z_t$ estimations.}
The PINN baseline, which directly learns $\partial_t \log Z_t$ via gradient descent, performs well in low dimensional benchmarks, but fails in high dimensions (\cref{tab:merged_results}). This aligns with our experiences in developing \texttt{\ours}: inaccurate, high-variance $\partial_t \log Z_t$ estimations often caused diverged training. Figure~\ref{fig:log_z_estimation_comparison} shows the average Mean Squared Error (MSE) of $\partial_t \log Z_t$ estimations versus ground truth in LJ-13 tests. The large error highlights the struggle of PINN in accurately learning $\partial_t \log Z_t$ in high dimensions. In short, accurate $\partial_t \log Z_t$ estimation is pivotal for many neural samplers' stability and training success.

\newpage
\begin{wrapfigure}{r}{0.45\linewidth}
    \centering
    \vspace{-4mm}
    \includegraphics[width=\linewidth]{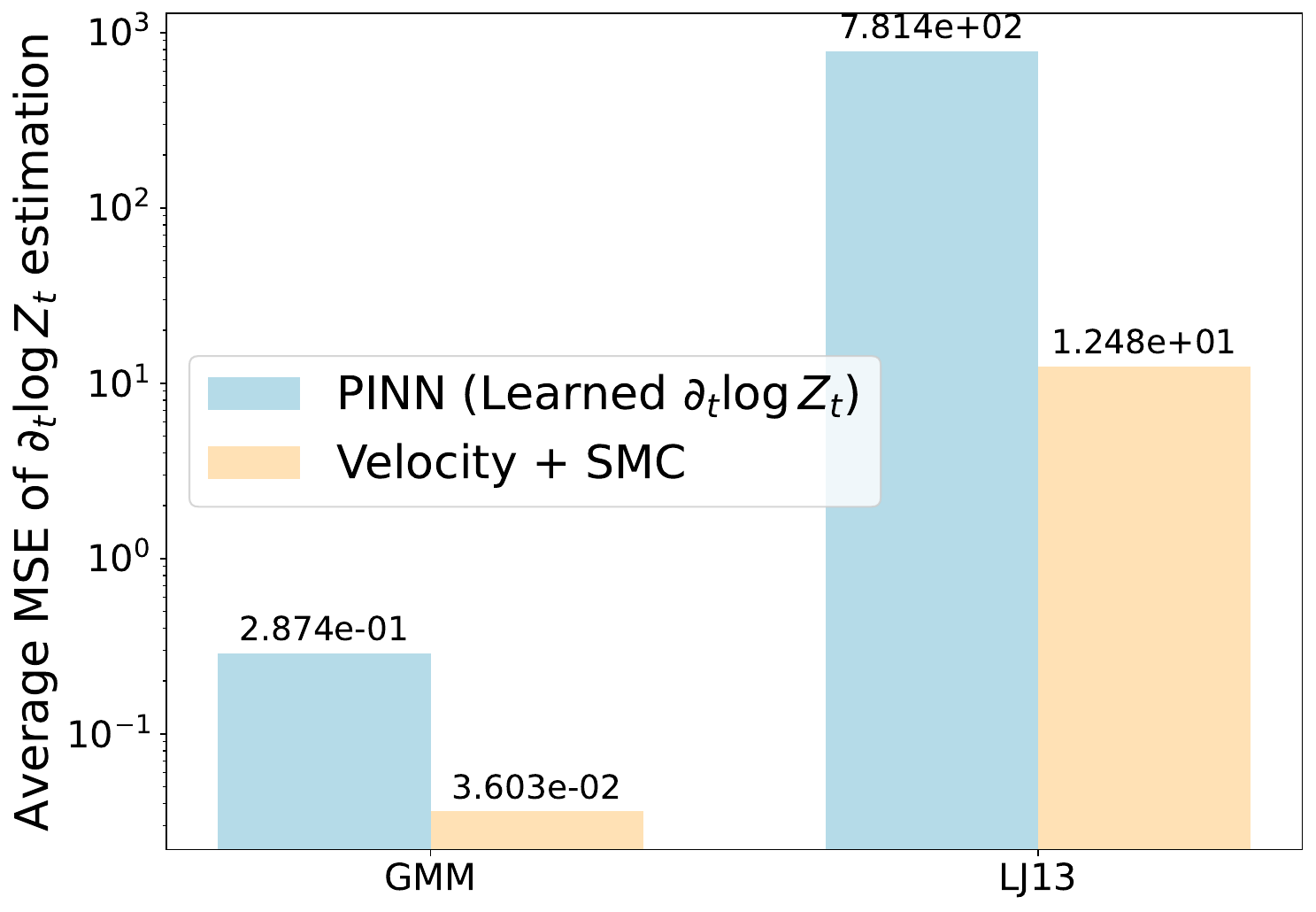}
    \vspace{-5mm}
    \caption{Comparison of $\partial_t \log Z_t$ estimates.}
    \label{fig:log_z_estimation_comparison}
    \vspace{-6mm}
\end{wrapfigure}

\textbf{Shortcut regularisation enables faster sampling.}
To reduce the sampling cost, \ours is trained using additional trajectory shortcut consistency loss (Eq.~\eqref{eq:loss-nfs2}). We compared a model trained with the regularisation enabled against one trained without it. As shown qualitatively in Figure~\ref{fig:gmm-ablations-sc} for GMM-40, the regularisation effect is evident: the model trained without this consistency loss performs substantially worse as the number of sampling steps reduces.

\textbf{Generalised shortcut consistency loss further improves performance.} 
We compared the original shortcut consistency loss in \citet{frans2024one} which only considers the interval midpoint ("Midpoint")  with our proposed loss in Eq.~\eqref{eq:loss-nfs2} with randomised interval segmentation ("Generalised"). Table~\ref{tab:shortcut_consistency_ablation} presents the results for GMM-40, where the generalised consistency approach yields better results across all tested step counts, particularly notable in the low budget domain. This means empirically, the generalised objective provides stronger regularisation regarding shortcut consistency.

\begin{wrapfigure}[12]{r}{0.45\linewidth}
    \centering
    \vspace{-4mm}
    \includegraphics[width=1\linewidth]{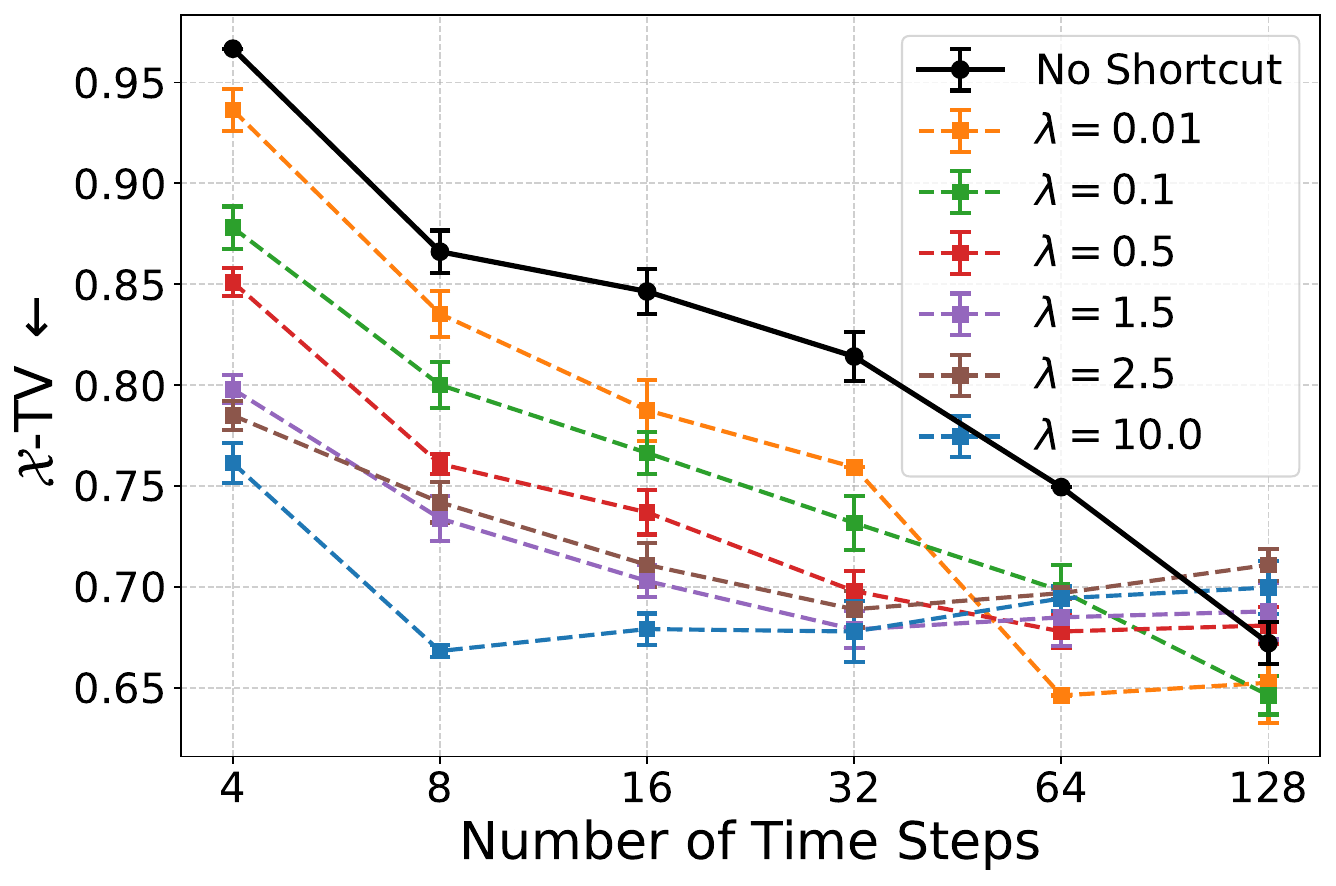}
    \vspace{-5mm}
    \caption{Comparison of $\mathcal{X}$-TV across different shortcut regularisation strength $\lambda$.}
    \label{fig:gmm-shortcut-weight}
    \vspace{-16mm}
\end{wrapfigure}

\textbf{Better shortcut consistency leads to fewer steps required for high quality samples.}
Empirically in GMM-40 experiments, stronger shortcut consistency regularisation substantially reduces the sampling budget, often by over an order of magnitude, to achieve comparable sample quality (measured by $\mathcal{X}$-TV in Figure~\ref{fig:gmm-shortcut-weight}). For instance, a strongly regularised \ours sampler ($\lambda = 10$) with a 4-step sampling budget matches NFS with no shortcut and 64 steps. Overall, while strong shortcut regularisation slightly hinders sample quality for \ours samplers at high sampling budgets, it boost performances significantly when considering low sampling budgets.

\section{Conclusions and Limitations}
In this paper, we proposed Neural Flow Shortcut Samplers (\ours), a flow-based sampler that is straightforward to train and offers dynamic adjustability in its sampling budget. \ours demonstrates considerable efficacy, achieving performance that is not only competitive with but frequently surpasses contemporary state-of-the-art approaches across key benchmark tasks. This is particularly evident in its ability to maintain high sample quality even with significantly reduced simulation steps, a direct benefit of the proposed shortcut consistency model.

Despite these advancements, \ours has limitations. Firstly, computing the divergence term can become prohibitively expensive in high-dimensional scenarios. Secondly, the estimation of $\partial_t \log Z_t$ remains intractable and could be problematic in large-scale particle systems, such as the Lennard-Jones 155 system. In addition, formulating our sampler with a dynamically adjustable discretisation interval eliminates the invertibility required by the change of variables technique to obtain closed-form density values.
Further investigation is warranted to understand the full implications of these challenges and to explore potential mitigation strategies. Such strategies could include the use of stochastic divergence estimation \citep{hutchinson1989stochastic}, employing auxiliary variable methods to reduce divergence computation to lower-dimensional spaces, or developing alternative methods for incorporating the learning of $\partial_t \log Z_t$ with our estimator. We discuss additional limitations and potential avenues for future work in more detail in \cref{sec:appendix-limit-future-work}.

\paragraph{Broader impact.}
This paper presents work whose goal is to advance machine learning research. There may exist potential societal consequences of our work, however, none of which we feel must be specifically highlighted here at the moment of paper submission.

\clearpage
\bibliography{main}

\begin{thebibliography}{64}
\providecommand{\natexlab}[1]{#1}
\providecommand{\url}[1]{\texttt{#1}}
\expandafter\ifx\csname urlstyle\endcsname\relax
  \providecommand{\doi}[1]{doi: #1}\else
  \providecommand{\doi}{doi: \begingroup \urlstyle{rm}\Url}\fi

\bibitem[Albergo \& Vanden-Eijnden(2025)Albergo and Vanden-Eijnden]{albergo2025netsnonequilibriumtransportsampler}
Albergo, M.~S. and Vanden-Eijnden, E.
\newblock Nets: A non-equilibrium transport sampler, 2025.
\newblock URL \url{https://arxiv.org/abs/2410.02711}.

\bibitem[Albergo et~al.(2019)Albergo, Kanwar, and Shanahan]{albergo2019flow}
Albergo, M.~S., Kanwar, G., and Shanahan, P.~E.
\newblock Flow-based generative models for markov chain monte carlo in lattice field theory.
\newblock \emph{Physical Review D}, 100\penalty0 (3):\penalty0 034515, 2019.

\bibitem[Batatia et~al.(2022)Batatia, Kov{\'a}cs, Simm, Ortner, and Cs{\'a}nyi]{Batatia2022MACE}
Batatia, I., Kov{\'a}cs, D.~P., Simm, G. N.~C., Ortner, C., and Cs{\'a}nyi, G.
\newblock Mace: Higher order equivariant message passing neural networks for fast and accurate force fields, 2022.

\bibitem[Berner et~al.(2022)Berner, Richter, and Ullrich]{berner2022optimal}
Berner, J., Richter, L., and Ullrich, K.
\newblock An optimal control perspective on diffusion-based generative modeling.
\newblock \emph{arXiv preprint arXiv:2211.01364}, 2022.

\bibitem[Bradbury et~al.(2018)Bradbury, Frostig, Hawkins, Johnson, Leary, Maclaurin, Necula, Paszke, Vander{P}las, Wanderman-{M}ilne, and Zhang]{jax2018github}
Bradbury, J., Frostig, R., Hawkins, P., Johnson, M.~J., Leary, C., Maclaurin, D., Necula, G., Paszke, A., Vander{P}las, J., Wanderman-{M}ilne, S., and Zhang, Q.
\newblock {JAX}: composable transformations of {P}ython+{N}um{P}y programs, 2018.
\newblock URL \url{http://github.com/jax-ml/jax}.

\bibitem[Brooks et~al.(2011)Brooks, Gelman, Jones, and Meng]{brooks2011mcmchandbook}
Brooks, S., Gelman, A., Jones, G., and Meng, X.-L.
\newblock \emph{Handbook of Markov Chain Monte Carlo}.
\newblock CRC press, 2011.

\bibitem[Cabezas et~al.(2024)Cabezas, Corenflos, Lao, and Louf]{cabezas2024blackjax}
Cabezas, A., Corenflos, A., Lao, J., and Louf, R.
\newblock Blackjax: Composable {B}ayesian inference in {JAX}, 2024.

\bibitem[Chen et~al.(2016)Chen, Georgiou, and Pavon]{chen2016relation}
Chen, Y., Georgiou, T.~T., and Pavon, M.
\newblock On the relation between optimal transport and schr{\"o}dinger bridges: A stochastic control viewpoint.
\newblock \emph{Journal of Optimization Theory and Applications}, 169:\penalty0 671--691, 2016.

\bibitem[Del~Moral et~al.(2006)Del~Moral, Doucet, and Jasra]{del2006sequential}
Del~Moral, P., Doucet, A., and Jasra, A.
\newblock Sequential monte carlo samplers.
\newblock \emph{Journal of the Royal Statistical Society Series B: Statistical Methodology}, 68\penalty0 (3):\penalty0 411--436, 2006.

\bibitem[Doucet et~al.(2001)Doucet, De~Freitas, and Gordon]{doucet2001introduction}
Doucet, A., De~Freitas, N., and Gordon, N.
\newblock An introduction to sequential monte carlo methods.
\newblock \emph{Sequential Monte Carlo methods in practice}, pp.\  3--14, 2001.

\bibitem[Duane et~al.(1987)Duane, Kennedy, Pendleton, and Roweth]{Duane1987}
Duane, S., Kennedy, A.~D., Pendleton, B.~J., and Roweth, D.
\newblock Hybrid monte carlo.
\newblock \emph{Physics Letters B}, 195\penalty0 (2):\penalty0 216--222, 1987.
\newblock ISSN 0370-2693.
\newblock \doi{10.1016/0370-2693(87)91197-X}.

\bibitem[Frans et~al.(2024)Frans, Hafner, Levine, and Abbeel]{frans2024one}
Frans, K., Hafner, D., Levine, S., and Abbeel, P.
\newblock One step diffusion via shortcut models.
\newblock \emph{arXiv preprint arXiv:2410.12557}, 2024.

\bibitem[Frenkel \& Smit(2023)Frenkel and Smit]{frenkel2023understanding}
Frenkel, D. and Smit, B.
\newblock \emph{Understanding molecular simulation: from algorithms to applications}.
\newblock Elsevier, 2023.

\bibitem[Geffner \& Domke(2018)Geffner and Domke]{geffner2018using}
Geffner, T. and Domke, J.
\newblock Using large ensembles of control variates for variational inference.
\newblock \emph{Advances in Neural Information Processing Systems}, 31, 2018.

\bibitem[Geffner \& Domke(2021)Geffner and Domke]{geffner2021mcmc}
Geffner, T. and Domke, J.
\newblock Mcmc variational inference via uncorrected hamiltonian annealing.
\newblock \emph{Advances in Neural Information Processing Systems}, 34:\penalty0 639--651, 2021.

\bibitem[Gerdes et~al.(2023)Gerdes, de~Haan, Rainone, Bondesan, and Cheng]{gerdes2023learning}
Gerdes, M., de~Haan, P., Rainone, C., Bondesan, R., and Cheng, M.~C.
\newblock Learning lattice quantum field theories with equivariant continuous flows.
\newblock \emph{SciPost Physics}, 15\penalty0 (6):\penalty0 238, 2023.

\bibitem[Gordon et~al.(1993)Gordon, Salmond, and Smith]{Gordon1993}
Gordon, N., Salmond, D., and Smith, A.
\newblock Novel approach to nonlinear/non-gaussian bayesian state estimation.
\newblock \emph{IEE Proceedings F (Radar and Signal Processing)}, 140\penalty0 (2):\penalty0 107--113, apr 1993.
\newblock \doi{10.1049/ip-f-2.1993.0015}.

\bibitem[Grathwohl et~al.(2018)Grathwohl, Chen, Bettencourt, Sutskever, and Duvenaud]{DBLP:journals/corr/abs-1810-01367}
Grathwohl, W., Chen, R. T.~Q., Bettencourt, J., Sutskever, I., and Duvenaud, D.
\newblock {FFJORD:} free-form continuous dynamics for scalable reversible generative models.
\newblock \emph{CoRR}, abs/1810.01367, 2018.
\newblock URL \url{http://arxiv.org/abs/1810.01367}.

\bibitem[Havens et~al.(2025)Havens, Miller, Yan, Domingo-Enrich, Sriram, Wood, Levine, Hu, Amos, Karrer, Fu, Liu, and Chen]{havens2025adjointsamplinghighlyscalable}
Havens, A., Miller, B.~K., Yan, B., Domingo-Enrich, C., Sriram, A., Wood, B., Levine, D., Hu, B., Amos, B., Karrer, B., Fu, X., Liu, G.-H., and Chen, R. T.~Q.
\newblock Adjoint sampling: Highly scalable diffusion samplers via adjoint matching, 2025.
\newblock URL \url{https://arxiv.org/abs/2504.11713}.

\bibitem[He et~al.(2024)He, Chen, Zhang, Barber, and Hern{\'a}ndez-Lobato]{he2024training}
He, J., Chen, W., Zhang, M., Barber, D., and Hern{\'a}ndez-Lobato, J.~M.
\newblock Training neural samplers with reverse diffusive kl divergence.
\newblock \emph{arXiv preprint arXiv:2410.12456}, 2024.

\bibitem[Ho et~al.(2020)Ho, Jain, and Abbeel]{ho2020denoising}
Ho, J., Jain, A., and Abbeel, P.
\newblock Denoising diffusion probabilistic models.
\newblock \emph{Advances in neural information processing systems}, 33:\penalty0 6840--6851, 2020.

\bibitem[Hutchinson(1989)]{hutchinson1989stochastic}
Hutchinson, M.~F.
\newblock A stochastic estimator of the trace of the influence matrix for laplacian smoothing splines.
\newblock \emph{Communications in Statistics-Simulation and Computation}, 18\penalty0 (3):\penalty0 1059--1076, 1989.

\bibitem[Jones(1924)]{Jones1924}
Jones, J.
\newblock On the determination of molecular fields. i. from the variation of the viscosity of a gas with temperature.
\newblock \emph{Proceedings of the Royal Society A: Mathematical, Physical and Engineering Sciences}, 106\penalty0 (738):\penalty0 441--462, 1924.
\newblock \doi{10.1098/rspa.1924.0081}.

\bibitem[Kahn(1950)]{kahn1950random}
Kahn, H.
\newblock Random sampling (monte carlo) techniques in neutron attenuation problems. i.
\newblock \emph{Nucleonics (US) Ceased publication}, 6\penalty0 (See also NSA 3-990), 1950.

\bibitem[Kidger(2021)]{kidger2021on}
Kidger, P.
\newblock \emph{{O}n {N}eural {D}ifferential {E}quations}.
\newblock PhD thesis, University of Oxford, 2021.

\bibitem[Kidger \& Garcia(2021)Kidger and Garcia]{kidger2021equinox}
Kidger, P. and Garcia, C.
\newblock {E}quinox: neural networks in {JAX} via callable {P}y{T}rees and filtered transformations.
\newblock \emph{Differentiable Programming workshop at Neural Information Processing Systems 2021}, 2021.

\bibitem[Klein et~al.(2024)Klein, Kr{\"a}mer, and No{\'e}]{klein2024equivariant}
Klein, L., Kr{\"a}mer, A., and No{\'e}, F.
\newblock Equivariant flow matching.
\newblock \emph{Advances in Neural Information Processing Systems}, 36, 2024.

\bibitem[Kloeden et~al.(1992)Kloeden, Platen, Kloeden, and Platen]{kloeden1992stochastic}
Kloeden, P.~E., Platen, E., Kloeden, P.~E., and Platen, E.
\newblock \emph{Stochastic differential equations}.
\newblock Springer, 1992.

\bibitem[K{\"o}hler et~al.(2020)K{\"o}hler, Klein, and Noe]{pmlr-v119-kohler20a}
K{\"o}hler, J., Klein, L., and Noe, F.
\newblock Equivariant flows: Exact likelihood generative learning for symmetric densities.
\newblock In III, H.~D. and Singh, A. (eds.), \emph{Proceedings of the 37th International Conference on Machine Learning}, volume 119 of \emph{Proceedings of Machine Learning Research}, pp.\  5361--5370. PMLR, 13--18 Jul 2020.
\newblock URL \url{https://proceedings.mlr.press/v119/kohler20a.html}.

\bibitem[Komanduri et~al.(2000)Komanduri, Chandrasekaran, and Raff]{komanduri2000md}
Komanduri, R., Chandrasekaran, N., and Raff, L.
\newblock Md simulation of nanometric cutting of single crystal aluminum--effect of crystal orientation and direction of cutting.
\newblock \emph{Wear}, 242\penalty0 (1-2):\penalty0 60--88, 2000.

\bibitem[Lipman et~al.(2022)Lipman, Chen, Ben-Hamu, Nickel, and Le]{lipman2022flow}
Lipman, Y., Chen, R.~T., Ben-Hamu, H., Nickel, M., and Le, M.
\newblock Flow matching for generative modeling.
\newblock \emph{arXiv preprint arXiv:2210.02747}, 2022.

\bibitem[Liu et~al.(2017)Liu, Feng, Mao, Zhou, Peng, and Liu]{liu2017action}
Liu, H., Feng, Y., Mao, Y., Zhou, D., Peng, J., and Liu, Q.
\newblock Action-depedent control variates for policy optimization via stein's identity.
\newblock \emph{arXiv preprint arXiv:1710.11198}, 2017.

\bibitem[Liu \& Chen(1998)Liu and Chen]{Liu1998}
Liu, J.~S. and Chen, R.
\newblock Sequential monte carlo methods for dynamic systems.
\newblock \emph{Journal of the American Statistical Association}, 93\penalty0 (443):\penalty0 1032--1044, 1998.
\newblock \doi{10.1080/01621459.1998.10473765}.

\bibitem[M{\'a}t{\'e} \& Fleuret(2023)M{\'a}t{\'e} and Fleuret]{mate2023learning}
M{\'a}t{\'e}, B. and Fleuret, F.
\newblock Learning interpolations between boltzmann densities.
\newblock \emph{arXiv preprint arXiv:2301.07388}, 2023.

\bibitem[Matthews et~al.(2022)Matthews, Arbel, Rezende, and Doucet]{matthews2022continual}
Matthews, A., Arbel, M., Rezende, D.~J., and Doucet, A.
\newblock Continual repeated annealed flow transport monte carlo.
\newblock In \emph{International Conference on Machine Learning}, pp.\  15196--15219. PMLR, 2022.

\bibitem[Midgley et~al.(2023)Midgley, Stimper, Simm, Sch{\"o}lkopf, and Hern{\'a}ndez-Lobato]{midgley2022flow}
Midgley, L.~I., Stimper, V., Simm, G.~N., Sch{\"o}lkopf, B., and Hern{\'a}ndez-Lobato, J.~M.
\newblock Flow annealed importance sampling bootstrap.
\newblock \emph{International Conference on Learning Representations (ICLR)}, 2023.

\bibitem[Máté \& Fleuret(2023)Máté and Fleuret]{máté2023learninginterpolationsboltzmanndensities}
Máté, B. and Fleuret, F.
\newblock Learning interpolations between boltzmann densities, 2023.
\newblock URL \url{https://arxiv.org/abs/2301.07388}.

\bibitem[Neal(1993)]{neal1993probabilistic}
Neal, R.~M.
\newblock Probabilistic inference using markov chain monte carlo methods.
\newblock 1993.

\bibitem[Neal(2001)]{neal2001annealed}
Neal, R.~M.
\newblock Annealed importance sampling.
\newblock \emph{Statistics and computing}, 11:\penalty0 125--139, 2001.

\bibitem[Nusken et~al.(2024)Nusken, Vargas, Padhy, and Blessing]{nusken2024transport}
Nusken, N., Vargas, F., Padhy, S., and Blessing, D.
\newblock Transport meets variational inference: Controlled monte carlo diffusions.
\newblock In \emph{The Twelfth International Conference on Learning Representations: ICLR 2024}, 2024.

\bibitem[OuYang et~al.(2024)OuYang, Qiang, and Hern{\'a}ndez-Lobato]{ouyang2024bnem}
OuYang, R., Qiang, B., and Hern{\'a}ndez-Lobato, J.~M.
\newblock Bnem: A boltzmann sampler based on bootstrapped noised energy matching.
\newblock \emph{arXiv preprint arXiv:2409.09787}, 2024.

\bibitem[Pavon(1989)]{pavon1989stochastic}
Pavon, M.
\newblock Stochastic control and nonequilibrium thermodynamical systems.
\newblock \emph{Applied Mathematics and Optimization}, 19:\penalty0 187--202, 1989.

\bibitem[Peebles \& Xie(2022)Peebles and Xie]{Peebles2022DiT}
Peebles, W. and Xie, S.
\newblock Scalable diffusion models with transformers.
\newblock \emph{arXiv preprint arXiv:2212.09748}, 2022.

\bibitem[Ranganath et~al.(2014)Ranganath, Gerrish, and Blei]{ranganath2014black}
Ranganath, R., Gerrish, S., and Blei, D.
\newblock Black box variational inference.
\newblock In \emph{Artificial intelligence and statistics}, pp.\  814--822. PMLR, 2014.

\bibitem[Rezende \& Mohamed(2015)Rezende and Mohamed]{rezende2015variational}
Rezende, D. and Mohamed, S.
\newblock Variational inference with normalizing flows.
\newblock In \emph{International conference on machine learning}, pp.\  1530--1538. PMLR, 2015.

\bibitem[Roberts \& Rosenthal(2001)Roberts and Rosenthal]{roberts2001optimal}
Roberts, G.~O. and Rosenthal, J.~S.
\newblock Optimal scaling for various metropolis-hastings algorithms.
\newblock \emph{Statistical science}, 16\penalty0 (4):\penalty0 351--367, 2001.

\bibitem[Sadegh et~al.(2024)Sadegh, Rector-Brooks, Bose, Mittal, Lemos, Liu, Sendera, Ravanbakhsh, Gidel, Bengio, Malkin, and Tong]{AkhoundSadegh2024IteratedDE}
Sadegh, T.~A., Rector-Brooks, J., Bose, A.~J., Mittal, S., Lemos, P., Liu, C.-H., Sendera, M., Ravanbakhsh, S., Gidel, G., Bengio, Y., Malkin, N., and Tong, A.
\newblock Iterated denoising energy matching for sampling from boltzmann densities.
\newblock \emph{ArXiv}, abs/2402.06121, 2024.
\newblock URL \url{https://api.semanticscholar.org/CorpusID:267617166}.

\bibitem[Satorras et~al.(2021)Satorras, Hoogeboom, and Welling]{DBLP:conf/icml/SatorrasHW21}
Satorras, V.~G., Hoogeboom, E., and Welling, M.
\newblock E(n) equivariant graph neural networks.
\newblock In Meila, M. and Zhang, T. (eds.), \emph{Proceedings of the 38th International Conference on Machine Learning, {ICML} 2021, 18-24 July 2021, Virtual Event}, volume 139 of \emph{Proceedings of Machine Learning Research}, pp.\  9323--9332. {PMLR}, 2021.
\newblock URL \url{http://proceedings.mlr.press/v139/satorras21a.html}.

\bibitem[Stein et~al.(2004)Stein, Diaconis, Holmes, and Reinert]{stein2004use}
Stein, C., Diaconis, P., Holmes, S., and Reinert, G.
\newblock Use of exchangeable pairs in the analysis of simulations.
\newblock \emph{Lecture Notes-Monograph Series}, pp.\  1--26, 2004.

\bibitem[Stein(1981)]{Stein1981}
Stein, C.~M.
\newblock Estimation of the mean of a multivariate normal distribution.
\newblock \emph{The Annals of Statistics}, 9\penalty0 (6):\penalty0 1135--1151, November 1981.
\newblock ISSN 0090-5364.
\newblock \doi{10.1214/aos/1176345632}.

\bibitem[Thi{\'e}baux \& Zwiers(1984)Thi{\'e}baux and Zwiers]{thiebaux1984interpretation}
Thi{\'e}baux, H.~J. and Zwiers, F.~W.
\newblock The interpretation and estimation of effective sample size.
\newblock \emph{Journal of Applied Meteorology and Climatology}, 23\penalty0 (5):\penalty0 800--811, 1984.

\bibitem[Tian et~al.(2024)Tian, Panda, and Lin]{tian2024liouville}
Tian, Y., Panda, N., and Lin, Y.~T.
\newblock Liouville flow importance sampler.
\newblock \emph{arXiv preprint arXiv:2405.06672}, 2024.

\bibitem[Tzen \& Raginsky(2019)Tzen and Raginsky]{tzen2019theoretical}
Tzen, B. and Raginsky, M.
\newblock Theoretical guarantees for sampling and inference in generative models with latent diffusions.
\newblock In \emph{Conference on Learning Theory}, pp.\  3084--3114. PMLR, 2019.

\bibitem[Van Der~Merwe et~al.(2000)Van Der~Merwe, Doucet, De~Freitas, and Wan]{van2000unscented}
Van Der~Merwe, R., Doucet, A., De~Freitas, N., and Wan, E.
\newblock The unscented particle filter.
\newblock \emph{Advances in neural information processing systems}, 13, 2000.

\bibitem[Vargas et~al.(2023)Vargas, Grathwohl, and Doucet]{vargas2023denoising}
Vargas, F., Grathwohl, W., and Doucet, A.
\newblock Denoising diffusion samplers.
\newblock \emph{arXiv preprint arXiv:2302.13834}, 2023.

\bibitem[Vaswani et~al.(2017)Vaswani, Shazeer, Parmar, Uszkoreit, Jones, Gomez, Kaiser, and Polosukhin]{NIPS2017_3f5ee243attention}
Vaswani, A., Shazeer, N., Parmar, N., Uszkoreit, J., Jones, L., Gomez, A.~N., Kaiser, L.~u., and Polosukhin, I.
\newblock Attention is all you need.
\newblock In Guyon, I., Luxburg, U.~V., Bengio, S., Wallach, H., Fergus, R., Vishwanathan, S., and Garnett, R. (eds.), \emph{Advances in Neural Information Processing Systems}, volume~30. Curran Associates, Inc., 2017.
\newblock URL \url{https://proceedings.neurips.cc/paper_files/paper/2017/file/3f5ee243547dee91fbd053c1c4a845aa-Paper.pdf}.

\bibitem[Villani et~al.(2009)]{villani2009optimal}
Villani, C. et~al.
\newblock \emph{Optimal transport: old and new}, volume 338.
\newblock Springer, 2009.

\bibitem[Woo \& Ahn(2024)Woo and Ahn]{woo2024iterated}
Woo, D. and Ahn, S.
\newblock Iterated energy-based flow matching for sampling from boltzmann densities.
\newblock \emph{arXiv preprint arXiv:2408.16249}, 2024.

\bibitem[Wu et~al.(2023)Wu, Zhu, Tan, Kartha, and Lu]{Wu_2023_rar_d}
Wu, C., Zhu, M., Tan, Q., Kartha, Y., and Lu, L.
\newblock A comprehensive study of non-adaptive and residual-based adaptive sampling for physics-informed neural networks.
\newblock \emph{Computer Methods in Applied Mechanics and Engineering}, 403:\penalty0 115671, January 2023.
\newblock ISSN 0045-7825.
\newblock \doi{10.1016/j.cma.2022.115671}.
\newblock URL \url{http://dx.doi.org/10.1016/j.cma.2022.115671}.

\bibitem[Wu et~al.(2020)Wu, K{\"o}hler, and No{\'e}]{wu2020stochastic}
Wu, H., K{\"o}hler, J., and No{\'e}, F.
\newblock Stochastic normalizing flows.
\newblock \emph{Advances in Neural Information Processing Systems}, 33:\penalty0 5933--5944, 2020.

\bibitem[Xie et~al.(2021)Xie, Shi, Zhou, Yang, Zhang, Yu, and Li]{xie2021mars}
Xie, Y., Shi, C., Zhou, H., Yang, Y., Zhang, W., Yu, Y., and Li, L.
\newblock Mars: Markov molecular sampling for multi-objective drug discovery.
\newblock \emph{arXiv preprint arXiv:2103.10432}, 2021.

\bibitem[Zhang et~al.(2021)Zhang, Hsu, Li, Finn, and Grosse]{zhang2021differentiable}
Zhang, G., Hsu, K., Li, J., Finn, C., and Grosse, R.~B.
\newblock Differentiable annealed importance sampling and the perils of gradient noise.
\newblock \emph{Advances in Neural Information Processing Systems}, 34:\penalty0 19398--19410, 2021.

\bibitem[Zhang \& Chen(2021)Zhang and Chen]{zhang2021path}
Zhang, Q. and Chen, Y.
\newblock Path integral sampler: a stochastic control approach for sampling.
\newblock \emph{arXiv preprint arXiv:2111.15141}, 2021.

\bibitem[Zhang \& Chen(2022)Zhang and Chen]{zhang2022pathintegralsamplerstochastic}
Zhang, Q. and Chen, Y.
\newblock Path integral sampler: a stochastic control approach for sampling, 2022.
\newblock URL \url{https://arxiv.org/abs/2111.15141}.

\end{thebibliography}
\bibliographystyle{sections-arxiv/icml}

\clearpage
\appendix
\newpage 
\appendix

\begin{center}
\LARGE
\textbf{Appendix for ``Neural Flow Samplers with \\Shortcut Models''}
\end{center}

\etocdepthtag.toc{mtappendix}
\etocsettagdepth{mtchapter}{none}
\etocsettagdepth{mtappendix}{subsection}
{\small \tableofcontents}

\section{Importance Sampling and Sequential Monte Carlo}

This section reviews the basic Sequential Monte Carlo (SMC) algorithm. We begin by introducing importance sampling and its application to estimating the intractable time derivative $\partial_t \log Z_t$, as presented in \cite{tian2024liouville}. We then proceed with an introduction to Sequential Monte Carlo, which is employed in our methods to estimate $\partial_t \log Z_t$.

\subsection{Importance Sampling} \label{sec:appendix_is}
Consider a target distribution $\pi (x) = \frac{\rho(x)}{Z}$, where $\rho(x) \geq 0$ is the unnormalised probability density and $Z = \int \rho(x) \dif x$ denotes the normalising constant, which is typically intractable. For a test function $\phi(x)$ of interest, estimating its expectation under $\pi$ through direct sampling can be challenging.
Importance sampling (IS) \citep{kahn1950random} instead introduces a proposal distribution $q$, which is easy to sample from, and proposes an expectation estimator as follows
\begin{align}
    \E_{\pi(x)} [\phi(x)] = \frac{1}{Z} \E_{q(x)} \left[ \frac{\rho(x)}{q(x)} \phi(x) \right] = \frac{\E_{q(x)}\left[ \frac{\rho(x)}{q(x)} \phi(x) \right]}{\E_{q(x)}\left[ \frac{\rho(x)}{q(x)} \right]}.
\end{align}
Thus, the expectation can be estimated via the Monte Carlo method
\begin{align}
    \E_{\pi(x)} [\phi(x)] \approx \sum_{k=1}^K \frac{w^{(k)}}{\sum_{=1}^N w^{(j)}} \phi(x^{(k)}), \quad x^{(k)} \sim q(x),
\end{align}
where $w^{(k)} = \frac{\rho(x^{(k)})}{q(x^{(k)})}$ denotes the importance weight. While importance sampling yields a consistent estimator as $N \rightarrow \infty$, it typically suffers from high variance and low effective sample size \citep{thiebaux1984interpretation} when the proposal deviates from the target distribution. 
In theory, a zero-variance estimator can be achieved if $q(x) \propto \rho(x) \phi(x)$; however, this condition is rarely satisfied in practice. This limitation renders importance sampling inefficient in high-dimensional spaces, as a large number of Monte Carlo samples are required to mitigate the variance.

\textbf{Approximating $\partial_t \log Z_t$ with Importance Sampling.}
\cite{tian2024liouville} propose approximating $\partial_t \log Z_t$ using importance sampling, where they express $\partial_t \log Z_t \approx \sum_k \frac{w_t^{(k)}}{\sum_k w_t^{(k)}} \partial_t \log \tilde{p}_t (x_t^{(k)})$. Here $x_t^{(k)} \sim p_t (x;\theta)$ denotes the sample generated by the velocity $v_t (x;\theta)$ at time $t$, and $\log w_t^{(k)} = \int_0^t \delta_\tau (x_\tau;v_t(\cdot;\theta)) \dif \tau$. For completeness, we provide a step-by-step recall of the proof of the correctness of this estimator from \cite{tian2024liouville}.

First, we show that $\partial_t \log Z_t$ is given by the expectation over $p_t$:
\begin{align}
    \partial_t \log Z_t = \partial_t \log \int \tilde{p}_t (x) \dif x = \frac{1}{Z_t} \int \tilde{p}_t(x) \partial_t \log \tilde{p}_t(x) \dif x = \E_{p_t(x)} [\partial_t \log \tilde{p}_t (x)].
\end{align}
$\partial_t \log Z_t$ therefore can be estimated via importance sampling
\begin{align}
    \partial_t \log Z_t \approx \sum_{k=1}^K \frac{w_t^{(k)}}{\sum_j w_t^{(j)}} \partial_t \log \tilde{p}_t(x^{(k)}), \quad x^{(k)} \sim p_t (x;\theta),
\end{align}
where $w_t^{(k)} = \frac{p_t(x^{(k)})}{p_t(x^{(k)}; \theta)}$. Next, we show that the weight $\log w_t^{(k)}$ is given by integrating $\delta_t(x;v_t(\cdot;\theta))$ on $[0,t]$, where $\delta_t$ is defined in \cref{eq:nfs_loss} i.e. $\log \frac{p_t(x_t)}{p_t(x_t;\theta)} = \int_0^t \delta_s(x;v_s(\cdot;\theta)) \dif s$.

We begin by computing the instantaneous rate of change of the log-densities of the model $p_t(x_t;\theta)$ along the trajectories generated by $v_t(x_t; \theta)$, as in the instantaneous change of variable formula in \cref{eq:total-derivative-colored}
\begin{align}
    \partial_t [\log p_t(x_t; \theta)] 
    &= \partial_t \log p_t(x_t;\theta) + \nabla_{x_t} \log p_t(x_t;\theta) \cdot \frac{\dif x_t}{\dif t} \nonumber \\
    &= (-\nabla_{x_t} \cdot v_t(x_t;\theta) - v_t(x_t;\theta) \cdot \nabla_{x_t} \log p_t(x_t;\theta)) + \nabla_{x_t} \log p_t(x_t;\theta) \cdot v_t(x_t;\theta) \nonumber \\
    &= -\nabla_{x_t} \cdot v_t(x_t; \theta).
\end{align}
Similarly, the instantaneous rate of change of the log-densities of the target $p_t(x_t)$ along the trajectories generated by $v_t(x_t; \theta)$ is
\begin{align}
    \partial_t [\log p_t(x_t)] 
    &= \partial_t \log p_t(x_t) + \nabla_{x_t} \log p_t(x_t) \cdot \frac{\dif x_t}{\dif t} \nonumber \\
    &= \partial_t \log p_t(x_t) + \nabla_{x_t} \log p_t(x_t) \cdot v_t(x_t; \theta).
\end{align}
Note that the score of the target is tractable $\nabla_x \log p_t(x) = \nabla_x \log \tilde{p}_t(x)$. Therefore, the log densities along the trajectories can be computed via
\begin{align}
    \log p_t(x_t; \theta) &= \log p_0 (x_0;\theta) - \int_0^t  \nabla_{x_s} \cdot v_s(x_s; \theta) \dif s\\
    \log p_t(x_t) &=  \log p_0 (x_0) + \int_0^t [\partial_s \log p_s(x_s) + \nabla_{x_s} \log p_s(x_s) \cdot v_s(x_s; \theta)] \dif s.
\end{align}
Since $p_0(x; \theta) = p_0(x) = \eta (x)$ due to the annealing path construction $p_t \propto \rho^t \eta^{1-t}$, we have
\begin{align}
    \log \frac{p_t(x_t)}{p_t(x_t;\theta)} 
    &= \int_0^t [ \nabla_{x_s} \cdot v_s(x_s; \theta) + \partial_s \log p_s(x_s) + \nabla_{x_s} \log p_s(x_s) \cdot v_s(x_s; \theta) ] \dif s \nonumber \\
    &= \int_0^t \delta_s(x;v_s(\cdot;\theta)) \dif s,
\end{align}
which completes the proof.

\textbf{Remark.} Although importance sampling offers an elegant method for estimating the intractable time derivatives $\partial_t \log Z_t$, it faces two main challenges. First, as previously discussed, importance sampling typically suffers from high variance and requires large sample sizes to improve the effective sample size. More critically, the computation of the weight involves the intractable term $\partial_t \log p_s(x_t)$, which in turn depends on $\partial_t \log Z_t = \E_{p_t(x)}[\partial_t \log \tilde{p}_t(x)]$. \cite{tian2024liouville} approximate this expectation naively by averaging $\partial_t \log \tilde{p}_t(x)$ over the mini-batch during training, which introduces both approximation errors and bias in the importance weights.

In the following section, we introduce Sequential Monte Carlo, which is employed in our methodology to estimate $\partial_t \log Z_t$, balancing the efficiency of the short-run MCMC driven by the velocity with the effectiveness of low variance.

\subsection{Sequential Monte Carlo} \label{sec:appendix-smc}
Sequential Monte Carlo (SMC) provides an alternative method to estimate the intractable expectation $\E_{p_t (x)} [\phi(x)]$.
Specifically, SMC decomposes the task into easier subproblems involving a set of unnormalised intermediate target distributions $\{\tilde{p}_{t_m} (x_{t_m})\}_{m=0}^M$.\footnote{We consider a discrete-time schedule that satisfies $0=t_0 < t_1 < \dots < t_m < \dots < t_{M-1} < t_M = 1$.}
We begin by introducing sequential importance sampling (SIS):
\begin{align}
    \E_{p_{t_m} (x)} [\phi(x)] 
    &= \int q(x_{t_0:t_m}) \frac{p(x_{t_0:t_m})}{q(x_{t_0:t_m})} \phi(x_{t_m}) \dif x_{t_0:t_m} \nonumber \\
    &\approx \frac{1}{K} \sum_{k=1}^K \frac{p(x_{t_0:t_m}^{(k)})}{q(x_{t_0:t_m}^{(k)})} \phi(x_{t_m}^{(k)}), \quad \text{where} \quad x_{t_0:t_m}^{(k)} \sim q(x_{t_0:t_m}^{(k)}).
\end{align}
The importance weights are $w_{t_m}^{(k)} = \frac{p(x_{t_0:t_m}^{(k)})}{q(x_{t_0:t_m}^{(k)})}$.
The key ingredients of SMC are the proposal distributions $q(x_{t_0:t_m})$ and the target distributions $p(x_{t_0:t_m})$.
Here we consider a general form associated with a sequence of forward kernels $q(x_{t_0:t_m}) = q(x_{t_0})\prod_{s=0}^{m-1} \mathcal{F}_{t_s} (x_{t_{s+1}}|x_{t_s})$, and the target distribution is defined by a sequence of backward kernels $p(x_{t_0:t_m}) = p(x_{t_m}) \prod_{s=0}^{m-1}\mathcal{B}_{t_s} (x_{t_s} | x_{t_{s+1}})$. Substituting this into the expression for the importance weights gives
\begin{align}
    w_{t_m}^{(k)} = \frac{p(x_{t_0:t_m}^{(k)})}{q(x_{t_0:t_m}^{(k)})} = \frac{p(x_{t_m})\prod_{s=0}^{m-1}\mathcal{B}_{t_s} (x_{t_s} | x_{t_{s+1}})}{q(x_{t_0})\prod_{s=0}^{m-1} \mathcal{F}_{t_s} (x_{t_{s+1}}|x_{t_s})} = w_{t_{m-1}}^{(k)} W_{t_m}^{(k)},
\end{align}
where $W_{t_m}^{(k)}$, termed the incremental weights, are calculated as,
\begin{align}
    W_{t_m}^{(k)} = \frac{p_{t_m}(x_{t_m})}{p_{t_{m-1}}(x_{t_{m-1}})} \frac{\mathcal{B}_{t_m}(x_{t_{m-1}}|x_{t_m})}{\mathcal{F}_{t_m}(x_{t_m} | x_{t_{m-1}})}.
\end{align}
By defining the backward kernel as $\mathcal{B}_{t_m} (x_{t_{m-1}}|x_{t_m}) = \frac{p_{t_{m-1}}(x_{t_{m-1}})\mathcal{F}_{t_m}(x_{t_m} | x_{t_{m-1}})}{p_{t_{m-1}} (x_{t_m})}$, the incremental weight is tractable and becomes
\begin{align}
    W_{t_m}^{(k)} = \frac{p_{t_m}(x_{{t_m}})}{p_{t_{m-1}}(x_{{t_m}})}.
\end{align}
Therefore, the expectation can be approximated as
\begin{align}
    \E_{p_{t_m}(x)} [\phi(x)] \approx \sum_k \tilde{w}^{(k)}_{t_m} \phi(x_{t_m}), \quad  \tilde{w}^{(k)}_{t_m} = \frac{{w}^{(k)}_{t_m}}{\sum_j {w}^{(j)}_{t_m}}, \quad w_{t_m}^{(k)} = w_{t_{m-1}}^{(k)} W_{t_m}^{(k)} \propto w_{t_{m-1}}^{(k)} \frac{\tilde{p}_{t_m}(x_t)}{\tilde{p}_{t_{m-1}}(x_{t_m})}. \nonumber
\end{align}
The SIS method is elegant, as the weights can be computed on the fly. However, with a straightforward application of SIS, the distribution of importance weights typically becomes increasingly skewed as $t$ progresses, resulting in many samples having negligible weights. This imbalance reduces the effective sample size and the overall efficiency of the algorithm.
To alleviate this issue, a common approach used in SMC is to introduce a resampling step. At each time step $t$, the samples $x_t^{(k)}$ are resampled using systematic resampling method based on the normalized weights $\tilde{w}_t^{(k)}$\footnote{Rather than resampling at every time step $t$, a more advanced resampling method involves making the resampling decision adaptively, based on criteria such as the Effective Sample Size \citep{doucet2001introduction}.}. 
\begin{wrapfigure}{r}{0.57\linewidth}
\vspace{-4mm}
\centering
    \begin{minipage}[t]{\linewidth}
    \centering
        \begin{algorithm}[H]
        \setstretch{1.33}
        \caption{Velocity-Driven SMC} \small
        \label{alg:SMC} 
        \textbf{Input}: velocity $v_t(\cdot;\theta)$; sample size $K$; time steps $\{t_m\}_{m=0}^M$ \\
        \textbf{Output}: samples and weights $\{ x_{t_m}^{(k)}, \tilde{w}_{t_m}^{(k)} \}_{k=1, m=0}^{K, M}$
        \begin{algorithmic}[1] 
        \Procedure{VD-SMC}{$v_t, K, \{t_m\}_{m=0}^M$}
            \For{$k = 1, \dots, K$}
                \State $x_0^{(k)} \sim p_0 (x_0), \quad w_0^{(k)} = p_0 (x_0^{(k)})$
            \EndFor
            \State $\tilde{w}_0^{(k)} = w_0^{(k)} / \sum_{i=1}^K w_0^{(i)}$
            \For{$m = 1, \dots, M$}
                \For{$k = 1, \dots, K$}
                    \If{ESS($\tilde{w}_{t_{m-1}}^{1:K}$) $< \mathrm{ESS}_{\text{min}}$}
                        \State $\!\!\!a_{t_m}^{(k)} \!\sim\! \mathrm{Systematic}(\tilde{w}_{t_{m-1}}^{1:K}), \!\!\!\!\!\quad \hat{w}_{t_{m-1}}^{(k)} \!=\! 1$
                    \Else
                        \State $a_{t_m}^{(k)} = k, \quad \hat{w}_{t_{m-1}}^{(k)} = w_{t_{m-1}}^{(k)}$
                    \EndIf
                    \State $\dif t \leftarrow t_m - t_{m-1}$
                    \State $x_{t_m}^{(k)} \!=\! \mathrm{HMC}(x_{t_{m-1}}^{(a_{t_m}^{(k)})} \!\!\!+\! \eta v_{t_{m\!-\!1}}(x_{t_{m-1}}^{(a_{t_m}^{(k)})};\theta) \dif t)$
                    \State $w_{t_m}^{(k)} = \hat{w}_{t_{m-1}}^{(k)} \frac{\tilde{p}_{t_m} (x_{t_m}^{(k)})}{\tilde{p}_{t_{m-1}} (x_{t_m}^{(k)})}$
                \EndFor
                \State $\tilde{w}_{t_m} = w_{t_m}^{(k)} / \sum_{i=1}^K w_{t_m}^{(i)}$
            \EndFor
        \EndProcedure
        \end{algorithmic}
        \end{algorithm}
    \end{minipage}
\vspace{-5mm}
\end{wrapfigure}
The resampled particles are then assigned equal weights to mitigate the bias introduced by the skewness in the weight distribution.
This resampling trick prevents the sample set from degenerating, where only a few particles carry significant weight while others contribute minimally. By periodically resampling, the algorithm maintains diversity in the particle set. It ensures that the estimation is focused on regions of high probability density, leading to less skewed importance weight distributions.
To encourage the convergence of MCMC transition kernels, we also introduce a velocity-driven step.
The implementation of the proposed velocity-driven sequential Monte Carlo (VD-SMC) is given by \cref{alg:SMC}.
Given the sample size $K$ and the time schedule $\{t_m\}_{m=1}^M$ with $t_0=0, t_M=1$, the algorithm VD-SMC returns the samples and importance weights $\{ x_{t_m}^{(k)}, \tilde{w}_{t_m}^{(k)} \}_{k=1, m=0}^{K, M}$.
Therefore, the intractable time derivative can be approximated as $\partial_t \log Z_t = \E_{p_t} \partial_t \log \tilde{p}_t(x) \approx \sum_k \tilde{w}_t^{(k)} \partial_t \log \tilde{p}_t(x_t^{(k)})$. However, as illustrated in \cref{fig:std_dt_logZt}, the estimation of $\E_{p_t} \partial_t \log \tilde{p}_t(x)$ exhibits higher variance compared to using $\E_{p_t} \xi_t(x;v_t)$. Therefore, in practice, we approximate the time derivative as $\partial_t \log Z_t = \E_{p_t} \xi_t(x;v_t(\cdot;\theta_{\mathrm{sg}})) \approx \sum_k \tilde{w}_t^{(k)} \xi_t(x_t^{(k)};v_t(\cdot;\theta_{\mathrm{sg}}))$, where $\theta_{\mathrm{sg}}$ denotes the parameters with gradients detached.

\section{Variance Reduction with Control Variates}

\subsection{Proof of \cref{eq:partial_logZ_fpi}} \label{sec:appendix-proof_partial_logZ}
Recall that in \cref{eq:partial_logZ_fpi}, we show that the following equation holds:
\begin{align}
    \!\!\partial_t \log Z_t \!\!=\!\! \argmin_{c_t} \!\E_{p_t} (\xi_t(x;v_t) \!-\! c_t)^2, \xi_t(x;v_t) \!\triangleq\! \partial_t \log \tilde{p}_t (x) \!+\! \nabla_x \!\cdot\! v_t (x) \!+\! v_t (x) \!\cdot\! \nabla_x \log p_t (x). \nonumber
\end{align}
We provide a detailed proof of this result here. First, we have the following Lemmas.
\begin{lemma}[Stein's Identity \citep{stein2004use}] \label{lemma:divergence}
    Assuming that the target density $p_t$ vanishes at infinity, i.e., $p_t(x) = 0$, whenever $\exists d$ such that $x[d] = \infty$, where $x[d]$ denotes the $d$-th element of the vector $x$. Under this assumption, we have the result: $\int [\nabla_x \!\cdot\! v_t (x) \!+\! v_t (x) \!\cdot\! \nabla_x \log p_t (x)]\tilde{p}_t (x) \dif x = 0$.  
\end{lemma}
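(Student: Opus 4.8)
The plan is to recognise the integrand as a total divergence and then invoke the divergence theorem, so that a vanishing boundary flux annihilates the integral. First I would exploit the fact that the normalising constant $Z_t$ is independent of $x$, so that $\nabla_x \log p_t(x) = \nabla_x \log \tilde{p}_t(x)$ and consequently $\tilde{p}_t(x)\,\nabla_x \log p_t(x) = \nabla_x \tilde{p}_t(x)$. This reduces everything to the unnormalised density $\tilde{p}_t$ that already appears in the integral.

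The key algebraic step is the product rule for the divergence of a scalar field times a vector field. Concretely,
\begin{align}
\nabla_x \cdot [\tilde{p}_t(x) v_t(x)] &= \tilde{p}_t(x)\, \nabla_x \cdot v_t(x) + v_t(x) \cdot \nabla_x \tilde{p}_t(x) \nonumber \\
&= \tilde{p}_t(x)\, [\nabla_x \cdot v_t(x) + v_t(x) \cdot \nabla_x \log p_t(x)], \nonumber
\end{align}
where the second line substitutes $\nabla_x \tilde{p}_t = \tilde{p}_t\, \nabla_x \log p_t$ from the observation above. Hence the entire integrand of the Lemma is exactly $\nabla_x \cdot [\tilde{p}_t v_t]$, a total divergence.

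It then remains to show $\int_{\mathbb{R}^d} \nabla_x \cdot [\tilde{p}_t(x) v_t(x)] \dif x = 0$. I would apply the divergence theorem on a ball $B_R$ of radius $R$ and let $R \to \infty$, converting the volume integral into the surface flux $\oint_{\partial B_R} \tilde{p}_t(x)\, v_t(x) \cdot \hat{n} \, \dif S$. Using the stated hypothesis that $\tilde{p}_t$ — equivalently $p_t$, since the two differ only by the constant $Z_t$ — vanishes at infinity, this flux tends to zero as $R \to \infty$, which gives the claim.

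The main obstacle is making the boundary-flux argument rigorous: the hypothesis as stated controls only the decay of the density, not the growth of $v_t$. To guarantee the surface term vanishes one genuinely needs the \emph{product} $\tilde{p}_t v_t$ to decay at infinity, not $\tilde{p}_t$ on its own. I would therefore state explicitly the mild additional regularity condition on $v_t$ (for instance, $v_t$ growing at most sub-exponentially, so that $\tilde{p}_t v_t$ is integrable with vanishing flux at infinity) under which Stein's identity holds; this is precisely the ``mild assumptions'' invoked in the main text.
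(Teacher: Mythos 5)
Your proof follows essentially the same route as the paper's: rewrite $\tilde{p}_t(x)\,\nabla_x \log p_t(x)$ as $\nabla_x \tilde{p}_t(x)$, recognise the integrand as the total divergence $\nabla_x \cdot [\tilde{p}_t(x) v_t(x)]$ via the product rule, and kill the integral with a vanishing boundary term (the paper does this coordinate-wise by the fundamental theorem of calculus, you via the flux through $\partial B_R$ as $R\to\infty$ --- an immaterial difference). Your closing remark is a fair and slightly more careful point than the paper makes: the stated hypothesis controls only $p_t$, whereas the boundary term that must vanish involves the product $\tilde{p}_t v_t$, so a mild growth condition on $v_t$ is indeed implicitly being assumed.
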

\begin{proof}
    To prove the result, notice that
    \begin{align}
        \int [\nabla_x \!\cdot\! v_t (x) \!+\! v_t (x) \!\cdot\! \nabla_x \log p_t (x)]\tilde{p}_t (x) \dif x
        &= \int  \tilde{p}_t (x) \nabla_x \!\cdot\! v_t (x) \!+\! v_t (x) \!\cdot\! \nabla_x \tilde{p}_t (x)  \dif x \nonumber \\
        &= \int \nabla_x \cdot [v_t(x) \tilde{p}_t (x)] \dif x \nonumber \\
        &= \sum_d \int \frac{\dif}{\dif x_d} [v_t(x) \tilde{p}_t (x)][d] \dif x_d \nonumber \\
        &= \sum_d \left. [v_t(x) \tilde{p}_t (x)][d] \right|_{x_d = -infty}^{x_d = +\infty} = 0, \nonumber
    \end{align}
    where the last row applies the divergence theorem $\int_a^b f^\prime (t) \dif t = f(b) - f(a)$.
\end{proof}
\begin{lemma} \label{lemma:l2}
    Let $c_t^* = \argmin_{c_t} \E_{p_t} (\xi_t(x;v_t) \!-\! c_t)^2$, then $c_t^* = \E_{p_t}\xi_t(x;v_t)$.
\end{lemma}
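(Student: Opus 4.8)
The plan is to recognize Lemma \ref{lemma:l2} as the elementary fact that, among all constants, the mean minimizes the expected squared deviation. I would treat $\xi_t(x;v_t)$ as a fixed (square-integrable) random variable for the given $t$ and $v_t$, so that the objective $f(c_t) \triangleq \E_{p_t}(\xi_t(x;v_t) - c_t)^2$ is a well-defined, finite, strictly convex quadratic in the scalar $c_t$. The whole content is then a one-line optimization over $c_t$.

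The cleanest route is a bias--variance decomposition. Writing $\mu_t \triangleq \E_{p_t}\xi_t(x;v_t)$ and inserting $\pm \mu_t$ inside the square,
\begin{align}
f(c_t) = \E_{p_t}\left[(\xi_t(x;v_t) - \mu_t) + (\mu_t - c_t)\right]^2 = \E_{p_t}(\xi_t(x;v_t) - \mu_t)^2 + (\mu_t - c_t)^2, \nonumber
\end{align}
where the cross term vanishes because $\E_{p_t}[\xi_t(x;v_t) - \mu_t] = 0$ while $(\mu_t - c_t)$ is a constant pulled out of the expectation. The first summand is the variance of $\xi_t$ under $p_t$ and is independent of $c_t$, so $f(c_t)$ is minimized exactly when the nonnegative term $(\mu_t - c_t)^2$ vanishes, i.e.\ at $c_t^\ast = \mu_t = \E_{p_t}\xi_t(x;v_t)$, which is the claim.

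As a cross-check I would also differentiate: since $f$ is a smooth quadratic, $f'(c_t) = -2\E_{p_t}[\xi_t(x;v_t) - c_t] = -2(\mu_t - c_t)$ has the unique stationary point $c_t = \mu_t$, and $f''(c_t) = 2 > 0$ confirms it is the global minimizer. There is no genuine obstacle here; the only point requiring mild care is to justify that $\xi_t(x;v_t)$ is square-integrable under $p_t$ so that the expectations (and the finiteness of the variance term) are well defined — under the standing regularity assumptions on $\tilde{p}_t$ and $v_t$ used elsewhere, notably in Lemma \ref{lemma:divergence}, this holds, and the rest is routine. Combined with Lemma \ref{lemma:divergence}, which forces $\E_{p_t}\xi_t(x;v_t) = \partial_t \log Z_t$, this lemma immediately yields \cref{eq:partial_logZ_fpi}.
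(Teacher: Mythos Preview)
Your proof is correct and takes essentially the same approach as the paper: both arguments complete the square to express the objective as a constant-in-$c_t$ term plus $(c_t - \E_{p_t}\xi_t)^2$, from which the minimizer is read off directly. Your bias--variance phrasing and the derivative cross-check are slight elaborations, but the core idea matches the paper's one-line expansion.
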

\begin{proof}
    To see this, we can expand the objective
    \begin{align}
        \mathcal{L}(c_t) 
        = \E_{p_t} (\xi_t(x;v_t) \!-\! c_t)^2 
        = c_t^2 - 2c_t\E_{p_t} \xi_t(x;v_t) + \mathrm{c}
        = (c_t - \E_{p_t} \xi_t(x;v_t))^2 + \mathrm{c}', \nonumber
    \end{align}
    where $c, c'$ are constants w.r.t. $c_t$. Therefore $c_t^* \!=\! \argmin_{c_t}\! \E_{p_t} (\xi_t(x;v_t) \!-\! c_t)^2 \!=\! \E_{p_t}\xi_t(x;v_t)$.
\end{proof}
Now, it is ready to prove \cref{eq:partial_logZ_fpi}. Specifically,
\begin{align}
    c_t^* 
    &= \E_{p_t}\xi_t(x;v_t) \nonumber \\
    &= \frac{1}{\int \tilde{p}_t(x) \dif x} \int \tilde{p}_t(x) [\partial_t \log \tilde{p}_t (x) \!+\! \nabla_x \!\cdot\! v_t (x) \!+\! v_t (x) \!\cdot\! \nabla_x \log p_t (x)] \dif x \nonumber \\
    &= \frac{1}{\int \tilde{p}_t(x) \dif x} \int \partial_t \tilde{p}_t (x) + [\nabla_x \!\cdot\! v_t (x) \!+\! v_t (x) \!\cdot\! \nabla_x \log p_t (x)]\tilde{p}_t (x) \dif x \nonumber \\
    &= \frac{1}{\int \tilde{p}_t(x) \dif x}  \int \partial_t \tilde{p}_t (x) \dif x  \nonumber \\
    &= \partial_t \log Z_t, \nonumber
\end{align}
where the first and fourth equations follow \cref{lemma:divergence,lemma:l2}, respectively, which completes the proof.

\begin{wrapfigure}{r}{0.42\linewidth}
    \centering
    \includegraphics[width=.39\textwidth]{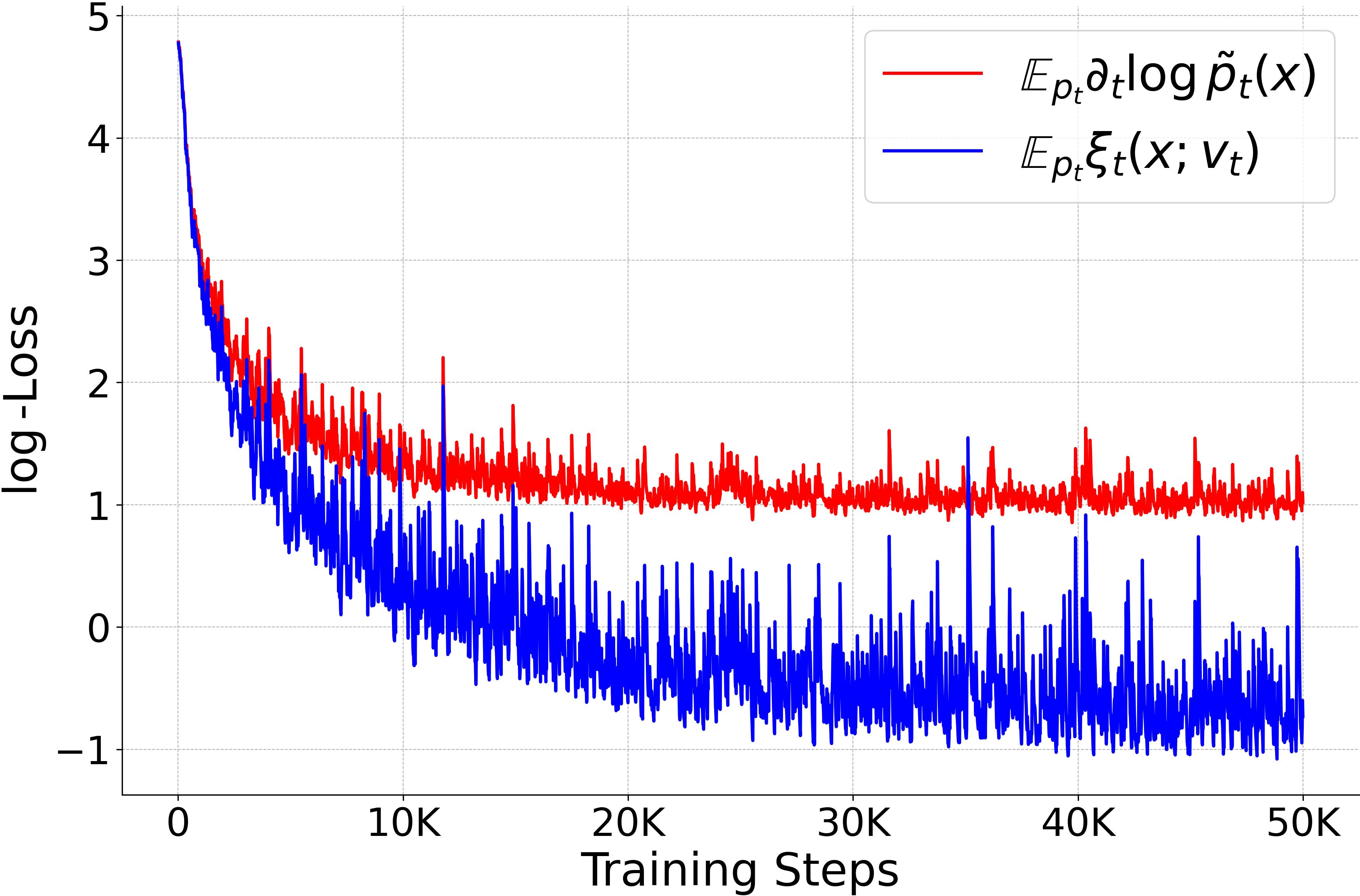}
    \vspace{-3mm}
    \caption{Training loss of using different estimators of $\partial_t \log Z_t$.}
    \label{fig:loss}
    \vspace{-4mm}
\end{wrapfigure}
\textbf{Remark.} \cref{eq:partial_logZ_fpi} provides an alternative approach to estimate $\partial_t \log Z_t$. As illustrated in \cref{fig:std_dt_logZt}, this estimation exhibits lower variance compared to using $\E_{p_t} \partial_t \log \tilde{p}_t (x)$. This reduction in variance can potentially lead to better optimisation. To evaluate this, we conducted experiments on GMM datasets by minimizing the loss in \cref{eq:nfs_loss}, employing two different methods to estimate $\partial_t \log Z_t$: $\E_{p_t} \partial_t \log \tilde{p}_t (x)$ and $\E_{p_t} \xi_t (x;v_t)$. The loss values during training are plotted against the training steps in \cref{fig:loss}. The results show that the estimator of $\E_{p_t} \xi_t (x;v_t)$ achieves lower loss values, highlighting the superior training effects achieved with the lower variance estimation of $\partial_t \log Z_t$.

\subsection{Stein Control Variates} \label{sec:appendix-control-variates}
In this section, we provide a perspective from control variates to explain the observation of variance reduction in \cref{fig:std_dt_logZt}.
In particular, consider a Monte Carlo integration problem $\mu = \E_\pi [f (x)]$, which can be estimated as $\hat{\mu} = \frac{1}{K} \sum_{k=1}^K f(x^{(k)}), x^{(k)} \sim \pi$.
Assuming another function exists with a known mean $\gamma = \E_\pi [g(x)]$, we call $g$ the control variate. We then can construct another estimator $\check{\mu} = \frac{1}{K} \sum_{k=1}^K (f(x^{(k)}) - \beta g(x^{(k)})) + \beta \gamma$, where $\beta$ is a scalar coefficient and controls the scale of
the control variate. It is obvious that $\E [\check{\mu}] = \E [\hat{\mu}] = \mu, \forall \beta \in \mathbb{R}$. Moreover,  we can choose a $\beta$ to minimize the variance of $\check{\mu}$. To obtain it, we first derive the variance of $\check{\mu}$
\begin{align} \label{eq:variance_mu_cv}
    \mathbb{V}[\check{\mu}] = \frac{1}{K} (\mathbb{V} [f] - 2\beta \mathrm{Cov}(f, g) + \beta^2\mathbb{V} [g]).
\end{align}
Since $ \mathbb{V}[\check{\mu}]$ is convex w.r.t. $\beta$, by differentiating it w.r.t. $\beta$ and zeroing it, we find the optimal value, $\beta^* = \mathrm{Cov}(f, g) / \mathbb{V}[g]$. Substituting it into \cref{eq:variance_mu_cv}, we get the minimal variance 
\begin{align}
    \mathbb{V}[\check{\mu}] = \frac{1}{K} \mathbb{V}[\hat{\mu}](1 - \mathrm{Corr}(f, g)^2).
\end{align}
This shows that, given the optimal value $\beta^*$, any function $g$ that correlates to $f$, whether positively or negatively, reduces the variance of the estimator, i.e., $\mathbb{V}[\check{\mu}] < \mathbb{V}[\hat{\mu}]$. 
In practice, the optimal $\beta^*$ can be estimated from a small number of samples \citep{ranganath2014black}. However, the primary challenge lies in finding an appropriate function $g$. For a detailed discussion on control variates, see \cite{geffner2018using}.

Fortunately, \cref{lemma:divergence}  offers a systematic way to construct a control variate for $\E_{p_t} [f(x)] \triangleq \E_{p_t}[\partial_t \log \tilde{p}_t(x)] \approx \frac{1}{K} \sum_{k=1}^K \partial_t \log \tilde{p}_t(x^{(k)})$, where $x^{(k)} \sim p_t$. Specifically, we define $g(x) = \nabla_x \cdot v_t(x;\theta) + v_t(x;\theta) \cdot \nabla_x \log p_t (x)$, from which we have $\gamma = \E_{p_t}[g(x)] = 0$. Using this, we construct a new estimator:
\begin{align}
    \check{\mu} \!=\! \frac{1}{K} \sum_{k=1}^K \partial_t \log \tilde{p}_t(x^{(k)}) \!+\! \beta^* (\nabla_x \cdot v_t(x^{(k)};\theta) \!+\! v_t(x^{(k)};\theta) \cdot \nabla_x \log p_t (x^{(k)})), \quad x^{(k)} \!\sim\! p_t.
\end{align}
Moreover, when $\theta$ is optimal, \cref{eq:nfs_loss} equals zero, implying $g(x) = -f(x) + c$, where $c$ is a constant independent of the sample $x$. In this case, $\mathrm{Corr}(f, g) = -1$, and $\check{\mu}$ becomes a zero-variance estimator.
As an additional clarification, Stein's identity from \cref{lemma:divergence} is also employed as a control variate in \cite{liu2017action}, where it is utilized to optimise the policy in reinforcement learning.

\begin{algorithm}[!t]
\caption{Training Procedure of \ours (one training epoch only for illustration)}
\label{alg:nfs_training_updated}
\textbf{Input}: initial shortcut model $s_t (\cdot,\cdot;\theta)$, time spans $\{t_m\}_{m=0}^M$, regularisation weight $\lambda$, learning rate $\eta$ \\
\textbf{Output}: trained shortcut model $s_t (\cdot,\cdot;\theta)$
\begin{algorithmic}[1]
    \State $\tilde{t}_0 \leftarrow 0, \tilde{t}_m \sim \mathcal{U}([t_{m-1}, t_{m}]), m = 1,\dots,M$  \textcolor{gray}{\Comment{Sample time steps}}
    \State \!\!$\{ x_{\tilde{t}_m}^{(k)}\!, \tilde{w}_{\tilde{t}_m}^{(k)} \}_{k=1, m=0}^{K, M} \!\!\leftarrow\!\! \mathrm{VD-SMC}(s_t (\cdot,0;\theta), K,  \{\tilde{t}_m\}_{m=0}^M)$  \textcolor{gray}{\Comment{Generate samples using \cref{alg:SMC}}}
    \For{$t, \{x_t, \tilde{w}_t\} \sim \{ x_{\tilde{t}_m}^{(k)}, \tilde{w}_{\tilde{t}_m}^{(k)} \}_{k=1, m=0}^{K, M}$} \textcolor{gray}{\Comment{Executed with mini-batch in parallel practically}}
        \State \!\!\!$\xi_t \!\leftarrow\! \partial_t \log \tilde{p}_t (x_t) \!+\! \nabla_x \!\cdot\! s_t (x_t,0;\theta) \!+\! s_t (x_t,0;\theta) \!\cdot\! \nabla_x \log p_t (x_t)$
        \State \!\!\!$c_t \!\leftarrow \!\!\sum_{k} \! \tilde{w}_t^{(k)} \!\!\left[ \!\partial_t \log \tilde{p}_t (x_t^{(k)}\!) \!+\! \nabla_x \!\cdot\! s_t (x_t^{(k)}\!\!\!,0;\theta_{\mathrm{sg}}) \!+\! s_t (x_t^{(k)}\!\!\!,0;\theta_{\mathrm{sg}}) \!\cdot\! \nabla_x \log p_t (x_t^{(k}\!) \!\right]$ \textcolor{gray}{\Comment{Estimate $\partial_t \log Z_t$ using Stein control variate}}
        \State \!\!\!$d \sim \mathcal{U}(0, 1)$ \textcolor{gray}{\Comment{Sample shortcut distance}}
        \State \!\!\!$\alpha \sim \mathcal{U}(0,1)$ \textcolor{gray}{\Comment{Sample random fraction for interval split}}
        \State \!\!\!$x_{t+\alpha d} \leftarrow x_t + s_t(x_t, \alpha d; \theta_{\mathrm{sg}}) \alpha d$ \textcolor{gray}{\Comment{Compute intermediate state using $s_t$ with stop-gradient}}
        \State \!\!\!$s_{\text{target}} \leftarrow \alpha s_t(x_t, \alpha d; \theta_{\mathrm{sg}}) + (1-\alpha) s_{t+\alpha d}(x_{t+\alpha d}, (1-\alpha)d; \theta_{\mathrm{sg}})$ \textcolor{gray}{\Comment{Compute shortcut target with stop-gradient}}
        \State \!\!\!$\mathcal{L}(\theta) \leftarrow (\xi_t - c_t)^2 + \lambda \lVert s_t(x_t, d;\theta) - s_{\text{target}} \rVert_2^2$  \textcolor{gray}{\Comment{Compute training loss (residual + consistency)}}
        \State \!\!\!$\theta \leftarrow \theta - \eta \nabla_\theta \mathcal{L}(\theta)$  \textcolor{gray}{\Comment{Perform gradient update}}
    \EndFor
\end{algorithmic}
\end{algorithm}

\begin{algorithm}[!t]
\caption{Sampling Procedure of \ours}
\label{alg:nfs_sampling}
\textbf{Input}: trained shortcut model $s_t (\cdot,\cdot;\theta)$, initial density $p_0$, \# steps $M$\\
\textbf{Output}: generated samples $x$
\begin{algorithmic}[1]
\State $x_0 \sim p_0, \quad d \leftarrow \frac{1}{M}, \quad t \leftarrow 0$ \textcolor{gray}{\Comment{Initialisation}}
\For{$m = 0, \dots, M-1$}
    \State $x \leftarrow x + s_t (x,d;\theta) d$
    \State $t \leftarrow t + d$
\EndFor
\end{algorithmic}
\end{algorithm}

\section{Training and Sampling Algorithms}
\label{sec:appendix-train}
The training and sampling algorithms are detailed in \cref{alg:nfs_training_updated,alg:nfs_sampling}, respectively. For clarity, \cref{alg:nfs_training_updated} illustrates a single training epoch.
In particular, we parameterise the model with a single neural network $s_t (x, d;\theta)$ that takes the sample $x$, time step $t$, and shortcut distance $d$ as input to anticipate the shortcut direction. This design enables \ours to model in continuous time, unlike the baseline LFIS \citep{tian2024liouville}, which trains separate neural networks for each time step --- a memory-intensive and inefficient approach.
To train the model, we define time spans $\{t_m\}_{m=0}^M$ that are evenly distributed over $[0,1]$, satisfying $0=t_0<\dots<t_M=1$ and $2t_m  = t_{m+1} + t_{m-1}, \forall m$.
In each epoch, we uniformly sample time steps from the time spans $\tilde{t}_m \sim \mathcal{U}([t_{m-1}, t_m])$ and ensure that $\tilde{t}_0 = 0$.\footnote{More advanced schedule beyond uniform sampling remain important future works.}
Subsequently, \cref{alg:SMC} is invoked to generate training samples, as detailed in \cref{sec:appendix-data-aug}. Notably, any $q$ distribution can be used to generate training samples. The choice of the proposed velocity-driven SMC is motivated by two key reasons:  
\begin{itemize}
    \item[i)]At the beginning of training, the generated samples are far from the mode, encouraging the model to focus on mode-covering. As training progresses, the generated samples become more accurate, gradually shifting toward mode-seeking, ultimately balancing exploration and exploitation for improved learning efficiency.
    \item[ii)] Improved $\partial_t \log Z_t$ estimation efficiency. SMC returns the samples and importance weights for each time step simultaneously, streamlining the estimation of $\partial_t \log Z_t$.
\end{itemize}
After generating training samples, we compute the loss in \cref{eq:loss-nfs2} and update the model using gradient descent, as outlined in steps 4–9 of \cref{alg:nfs_training_updated}.

\section{Selection of $q_t(x)$ and Data Augmentation}
\label{sec:appendix-data-aug}

Selecting a $q_t(x)$ distribution is equivalent to generating a list of points at which the loss is calculated and from which backpropagation is conducted. Therefore, this selection is an important aspect of the training dynamics and has been an area of ongoing research in PINN-based learning procedures \citep{Wu_2023_rar_d}. Empirically, we conducted preliminary experiments with several approaches for generating training data points: (i) sampling uniformly in the $\mathbb{R}^d$ space; (ii) using an approximate $p_t(x)$ distribution obtained from SMC procedures; and (iii) combining data points from SMC procedures with random perturbations. For symmetry-aware systems, such as DW-4 and LJ-13, approach (iii) was further augmented with random rotations and translations along a randomly sampled axis.

We found that this augmented approach (iii) contributed to the observed good performance. Our conjecture is that uniform random sampling (i) is too inefficient, while using points solely from SMC (ii) lacks diversity because only modal regions are predominantly covered in the loss calculation, which may not be ideal.

Additionally, we employ a slight modification of the RAR-D resampling procedure \citep{Wu_2023_rar_d}. The basic idea is that after calculating the residual for a batch of points, we perform an additional learning step using points resampled from the same batch, where the empirical resampling distribution is determined by the residuals. That is, we focus our computational efforts more heavily on locations where our model exhibits higher residuals.

\section{Experimental Details} \label{sec:sppendix_exp_details}
\subsection{Datasets}
\textbf{Gaussian Mixture Model (GMM-40).} We use a 40 Gaussian mixture density in 2 dimensions as proposed by \cite{midgley2022flow}. This density consists of a mixture of 40 evenly weighted Gaussians with identical covariances
\[
\Sigma = 
\begin{bmatrix}
40 & 0 \\
0 & 40
\end{bmatrix}
\]
and \( \mu_i \) are uniformly distributed over the \([-40, 40]\) box, i.e., \( \mu_i \sim U(-40, 40)^2 \).
\[
p(x) = \frac{1}{40} \sum_{i=1}^{40} \mathcal{N}(x; \mu_i, \Sigma)
\]
\textbf{Many Well 32 (MW-32).} We use a 32-dimensional Many Well density, as proposed by \cite{midgley2022flow}. This density consists of a mixture of \( n_{\text{wells}} = 16 \) independent double-well potentials:
\[
E(x) = \sum_{i=1}^{n_{\text{wells}}} E_{\text{DW}}(x_i)
\]
where each \( x_i \) corresponds to a pair of variables in a 2-dimensional space. The unnormalized log density for a single 2D Double Well is:
\[
\log p_{\text{DW}}(x_1, x_2) = -x_1^4 + 6x_1^2 + \frac{1}{2}x_1 - \frac{1}{2}x_2^2
\]
Here, the wells are symmetrically distributed across a grid in the 32-dimensional space, where each pair of dimensions corresponds to a well, and \( \mu_i \) is uniformly distributed over the space. The total log probability is proportional to the sum of energies from all wells:
$\log p(x) \propto E(x) = \sum_{i=1}^{n_{\text{wells}}} E_{\text{DW}}(x_i)$.

\textbf{Double Well 4.} The energy function for the DW-4 dataset was introduced in \cite{pmlr-v119-kohler20a} and corresponds to a system of 4 particles in a 2-dimensional space. The system is governed by a double-well potential based on the pairwise distances of the particles. For a system of 4 particles, \( x = \{x_1, \dots, x_4\} \), the energy is given by:
\[
E(x) = \frac{1}{2\tau} \sum_{i,j} \left[ a(d_{ij} - d_0) + b(d_{ij} - d_0)^2 + c(d_{ij} - d_0)^4 \right]
\]
where \( d_{ij} = \| x_i - x_j \|_2 \) is the Euclidean distance between particles \( i \) and \( j \). Following previous work, we set \( a = 0 \), \( b = -4 \), \( c = 0.9 \), and the temperature parameter \( \tau = 1 \).
To evaluate the efficacy of our samples, we use a validation and test set from the MCMC samples in \cite{klein2024equivariant} as the ground truth samples following the practice of previous works \citep{AkhoundSadegh2024IteratedDE}.

\textbf{Lennard-Jones 13 (LJ-13).}
This energy function describes a system of $N=13$ particles. The configuration of these particles is denoted by $x = \{x_1, \dots, x_{13}\}$, where each $x_k \in \mathbb{R}^{d_s}$ represents the coordinates of the $k$-th particle in $d_s$-dimensional space ($d_s$ being the number of spatial dimensions per particle). The total potential energy $E(x)$ of the system, from which the log probability $\log p(x) \propto -E(x)$ is derived, comprises pairwise Lennard-Jones interactions \citep{AkhoundSadegh2024IteratedDE} and a harmonic confining term, following the practice of \citep{klein2024equivariant, AkhoundSadegh2024IteratedDE}.

The interaction potential $V(d)$ between any two particles $i$ and $j$ separated by a Euclidean distance $d = \|x_i - x_j\|_2$ is given by the Lennard-Jones potential:

$$V(d) = \epsilon \left[ \left(\frac{\sigma}{d}\right)^{12} - 2 \left(\frac{\sigma}{d}\right)^{6} \right]$$

Here, $\epsilon$ is the depth of the potential well, and $\sigma$ is the finite distance at which the inter-particle potential reaches its minimum ($-\epsilon$). Following previous works, both $\epsilon$ and $\sigma$ are set to 1. It is important to note that a smoothing strategy was employed to prevent the energy from becoming excessively large at short inter-particle distances during training only. At inference time, we employed the vanilla Lennard-Jones energy without smoothing to ensure fair comparison across benchmark methods. Specifically, for distances $d < r_{\text{min}}$ (where $r_{\text{min}} = 0.8$), we applied a quadratic smoothing interpolation. This approach mitigates the characteristic explosion of energy at short distances observed in this class of energy functions. Similar practices are found in existing works; for instance, \citep{ouyang2024bnem} and \citep{AkhoundSadegh2024IteratedDE} also implement smoothing or apply a hard cut-off to the energy score to keep it within a defined bound.

The total Lennard-Jones energy for the system of $N=13$ particles is the sum of all pairwise potentials:
\[ E_{\text{LJ}}(x) = \sum_{1 \le i < j \le 13} V(\|x_i - x_j\|_2) \]

In addition to the pairwise LJ interactions, the particles are subject to a harmonic confining potential, $E_{\text{harmonic}}(x)$:
\[ E_{\text{harmonic}}(x) = \frac{1}{2} \sum_{k=1}^{13} \|x_k - x_{\text{COM}}\|_2^2 \]
where $x_{\text{COM}} = \frac{1}{13} \sum_{k=1}^{13} x_k$ is the center of mass of the system.

The total potential energy of the system is thus:
\[ E(x) = E_{\text{LJ}}(x) + E_{\text{harmonic}}(x) \]
\subsection{Metrics}

We evaluate the methods using the Wasserstein-2 ($\mathcal{W}_2$) distance and the Total Variation (TV), both computed with 1,000 ground truth and generated samples. To compute TV, the support is divided into 200 bins along each dimension, and the empirical distribution over 1,000 samples is used.
For GMM-40, we report the metrics$\mathcal{W}_2$ on energy space $\mathcal{E}$ and TV on the data space $\mathcal{X}$. 
For MW-32, we find that $\mathcal{E}-\mathcal{W}_2$ is unstable and thus report $\mathcal{E}$-TV instead. Given the 32-dimensional nature of MW-32, computing TV is impractical; therefore, we report the $\mathcal{W}_2$ metric on the data space rather than TV.
For the $n$-body system DW-4, we do not report any metrics in the data space due to its equivariance. Instead, we assess performance using metrics in the energy space ($\mathcal{E}$-$\mathcal{W}_2$ and $\mathcal{E}$-TV) and the interatomic coordinates $\mathcal{D}$ ($\mathcal{D}$-TV) to account for invariance.

\subsection{Training Details}
\textbf{Gaussian Mixture Model (GMM-40).}
We evaluate our method on a 40-mode Gaussian mixture in $\mathbb{R}^2$ to test multi-modal exploration. The velocity field is parameterized by a 4-layer MLP ($256$-dimensional hidden layers, Layer Norm, and Swish activations) trained using velocity-guided sequential Monte Carlo with Hamiltonian kernels (3 HMC steps, 5 leapfrog steps, step size $\eta=0.1$). Initial particles are sampled from $\mathcal{N}(\mathbf{0}, 25\mathbf{I})$, optimized via AdamW ($\beta_1=0.9$, $\beta_2=0.999$) with learning rate $1e^{-4}$, weight decay $1e^{-6}$, and gradient clipping at $\ell_2$-norm $1.0$. Training uses $128$-particle batches for $10^4$ epochs (500 steps/epoch) with early stopping, converging significantly before the epoch limit.

\textbf{Many Well 32 (MW-32).}
We assess scalability in high dimensions using a $2^{32}$-mode Many Well potential on $\mathbb{R}^{32}$, exhibiting exponential mode growth with dimension. The velocity field employs a 4-layer MLP ($128$-dimensional hidden layers, Layer Norm, and Swish activations) trained via velocity-guided SMC with enhanced Hamiltonian kernels (6 HMC steps, 10 leapfrog steps, step size $\eta=0.1$). Initialized from $\mathcal{N}(\mathbf{0}, 2\mathbf{I})$, optimization uses AdamW ($\beta_1=0.9$, $\beta_2=0.999$) with learning rate $1e^{-4}$, weight decay $1e^{-6}$, and $\ell_2$-gradient clipping at $1.0$. Training maintains $128$-particle batches across $10^4$ epochs (500 steps/epoch) with early stopping.

\textbf{Double Well 4 (DW-4).}
We assess performance in a particle-like system using a DW-4 potential on Euclidean space. The velocity field employs a 4-layer MLP ($512$-dimensional hidden layers, Layer Norm, and Swish activations) trained via velocity-guided SMC with enhanced Hamiltonian kernels (10 HMC steps, 10 leapfrog steps, step size $\eta=0.01$). Initialized from $\mathcal{N}(\mathbf{0}, 2\mathbf{I})$, optimization uses AdamW ($\beta_1=0.9$, $\beta_2=0.999$) with learning rate $10^{-4}$, weight decay $10^{-6}$, and $\ell_2$-gradient clipping at $1.0$. Training maintains $128$-particle batches across $10^4$ epochs (500 steps/epoch) with early stopping.

\textbf{Lennard-Jones 13 (LJ-13).}
We assess performance in a particle-like system using a Lennard-Jones 13 (LJ-13) potential on Euclidean space. The velocity field employs a 6-layer DiT architecture ($128$-dimensional hidden layers, Layer Norm, and Swish activations) trained via velocity-guided SMC with enhanced Hamiltonian kernels (10 HMC steps, 10 leapfrog steps, step size $\eta=0.01$). Initialized from $\mathcal{N}(\mathbf{0}, 2\mathbf{I})$, optimization uses AdamW ($\beta_1=0.9$, $\beta_2=0.999$) with learning rate $10^{-4}$, weight decay $10^{-6}$, and $\ell_2$-gradient clipping at $1.0$. Training maintains $128$-particle batches across $10^4$ epochs (500 steps/epoch) with early stopping.

\subsection{Baseline Methods Details}
\textbf{Flow Annealed Importance Sampling Bootstrap (FAB).} We use the official PyTorch implementations for GMM-40 and MW-32 (\url{(https://github.com/lollcat/fab-torch}), and the JAX implementations for DW-4 and LJ-13 (\url{https://github.com/lollcat/se3-augmented-coupling-flows}). All hyperparameters follow their default settings.

\textbf{Iterated Denoising Energy Matching (iDEM).} We use the official PyTorch implementations for all experiments (\url{https://github.com/jarridrb/DEM}). Note that iDEM did not evaluate on MW-32 in their original paper. For fair comparisons, we use the same 4-layer MLP to parameterise the score network and retain most hyperparameters from GMM-40 unless modifications are necessary.

\textbf{Liouville Flow Importance Sampling (LFIS).} We use the official PyTorch implementations (\url{https://github.com/lanl/LFIS}). 
Following FAB, we use MLPs to parameterise the velocity for synthetic datasets and an EGNNs for n-body systems. We adopt the same initial distribution as \ours and keep most hyperparameters from the repository unchanged, unless adjustments are required.

\textbf{Learning Interpolations between Boltzmann Densities.} For this method, we employ the exact same network architectures as our models for all tasks. As this is also a PINN-based method with an additional learnable interpolation, we maintain the exact same training hyperparameters for all tasks. That is, the only difference is the introduction of additional network capacity on top of the existing one to learn the path interpolation and $\partial_t \log Z_t$, as specified in the original paper \citep{mate2023learning}. This ensures a fair comparison.

\textbf{PINN.} For this method, we trained a velocity field using the PINN objective specified in \citep{albergo2025netsnonequilibriumtransportsampler}. While the method they proposed is stochastic, for a fairer comparison with our approaches, we utilise the fact that the learned velocity field can be used to deterministically generate samples by integrating the Ordinary Differential Equation (ODE), similar to our methods. Therefore, we named it the PINN method instead of NETS \citep{albergo2025netsnonequilibriumtransportsampler}, as originally used in the paper, to note this difference in benchmarking methodology. All network and training hyperparameters are identical to our setup, with the exception of the extra network capacity used to learn $\partial_t \log Z_t$.

\subsection{Additional Training Details}
The training and inference pipelines for this project were implemented leveraging the JAX library \citep{jax2018github} for high-performance numerical computation and automatic differentiation. Neural network models were constructed and managed using Equinox \citep{kidger2021equinox}, a library designed for building and training neural networks in JAX. Blackjax \citep{cabezas2024blackjax} were used to implement various MCMC-related methods. For the numerical solution of differential equations, which is crucial for our methods, we employed Diffrax \citep{kidger2021on}. In addition to these core libraries, specific software versions and open-source code utilised throughout this work are detailed in the Jupyter notebook provided within the supplementary information.

\newpage
\section{Additional Experimental Results} \label{sec:appendix_add_exp_results}

\begin{figure}[!t]
    \centering
    \begin{minipage}[t]{0.195\linewidth}
        \centering
        \includegraphics[width=1.\linewidth]{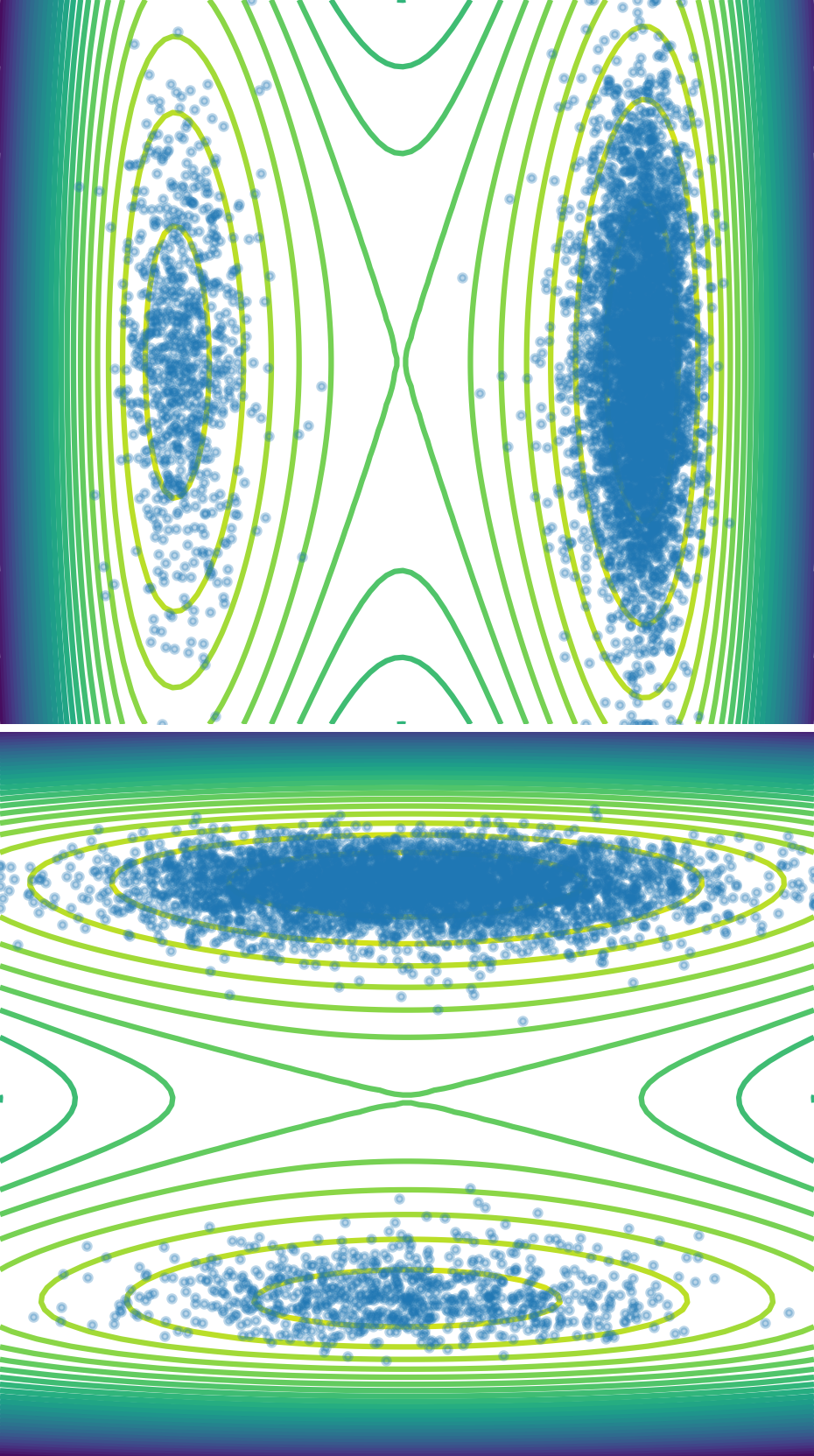}
        Ground Truth
    \end{minipage}
    \begin{minipage}[t]{0.195\linewidth}
        \centering
        \includegraphics[width=1.\linewidth]{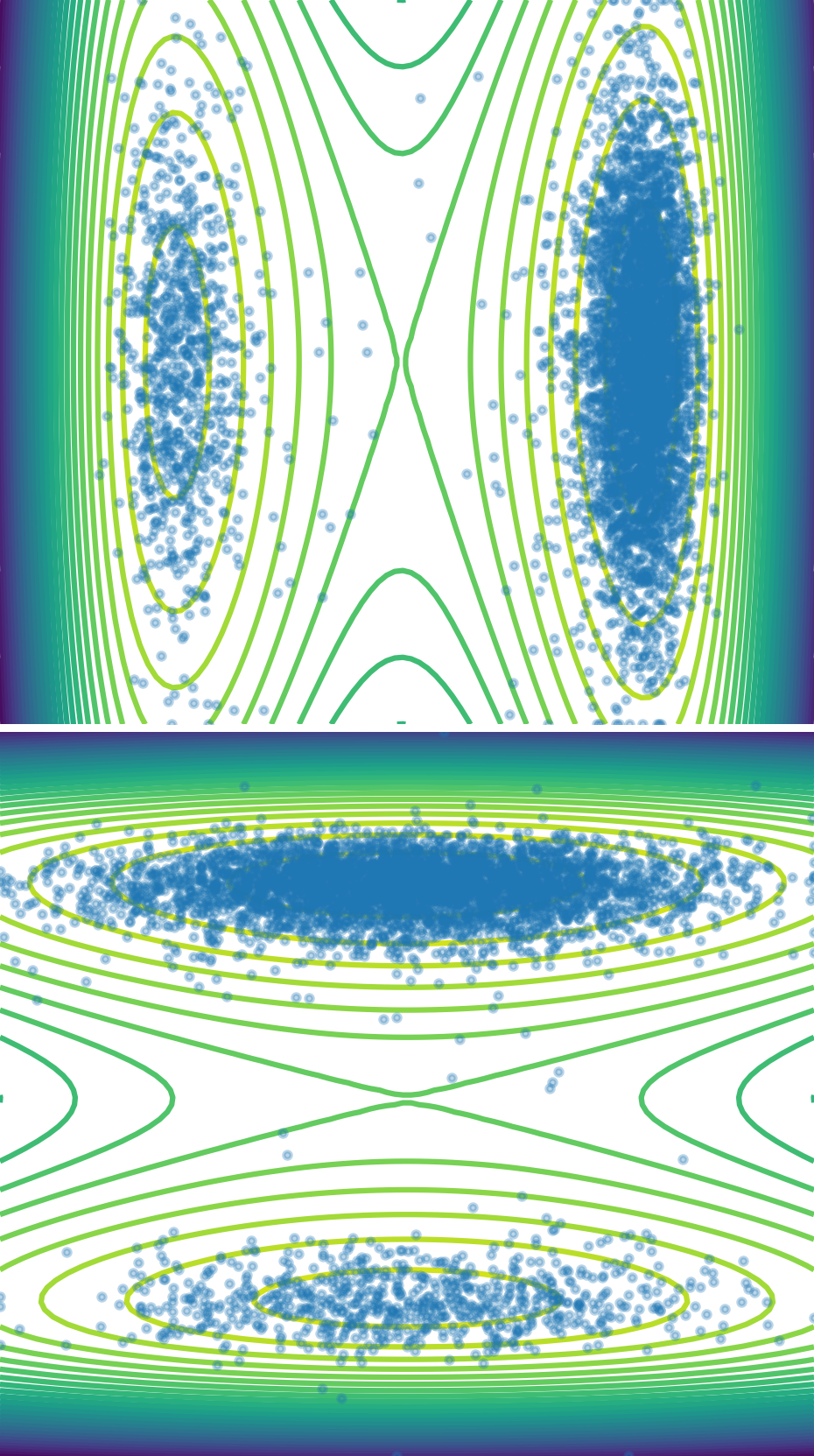}
        FAB
    \end{minipage}
    \begin{minipage}[t]{0.195\linewidth}
        \centering
        \includegraphics[width=1.\linewidth]{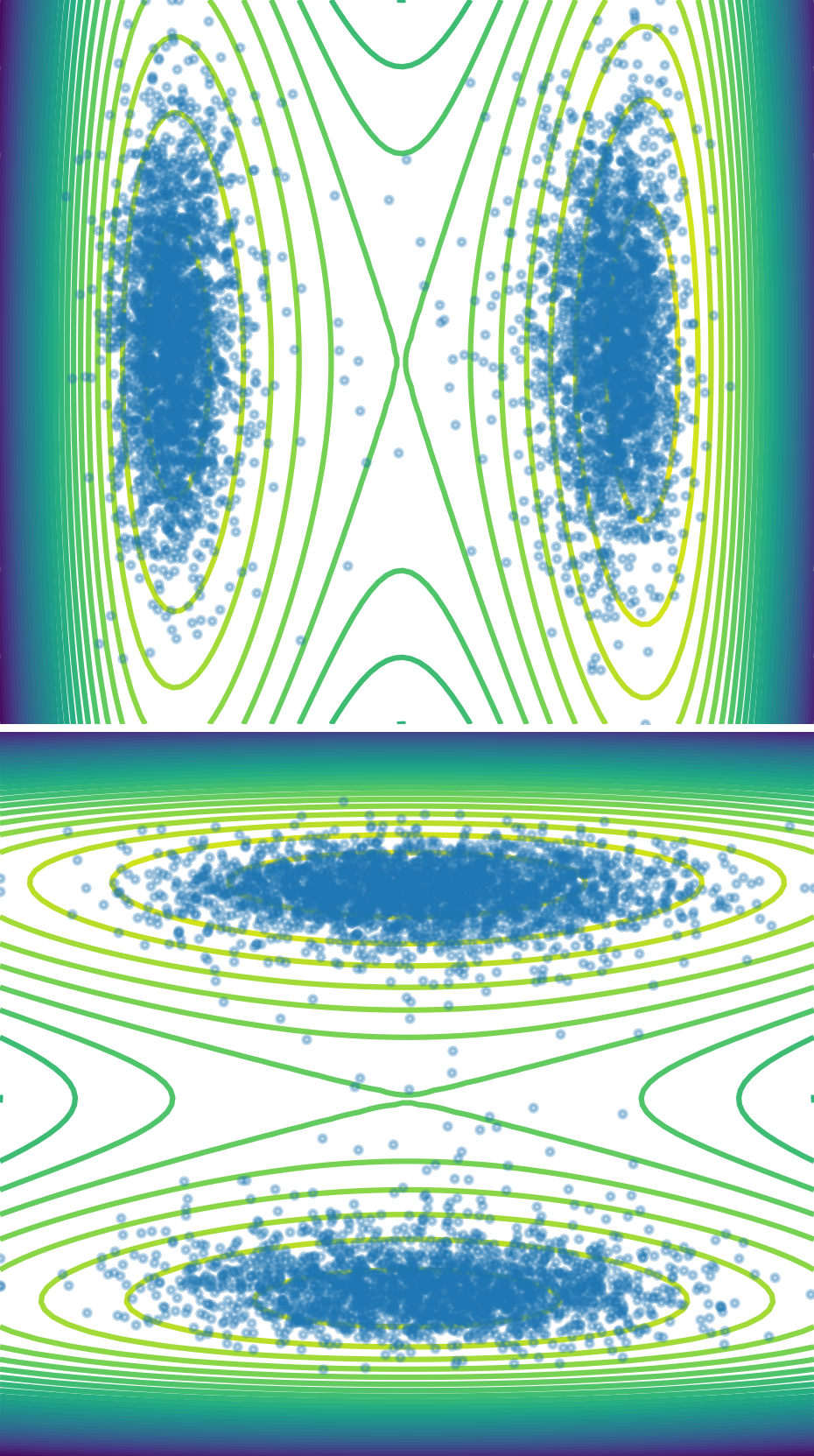}
        iDEM
    \end{minipage}
    \begin{minipage}[t]{0.195\linewidth}
        \centering
        \includegraphics[width=1.\linewidth]{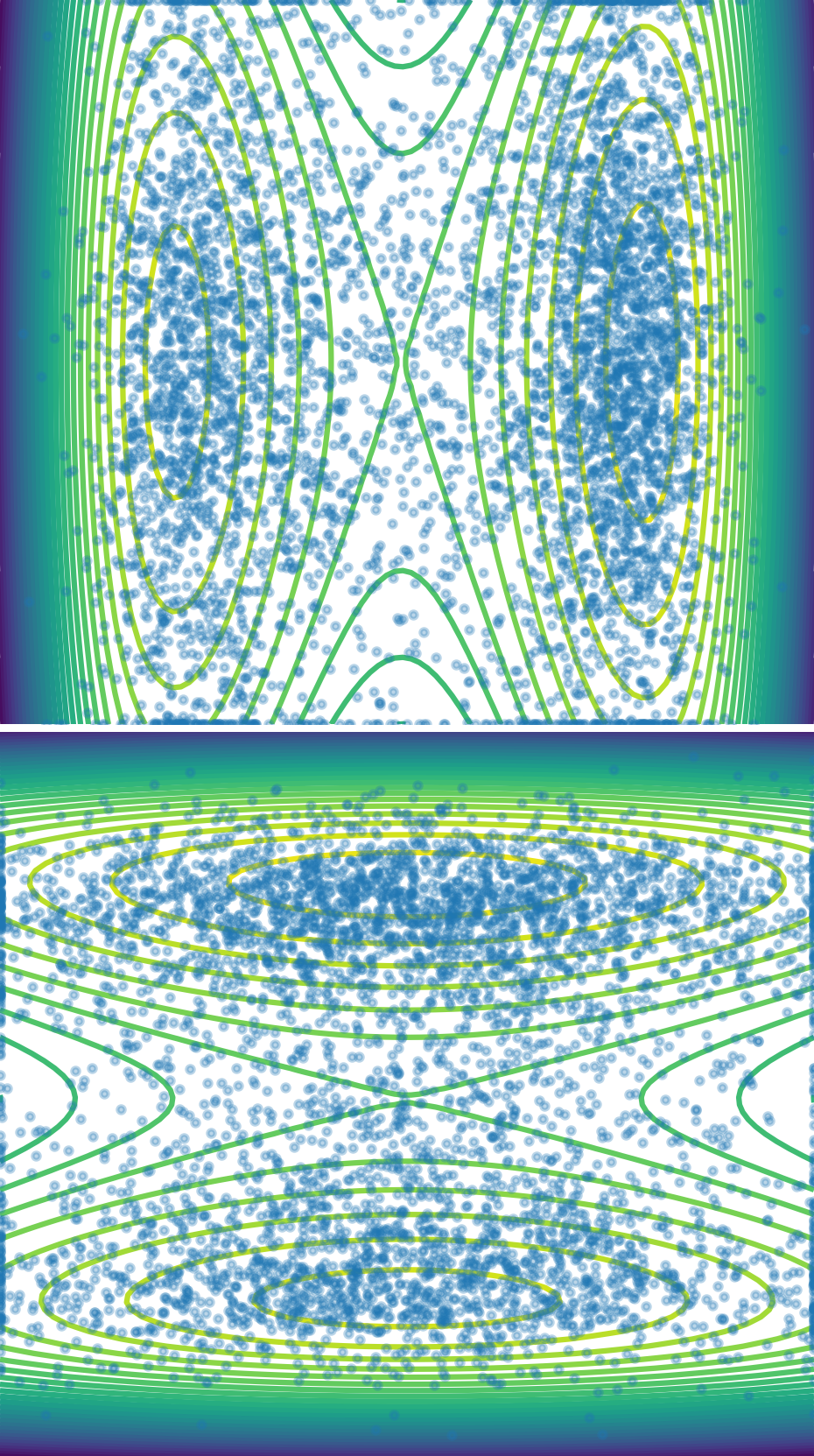}
        LFIS
    \end{minipage}
    \begin{minipage}[t]{0.195\linewidth}
        \centering
        \includegraphics[width=1.\linewidth]{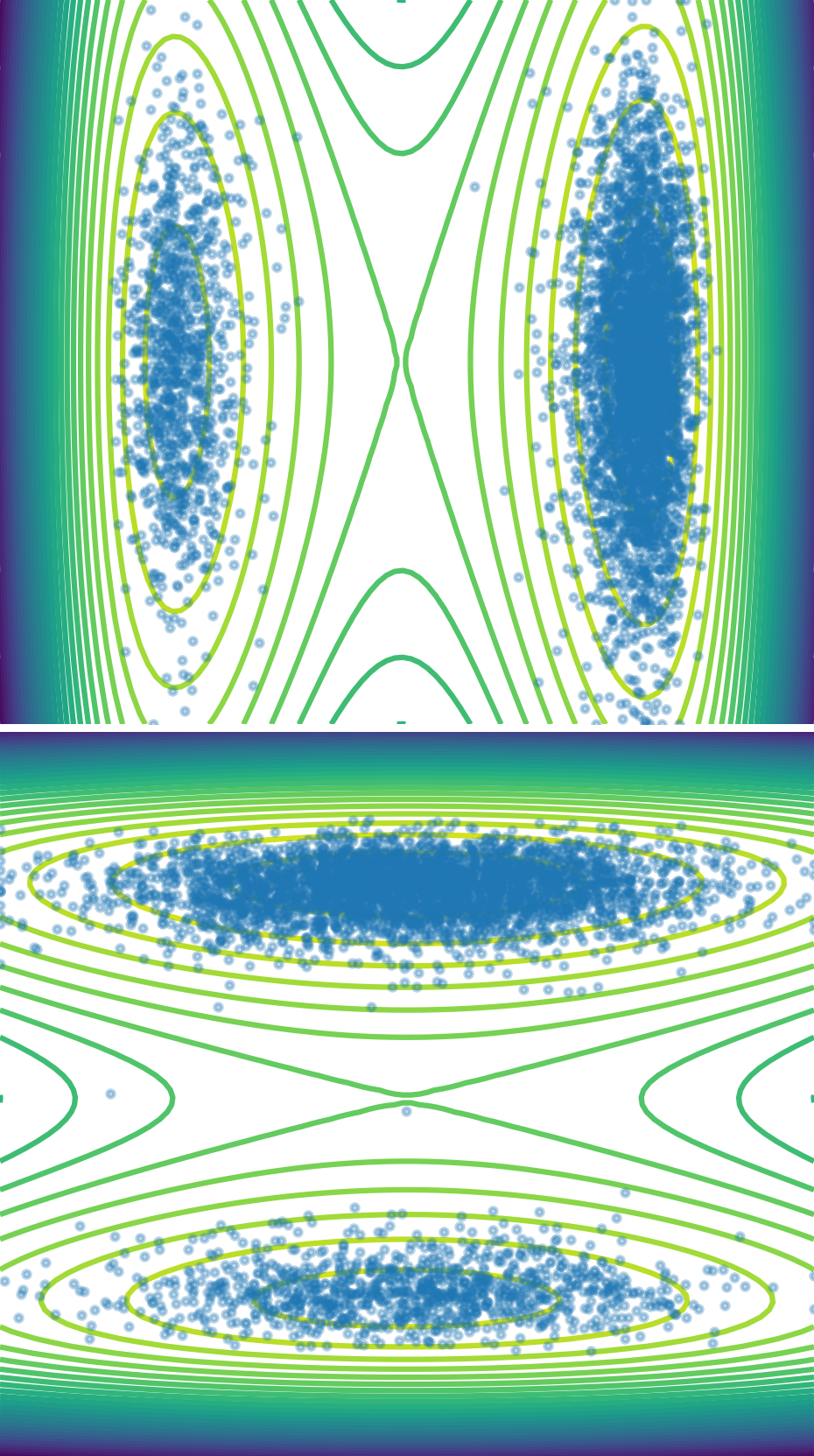}
        NFS (ours)
    \end{minipage}
    \caption{Samples on MW-32. First row: 2D marginal samples from the 1st and 4th dimensions; Second row: 2D marginal samples from the 2rd and 3rd dimensions.}
    \label{fig:mw32-visualised-2plots}
\end{figure}

\subsection{Visualisation of MW-32}

\begin{wrapfigure}{r}{0.5\linewidth}
    \centering
    \vspace{-4mm}
    \includegraphics[width=.47\textwidth]{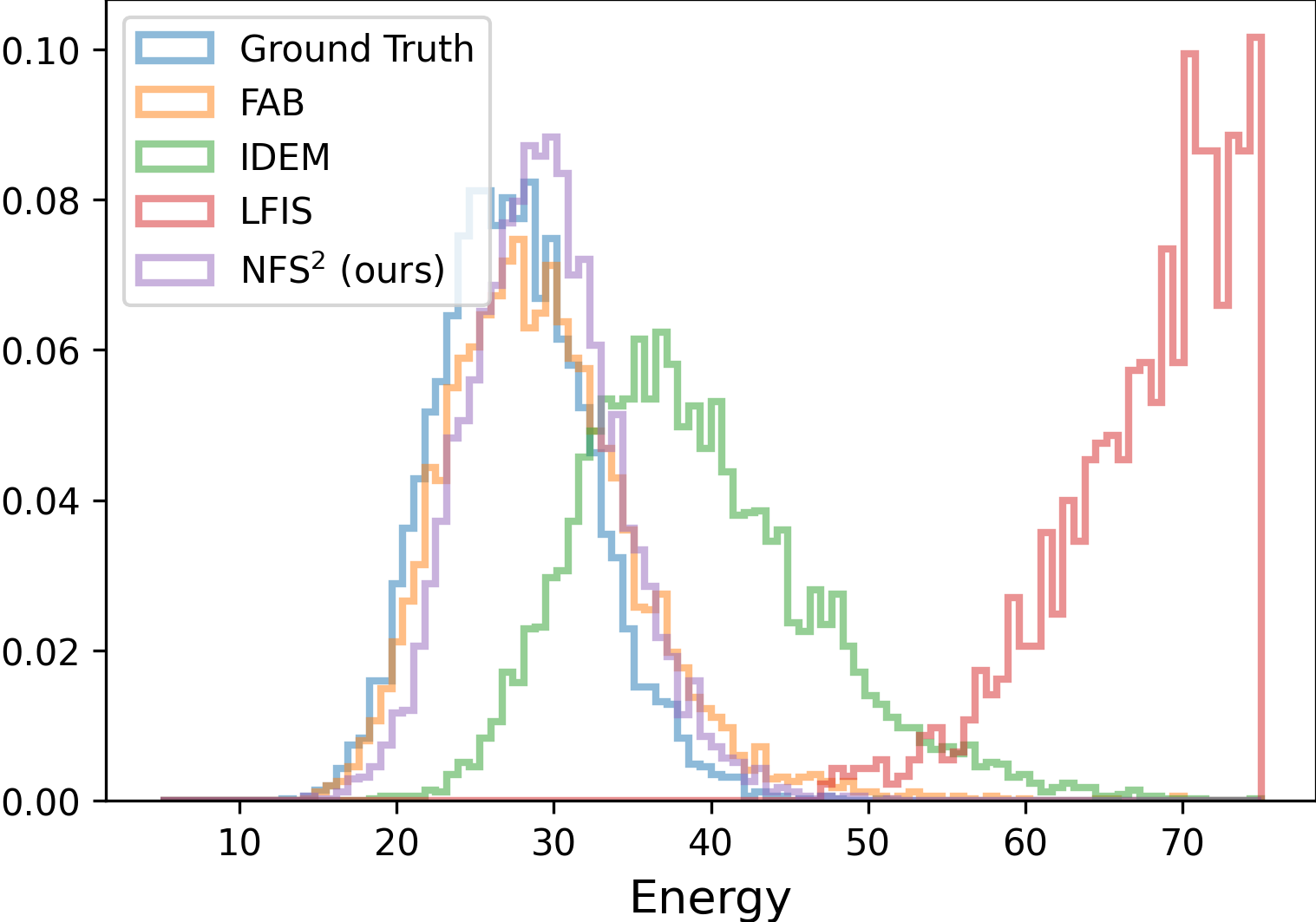}
    \vspace{-2mm}
    \caption{Histogram of sample energy on MW-32.}
    \label{fig:mw32-energy-hist}
    \vspace{-4mm}
\end{wrapfigure}
This section presents additional visualizations of generated samples on MW-32. As shown in \cref{fig:mw32-visualised-2plots}, only FAB and \ours accurately capture the modes. While iDEM locates the modes, it struggles to identify their correct weights. Additionally, LFIS, another flow-based sampler similar to \ours, produces noisy samples, highlighting the high variance issue associated with importance sampling.
We further illustrate the histogram of sample energy on MW-32, where we draw the empirical energy distribution using 5,000 samples.
It shows that \ours achieves competitive performance with FAB, and notably outperforms iDEM and LFIS.

\subsection{Comparisons with Different Sampling Steps}
One key advantage of \ours is its ability to achieve high-quality results with fewer sampling steps. In this section, we compare \ours to the SOTA diffusion-based sampler iDEM and the flow-based sampler LFIS, using varying numbers of sampling steps. As demonstrated in \cref{fig:gmm-diff-steps,fig:mw32-diff-steps}, \ours produces better samples compared to both iDEM and LFIS when using fewer sampling steps.

Moreover, we trained our models using different shortcut regularisation coefficients $\lambda$. As illustrated in \cref{fig:gmm-lambda-shortcut}, the same model trained with stronger shortcut regularisation exhibits stronger performance when the sampling steps are reduced. This is particularly notable for $\lambda = 10$, where even samples generated using only two steps of Euler integration successfully capture all 40 modes of the distribution, whereas the same model trained with much weaker regularisation failed catastrophically. This demonstrates the effectiveness of this approach and paves the way for future research on few-step or even one-step sampling.

\subsection{Analysis of Sampling Steps on LJ13}
To further investigate the impact of the number of sampling steps on the quality of generated samples for complex, multi-particle systems, we conducted an ablation study on the 13-particle Lennard-Jones (LJ13) cluster. We compare configurations of \ours, denoted as $\text{NFS}^2$-$N_s$ where $N_s$ is the number of sampling steps, varying $N_s$ from 4 to 128 (using the same trained model). The results are compared against ground truth samples obtained from extensive simulations.

\begin{figure}[htbp]
    \centering
    \includegraphics[width=\textwidth]{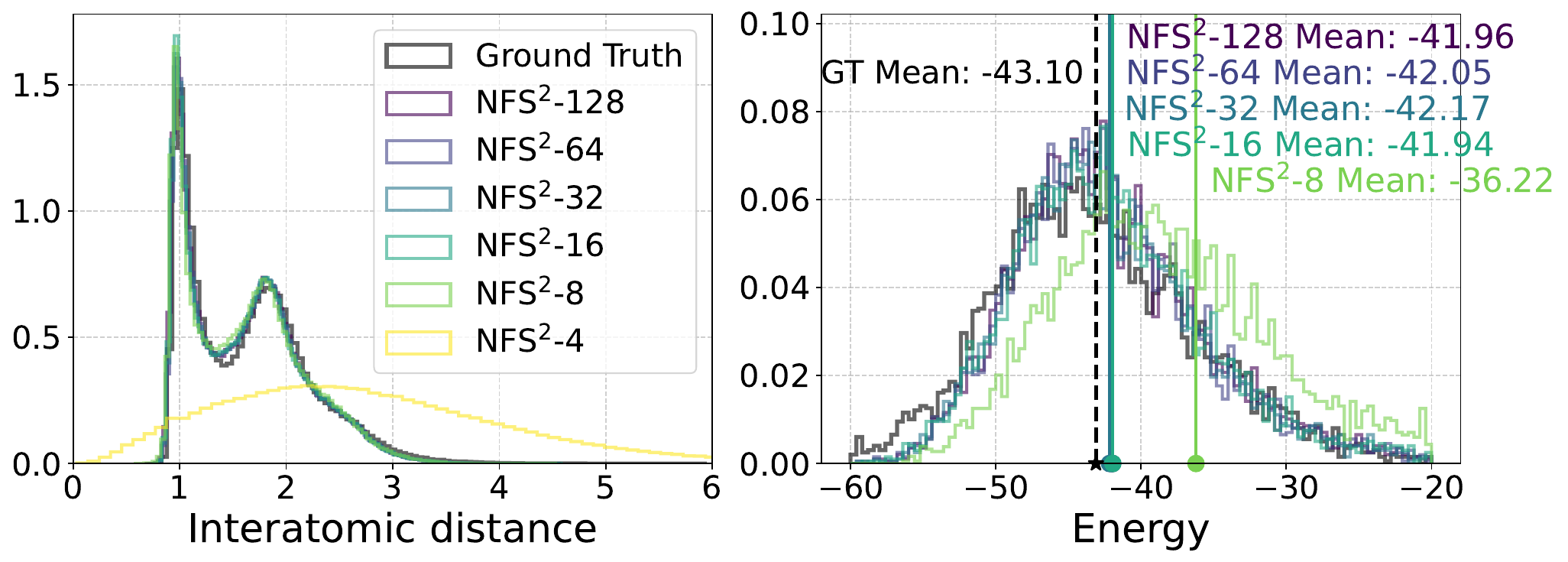} 
    \caption{Comparison of (a) interatomic distance distributions and (b) energy distributions for the LJ13 system. Samples are generated by $\text{NFS}^2$ with varying numbers of sampling steps (128, 64, 32, 16, 8, and 4) and compared against the ground truth distribution (black). Vertical lines in (b) indicate the mean energies for each method, with markers on the x-axis highlighting these means. The corresponding mean energy values are also annotated.}
    \label{fig:lj13-nfs-comparison}
\end{figure}

\Cref{fig:lj13-nfs-comparison} illustrates the distributions of interatomic distances and energies for the LJ13 system, comparing $\text{NFS}^2$ with different numbers of sampling steps against the ground truth.

In panel (a), the interatomic distance distributions show that $\text{NFS}^2$ with 128, 64, 32, and 16 steps all closely replicate the multi-modal structure of the ground truth distribution. The $\text{NFS}^2$-8 exhibits noticeable deviations, with a lower first peak and a generally broader, less defined structure. The performance degrades significantly for $\text{NFS}^2$-4 (yellow), which fails to capture the characteristic peaks and presents a very broad, low-probability distribution shifted towards larger distances.

These results on the LJ13 system demonstrate that while $\text{NFS}^2$ can achieve very good approximations of the target distributions with a moderate number of steps (e.g., 16-128), reducing the step count too drastically (e.g., to 8 or 4 steps) can lead to a significant loss in sampling accuracy for such complex energy landscapes. Nevertheless, the performance with 16 or more steps highlights the sampler's efficiency.

\section{Limitations and Future Work} \label{sec:appendix-limit-future-work}

A key challenge of flow-based samplers is the computation of the divergence (see \cref{eq:nfs_loss}), which becomes prohibitive in high-dimensional settings. While the Hutchinson estimator \citep{hutchinson1989stochastic} can be used in practice, it introduces both variance and bias. Alternatively, more advanced architectures can be employed where the divergence is computed analytically \citep{gerdes2023learning}. By adopting such architectures, we expect our approach to be scalable to more complex applications, such as molecular simulation \citep{frenkel2023understanding}, Lennard-Jones potential \citep{klein2024equivariant}, and Bayesian inference \citep{neal1993probabilistic}.

Rather than building off of the square error, the objective in \cref{eq:nfs_loss} can also be formulated using the Bregman divergence, which provides a more general framework for measuring discrepancies and can potentially lead to improved optimization properties, such as better convergence and robustness to outliers.
Moreover, while the shortcut model reduces the number of sampling steps required, achieving exact likelihood estimation within this framework remains unclear, presenting a promising direction for future research.

\begin{figure}[!t]
    \centering
    \begin{minipage}[t]{1.\linewidth}
        \centering
        \rotatebox{90}{\makebox[60pt]{iDEM}}
        \includegraphics[width=.97\linewidth]{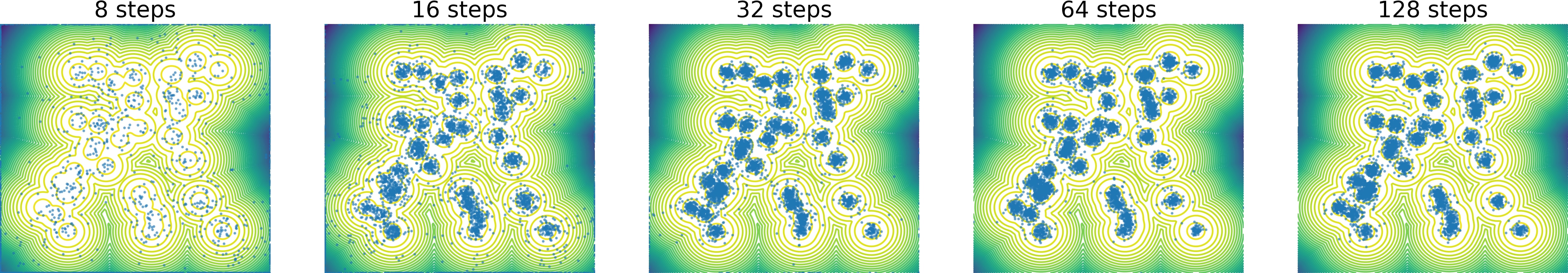}
    \end{minipage}
    \begin{minipage}[t]{1.\linewidth}
        \centering
        \rotatebox{90}{\makebox[60pt]{LFIS}}
        \includegraphics[width=.97\linewidth]{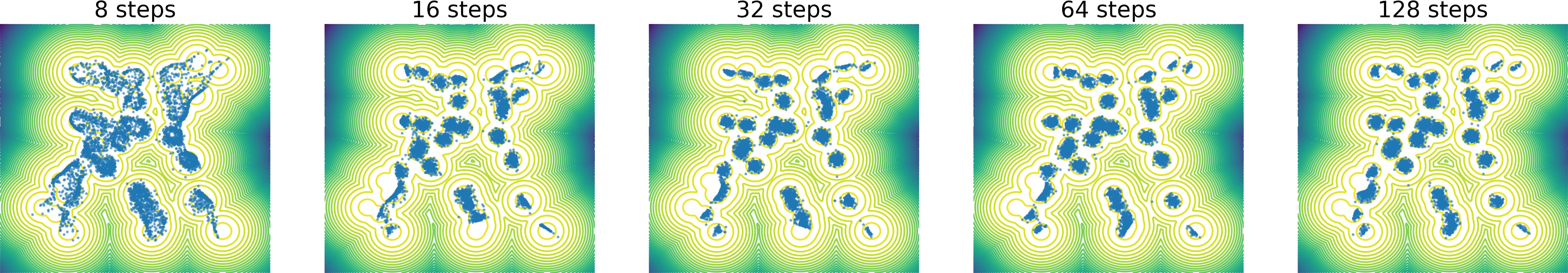}
    \end{minipage}
    \begin{minipage}[t]{1.\linewidth}
        \centering
        \rotatebox{90}{\makebox[60pt]{LIBD}}
        \includegraphics[width=.97\linewidth]{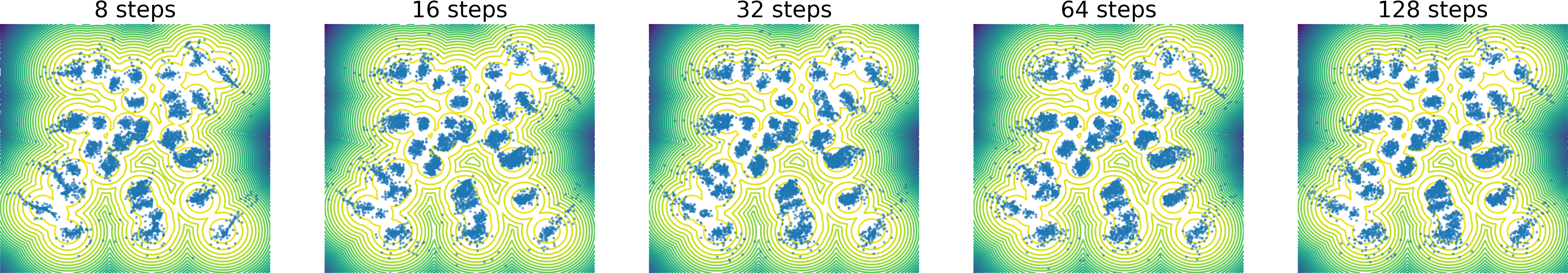}
    \end{minipage}
    \begin{minipage}[t]{1.\linewidth}
        \centering
        \rotatebox{90}{\makebox[60pt]{PINN}}
        \includegraphics[width=.97\linewidth]{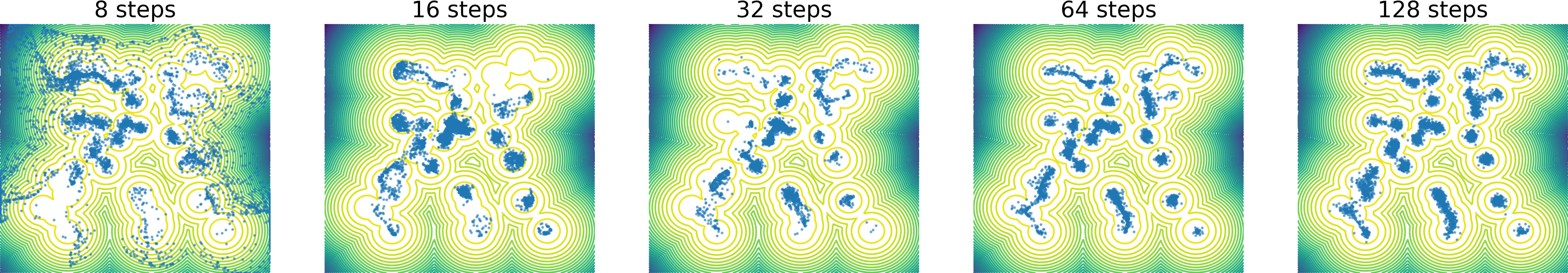}
    \end{minipage}
    \begin{minipage}[t]{1.\linewidth}
        \centering
        \hspace{-1.5mm} \rotatebox{90}{\makebox[70pt]{NFS}}
        \includegraphics[width=.97\linewidth]{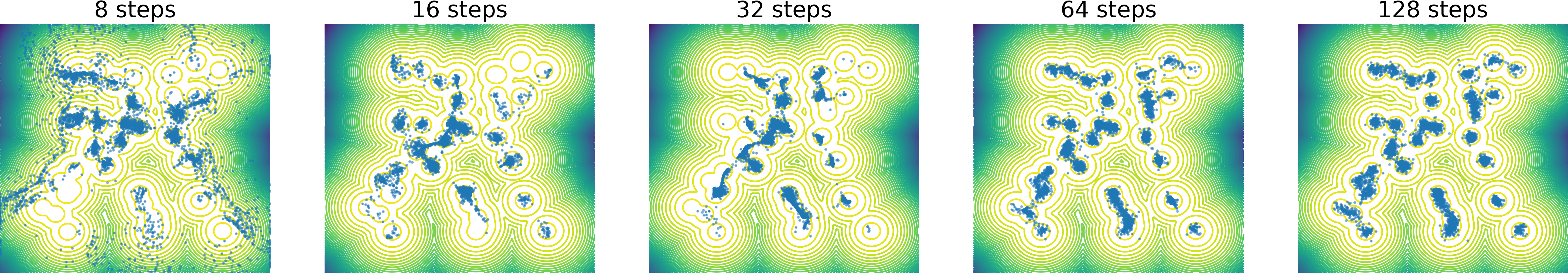}
    \end{minipage}
    \begin{minipage}[t]{1.\linewidth}
        \centering
        \hspace{-1.5mm} \rotatebox{90}{\makebox[70pt]{\ours}}
        \includegraphics[width=.97\linewidth]{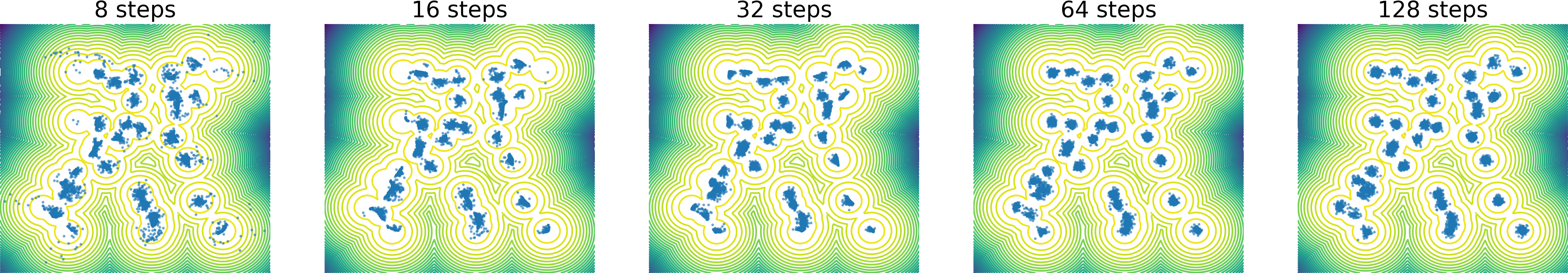}
    \end{minipage}
    \caption{Illustration of the generated samples using different sampling steps on GMM-40.}
    \label{fig:gmm-diff-steps}
\end{figure}

\begin{figure}[!t]
    \centering
    \begin{minipage}[t]{1.\linewidth}
        \centering
        \rotatebox{90}{\makebox[60pt]{iDEM}} 
        \includegraphics[width=.97\linewidth]{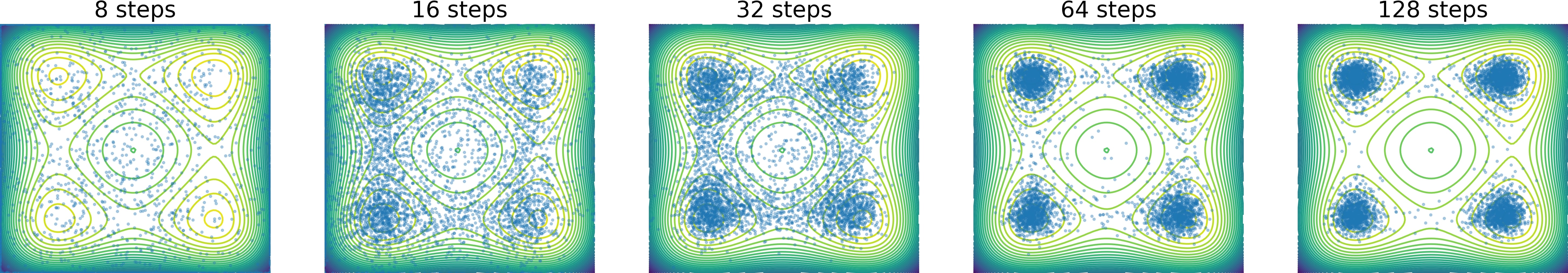}
    \end{minipage}
    \begin{minipage}[t]{1.\linewidth}
        \centering
        \rotatebox{90}{\makebox[60pt]{LFIS}}
        \includegraphics[width=.97\linewidth]{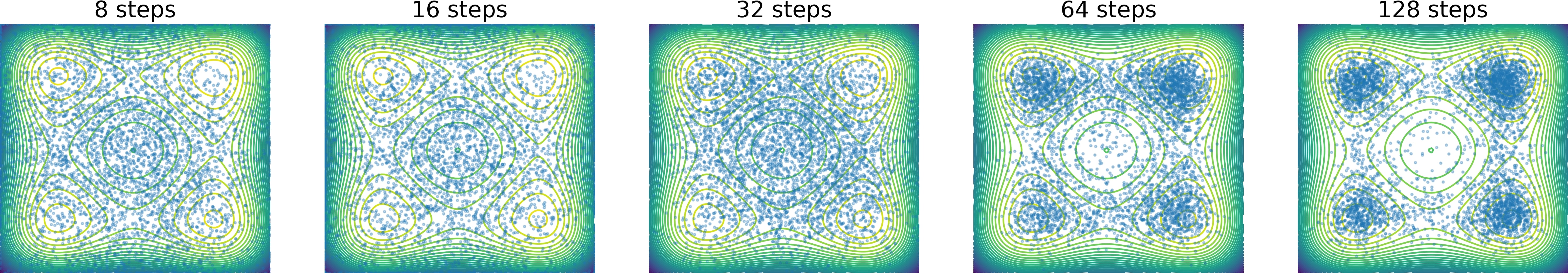}
    \end{minipage}
    \begin{minipage}[t]{1.\linewidth}
        \centering
        \hspace{-1.5mm} \rotatebox{90}{\makebox[70pt]{NFS}}
        \includegraphics[width=.97\linewidth]{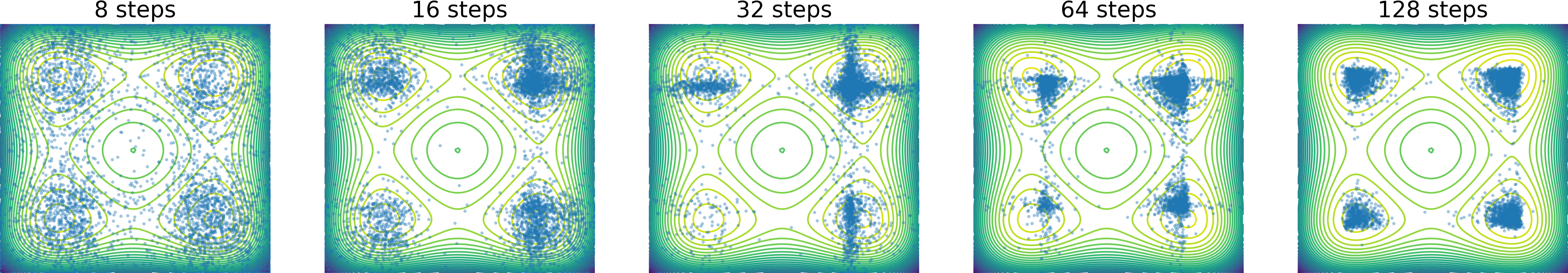}
    \end{minipage}
    \begin{minipage}[t]{1.\linewidth}
        \centering
        \hspace{-1.5mm} \rotatebox{90}{\makebox[70pt]{\ours}}
        \includegraphics[width=.97\linewidth]{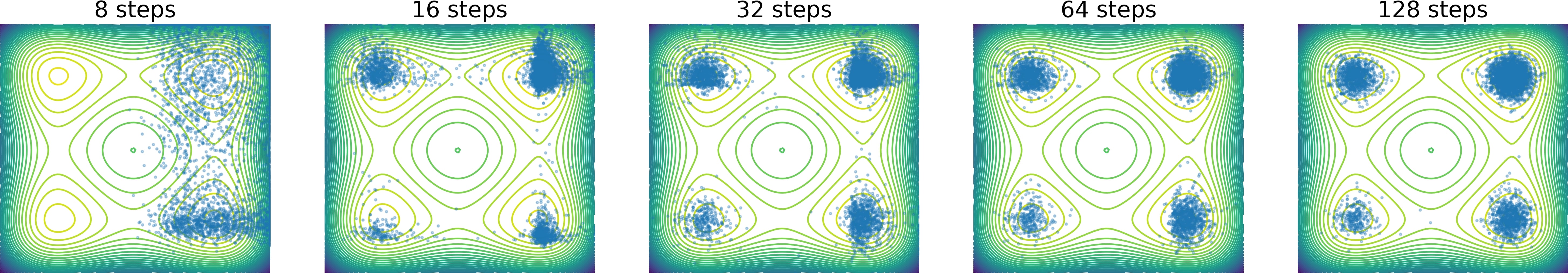}
    \end{minipage}
    \caption{Illustration of the generated samples using different sampling steps on MW-32.}
    \label{fig:mw32-diff-steps}
\end{figure}

\begin{figure}[!t]
    \centering
    \includegraphics[width=1\linewidth]{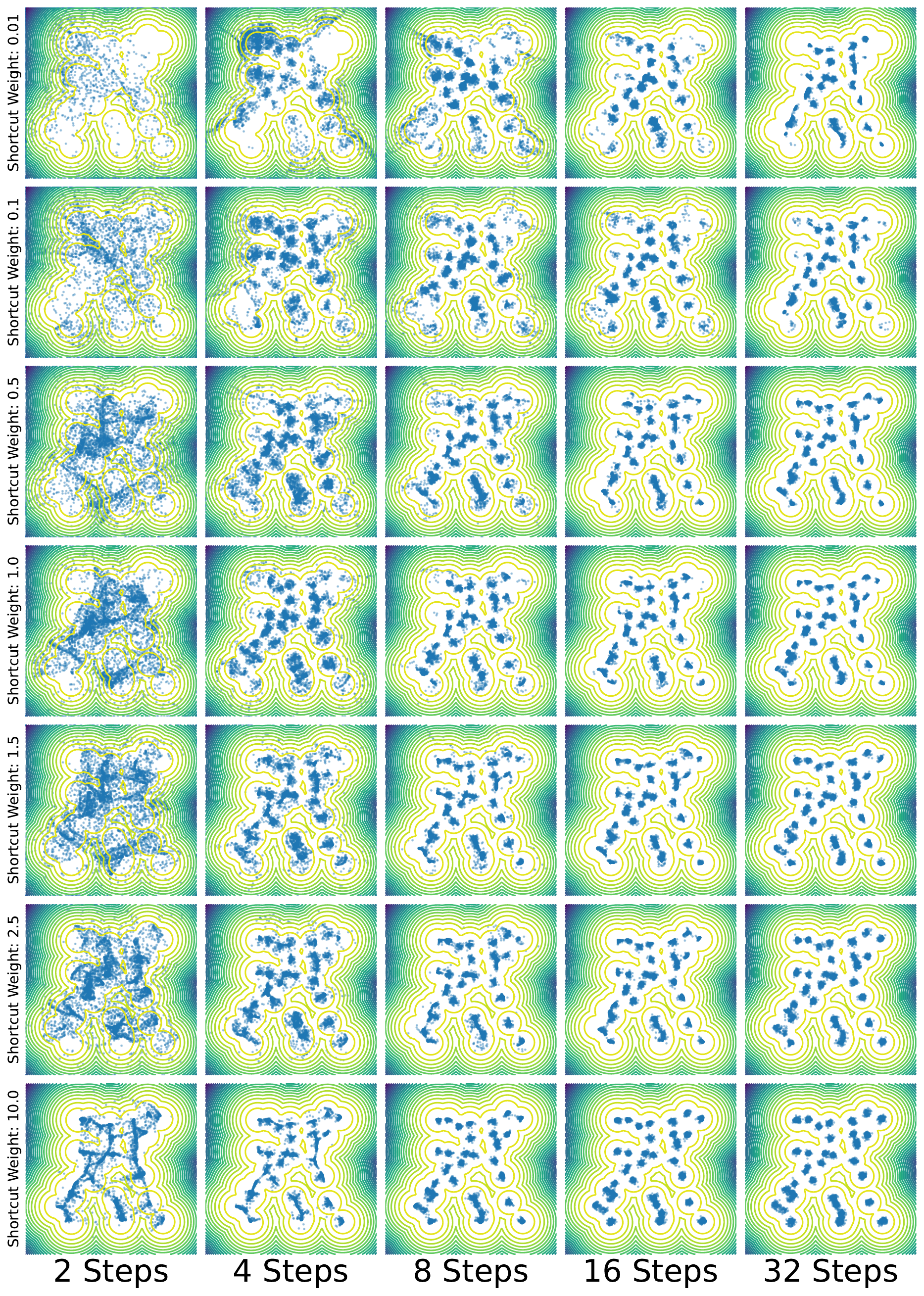}
    \caption{Illustration of the generated samples using models trained with different shortcut regularisation $\lambda$}
    \label{fig:gmm-lambda-shortcut}
\end{figure}

\end{document}